\documentclass[twoside]{article}

\usepackage[preprint]{aistats2027}
\usepackage{notations}
\usepackage[round]{natbib}

\begin{document}

%

%

\twocolumn[

\aistatstitle{Expected Sample Complexity in Multi-Armed Bandits}

\aistatsauthor{ Nadav Sukenik \And Nadav Merlis}

\aistatsaddress{ Technion -- Institute of Technology \And  Technion -- Institute of Technology} ]

\begin{abstract}
Sample complexity is a widely used metric in sequential decision-making problems, 
defined as the number of suboptimal decisions during the interaction between the agent and an environment.  
We study the sample complexity of stochastic multi-armed bandit problems and introduce the \emph{expected sample complexity} performance measure,
analyzing it in a novel framework called approximately correct in expectation (ACE). 
We show that ACE guarantees imply almost sure convergence to the optimal expected reward, 
in contrast to high-probability guarantees found in other frameworks, and also show how to convert ACE guarantees into explicit expected regret bounds. 
We further show that, in contrast to existing measures, deterministic algorithms cannot obtain favorable ACE bounds, 
and analyze stochastic algorithms in two settings: when the allowed suboptimality level $\epsilon$ is known to the algorithm and when it is unknown. 
In the former, we devise an explore-then-$\epsilon$-greedy algorithm, and in the latter, we analyze the expected sample complexity of Thompson sampling.
Finally, we establish nearly matching lower bounds for both settings, showing that the algorithms are tight in $\epsilon$ and proving a performance separation between the two regimes.
\end{abstract}

\section{Introduction}\label{sec:intro}
The stochastic multi-armed bandit (MAB) problem \citep{BanditAlgsTor2017} is an important sequential decision making problem,
in which an agent picks an action from a set of possible arms and observes a stochastic reward for the chosen arm. 
A common measure in both MABs and the more general reinforcement learning  \citep[RL,][]{sutton1998reinforcement} setting is the expected regret,
a measure of the difference between the performance of an optimal oracle and our algorithm when interacting with the environment over a finite horizon \citep{BanditAlgsTor2017}.

Another performance metric that is mostly considered in the RL setting is the sample complexity \citep{kakade2003sample}, which measures how many $\epsilon$-suboptimal actions are played during the interaction for a given suboptimality level $\epsilon$. Algorithms for which the sample complexity is bounded with high probability are often called PAC-MDP \citep{kakade2003sample}. 

This framework has two notable disadvantages: 
(1) algorithms for sample complexity optimization are tailored to a specific suboptimality level, which leads to high regret or even infinite sample complexity for smaller suboptimality levels \citep{dann2018unifyingpacregretuniform}; 
and (2) they are tailored to a specific probability of success, which implies a constant probability that catastrophic failure may occur, in the form of infinite suboptimal rounds. While previous works extended the setting to alleviate the first issue \citep{dann2018unifyingpacregretuniform,liu2024uniformlastiterateguaranteebandits}, the second issue is largely unaddressed.

We tackle both weaknesses by introducing a novel framework for analyzing the sample complexity called \textbf{a}pproximately \textbf{c}orrect in \textbf{e}xpectation (\textbf{ACE}). 
In this framework, we measure the expected sample complexity: the expected number of rounds in which the policy was expected to act $\epsilon$-suboptimally over an infinite horizon of interaction.

We show that algorithms that have uniformly bounded expected sample complexity -- a property we call \textbf{U}niformly-\textbf{ACE} (\textbf{U-ACE}) -- 
have almost-sure convergence to the optimal expected reward. This comes in stark contrast to the fixed-probability guarantees of other sample complexity frameworks. Moreover, we show that explicit expected regret bounds can be derived from U-ACE guarantees.

Next, we show that deterministic algorithms cannot have a finite expected sample complexity on all instances of the Bernoulli bandit class, which leads us to analyze the expected sample complexity of randomized algorithms. We do so in two regimes: when the suboptimality level $\epsilon$ is known and when it is unknown. 
In the latter setting, we analyze Thompson sampling \citep{og_thompson}, a well-known MAB algorithm popular for its  strong empirical performance; for the former setting, we suggest a successive-elimination-$\epsilon$-greedy hybrid algorithm. We complement the expected sample complexity upper bounds with nearly matching lower bounds, showing that the guarantees possible in the two regimes are inherently different. To our knowledge, this is the first such separation result in the sample complexity literature.

\section{Related Work}
The stochastic multi-armed bandits problem
\citep{BanditAlgsTor2017, IntroToBanditsSlivkins2019} 
is one of the most widely studied problems in sequential decision making, with applications in the fields of advertising, recommendation systems, finance and more \citep{MAB_in_RS,bouneffouf2019surveypracticalapplicationsmultiarmed}.  

A common performance measure in the MAB literature is the expected regret, which is the cumulative difference between the reward of the best action (arm) and the agent's action \citep{BanditAlgsTor2017}. Much of the literature focuses on the design of algorithms that minimize the expected regret \citep{auer2002finite,BanditAlgsTor2017}. An example of such an algorithm is Thompson sampling (TS, \cite{og_thompson, kaufmann2012thompson}), a randomized algorithm inspired by the Bayesian statistics framework, which grew in popularity due to its demonstrated strong empirical performance 
\citep{TS_empirical_performance}. 

Another common performance measure is the sample complexity of exploration \citep{kakade2003sample,domingues2020episodicreinforcementlearningfinite}, referred to as sample complexity, which is defined as the number of $\epsilon$-suboptimal interaction rounds of the agent over an infinite horizon.
The sample complexity is usually analyzed in the probably approximately correct (PAC) framework, and the setting is commonly referred to as PAC-MDP. We also refer to this setting as PAC, and emphasize that it should not be conflated with the best arm identification \citep{BAI_MAB} setting, where an algorithm must identify a near-optimal arm with high probability.

Although the sample complexity has been formally defined for randomized policies \citep{domingues2020episodicreinforcementlearningfinite}, most works on the metric have largely focused on deterministic algorithms, with no obvious way to generalize their results to randomized policies.

This constitutes a major literature gap, since many widely used algorithms, such as Thompson Sampling \citep{og_thompson},  proximal policy optimization \citep[PPO,][]{schulman2017proximalpolicyoptimizationalgorithms}, trust region policy optimization \citep[TRPO,][]{schulman2017trustregionpolicyoptimization}, and many more \citep{lehmann2024definitiveguidepolicygradients, Policy_gradient_methods} are stochastic. 
In contrast to previous works, our framework makes the distinction between randomized and deterministic algorithms and their possible guarantees.

There are many extensions to the standard PAC framework. 
A notable example is Uniform-PAC \citep[U-PAC,][]{dann2018unifyingpacregretuniform}, 
which measures PAC uniformly across all suboptimality levels $\epsilon$, resulting in stronger guarantees compared to regular PAC. 
Another noteworthy extension is uniform last-iterate \citep[ULI,][]{liu2024uniformlastiterateguaranteebandits}, 
which instead bounds the instantaneous suboptimality gap of an algorithm with high probability, also giving rise to stronger guarantees than the vanilla PAC framework.
We define and discuss these frameworks in detail in \Cref{ssec:more_sample_complexity_extansions}. In particular, we explain how both extensions suffer from the same constant probability of potentially catastrophic failure mentioned in \Cref{sec:intro}.

Another related metric to the sample complexity is the lenient regret \citep{LenientRegretMerlis2021},
which combines the notions of regret and sample complexity into a single performance measure; we also discuss the relation between ACE and lenient regret in \Cref{subsec:regret_lenient_sample}. 
\section{Setting}
\subsection{Multi-Armed Bandits}\label{sec:MAB_setting}
We consider the stochastic multi-armed bandit problem
with $K$ arms, where each arm $a \in \brs*{K} \triangleq \brc*{1, 2, \ldots, K}$ is associated with a distribution $\nu_a$.
Let ${\nu} = \brc*{\nu_a}_{a=1}^{K}$ denote the set of all arm distributions.
At each round $t \in \N$, an agent picks a single arm to pull $A_t$ and observes a reward from only that arm. 
When pulling arm $a$ for the $n^{th}$ time, the agent observes a reward $X_{a,n} \sim \nu_a$ bounded in $\brs*{0, 1}$, 
and of expectation $\mu_a \triangleq \E\brs*{X_{a,n}}$.
Denote the expectation of an optimal arm by $\mu^* = \max_{a} \mu_a$, and the suboptimality gap of an arm $a$ by $\Delta_a \triangleq \mu^* - \mu_a$.
Additionally, we denote $\Delta_{\min} \triangleq \min_{a : \Delta_a > 0}\Delta_a$, and $\Delta_{\max} \triangleq \max_{a}\Delta_a$.

The decisions of the agent are defined by a sequence of policies $\pi \triangleq \br*{\pi_t}_{t=1}^{\infty}$, 
where each policy $\pi_t$ is a mapping from histories $\F_{t-1}$ to distributions over the arms $\Delta \br*{\brs*{K}}$. 
At round $t$, the agent samples an arm based on the current history $A_t \sim \pi_t \br*{\cdot \mid \F_{t-1}}$,
and observes a reward $X_{t} = X_{A_t, n_{t}(A_t) + 1}$, 
where $n_t \br*{a} = \sum_{i=1}^{t-1} {\Ind{ A_i = a}}$ is the number of times an arm $a$ was sampled by the agent up to round $t - 1$.

Denote the mean of arm $a$ using the first $n$ samples by
$\hat{\mu}_{a, n} = \frac{1}{n} \sum_{k=1}^{n}{X_{a,k}}$ 
and set $\hat{\mu}_{a, 0} \triangleq 0$.
Define the empirical mean up to round $t$ to be $\hat{\mu}_t \br*{a} = \hat{\mu}_{a, {n_t \br*{a}}}$.

The history $\F_{t-1}$ contains all the actions and rewards observed before round $t$, 
but not the internal randomization of the algorithm. 
We refer to \Cref{sec:interaction-filtration} for a formal definition and a detailed explanation.

We say that policy ${\pi_t}$ is deterministic if for every realization of history $\F_{t-1}$,
there exists an arm $a \in \brs*{K}$ such that $\pi_t \br*{a \mid \F_{t-1}} = 1$.
Any policy that is not deterministic is called stochastic. 
Unless stated otherwise, we let $\pi_t(a) \triangleq \pi_t(a\vert \F_{t-1})$ for brevity. 
Finally, we say that policy $\pi^*$ is optimal if $\sum_{a: \Delta_a = 0} \pi^*(a) = 1$.

\subsection{Expected Regret, Lenient Regret and Sample Complexity}\label{subsec:regret_lenient_sample}
The most common performance measure in the MAB literature is the expected regret \citep{BanditAlgsTor2017}, which measures the cumulative difference in reward between the best possible agent and an agent acting according to policy $\pi$ when interacting with instance $\nu$. The expected regret is defined to be
$$R_{T}\br*{\pi,\nu} = \E\brs*{\sum_{t=1}^{T}{\Delta_t}},$$
where $\Delta_t = \Delta_{A_t}$, and the expectation is taken over the interaction of the agent with the environment. When $\nu$ is clear from context, we write $R_{T}(\pi) = R_T(\pi, \nu)$.

Another common performance measure is the sample complexity \citep{kakade2003sample, domingues2020episodicreinforcementlearningfinite}. It measures how many suboptimal decisions an agent was expected to perform when using the policy sequence $\pi = \br*{\pi_t}_{t=1}^{\infty}$ over an infinite horizon of interaction with instance $\nu$.
Formally, we define the policy gap at round $t$ to be $\Delta_{\pi_t} \triangleq {\sum_{a=1}^{K}{\pi_{t}\br*{a}\Delta_a}}$,
and define the sample complexity to be
\begin{equation}\label{eq:sample_complexity}
    N_{\epsilon} \br*{\pi, \nu} = \sum_{t=1}^{\infty}{\Ind{\Delta_{\pi_t}>\epsilon}}
\end{equation}
for some suboptimality level $\epsilon > 0$. The notion of the policy gap arises naturally since it coincides with the realized gap $\Delta_{A_t}$ for deterministic algorithms, and measures the suboptimality of our decision rule itself -- the policy. Though it is equally meaningful to measure the suboptimality of the realized actions, we show that working with the policy gap has a clear advantage -- it facilitates the derivation of many favorable guarantees, including almost sure convergence, expected regret bounds and PAC guarantees.

We note that this definition of the sample complexity is closely related to the lenient regret metric \citep{LenientRegretMerlis2021}, which is defined as
\begin{align*}
  R^{\epsilon}_{T} \br*{\pi, \nu} = \sum_{t=1}^{T}{\Ind{\Delta_t>\epsilon}}.  
\end{align*}
The notable difference between the two is that sample complexity measures the expected suboptimality of the policy and is mostly analyzed in the PAC setting,
while lenient regret measures the suboptimality of the realized actions and is analyzed in expectation. 

To relate the policy gap and the suboptimality gap, we show the following:
\begin{restatable}{claim}{expectedRegretEquivalentForPolicyGap}\label{thm:expected_regret_equivalent_for_policy_gap}
It holds that $\E\brs*{\sum_{t=1}^{T}{\Delta_t}} = \E\brs*{\sum_{t=1}^{T}{\Delta_{\pi_t}}}.$
\end{restatable}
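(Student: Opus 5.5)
The plan is to prove the identity term by term, using linearity of expectation and the tower property with respect to the history $\F_{t-1}$. Since $T$ is finite and every $\Delta_t \in [0,1]$, linearity gives $\E\brs*{\sum_{t=1}^{T}\Delta_t} = \sum_{t=1}^{T}\E\brs*{\Delta_t}$, and likewise for $\Delta_{\pi_t}$. It therefore suffices to show that for each fixed $t$,
\begin{equation*}
\E\brs*{\Delta_{A_t}} = \E\brs*{\Delta_{\pi_t}}.
\end{equation*}

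For a fixed round $t$, I will condition on the history $\F_{t-1}$. The first step is to write $\Delta_{A_t} = \sum_{a=1}^{K}\Delta_a \Ind{A_t = a}$. The gaps $\Delta_a$ are deterministic constants of the instance $\nu$, so
\begin{equation*}
\E\brs*{\Delta_{A_t}\mid \F_{t-1}} = \sum_{a=1}^{K}\Delta_a \Pr\br*{A_t = a \mid \F_{t-1}}.
\end{equation*}
By the interaction protocol, $A_t \sim \pi_t\br*{\cdot \mid \F_{t-1}}$, so $\Pr\br*{A_t = a \mid \F_{t-1}} = \pi_t\br*{a \mid \F_{t-1}}$ almost surely. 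Hence $\E\brs*{\Delta_{A_t}\mid\F_{t-1}} = \Delta_{\pi_t}$. Applying the tower property, $\E\brs*{\Delta_{A_t}} = \E\brs*{\E\brs*{\Delta_{A_t}\mid\F_{t-1}}} = \E\brs*{\Delta_{\pi_t}}$. Summing over $t = 1,\dots,T$ then gives the claim.

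The only delicate point, and the main obstacle, is justifying that $\Pr\br*{A_t = a \mid \F_{t-1}} = \pi_t(a\mid\F_{t-1})$ when $\F_{t-1}$ excludes the algorithm's internal randomization. I would resolve this using the formal construction in \Cref{sec:interaction-filtration}. In that construction, $\pi_t$ is a function of the observed history alone, and $A_t$ is drawn from it, so its conditional law given $\F_{t-1}$ is exactly $\pi_t(\cdot\mid\F_{t-1})$. If the internal randomness could carry extra information, it would be integrated out, and the conditional law given $\F_{t-1}$ would still be $\pi_t(\cdot \mid \F_{t-1})$ by the definition of the policy as that law. Finally, measurability of $\Delta_{\pi_t}$ with respect to $\F_{t-1}$ holds because $\pi_t$ is a measurable map of the history, so the conditional expectations above are well defined.
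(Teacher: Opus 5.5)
Your proof is correct and follows essentially the same route as the paper: both rest on the identity $\Pr\br*{A_t = a \mid \F_{t-1}} = \pi_t\br*{a \mid \F_{t-1}}$, which holds by the definition of the interaction protocol, combined with the tower rule and linearity over the finite sum. The only difference is bookkeeping: the paper passes through the pull counts, using $\E\brs*{n_{T+1}(a)} = \sum_{t=1}^{T}\E\brs*{\pi_t\br*{a \mid \F_{t-1}}}$ before recombining over arms, whereas you condition round by round, which amounts to the same argument with the two sums interchanged.
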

The proof is given in \Cref{appendix:expected_regret_equivalent_for_policy_gap}. While this claim implies that optimizing the expected cumulative policy gap and the expected regret are equivalent, we note that this is not necessarily true for the infinite-horizon lenient regret and the sample complexity.

\subsection{PAC, Uniform-PAC and Uniform Last-Iterate}\label{ssec:more_sample_complexity_extansions}
For the following definitions, we follow the conventions used in \cite{domingues2020episodicreinforcementlearningfinite} to formally extend the sample complexity framework to randomized algorithms. In doing so, we adopt the notion of the policy gap for the frameworks, and note that all the definitions align with their original statements for deterministic algorithms. We remark that while \cite{domingues2020episodicreinforcementlearningfinite} defined the sample complexity to accommodate randomized policies, their sample complexity results only apply to deterministic ones.
The most common framework in the literature of sample complexity is the PAC framework: 

\begin{definition}[\cite{kakade2003sample}]
    A policy $\pi$ is called $\br*{\epsilon, \delta}-$\textbf{PAC} with a finite sample complexity bound $m_{PAC} \br*{\epsilon, \delta}$ if
        $\Pr \br*{N_{\epsilon} \br*{\pi, \nu} > m_{PAC} \br*{\epsilon, \delta}} \le \delta.$
\end{definition}
Algorithms in this setting are designed to minimize the sample complexity bound for a given $\br*{\epsilon,\delta}$ pair, which leads to two notable disadvantages:
\begin{enumerate}
    \item Algorithms are tailored to the given suboptimality level $\epsilon$, resulting in suboptimal performance in terms of expected regret, high-probability regret, and even $\br*{\epsilon',\delta}$-PAC for $\epsilon' < \epsilon$ \citep{dann2018unifyingpacregretuniform}. In practice, we favor algorithms that can be agnostic or robust to the parameters $\epsilon,\delta$.
    \item Algorithms are designed for the given confidence $\delta$, and as such can fail with constant probability, resulting in infinite sample complexity, a highly undesirable trait.
\end{enumerate}
Extensions have been made to the existing PAC framework to mitigate the first problem, but to our knowledge, none have made progress in the latter issue. 
One such framework is Uniform-PAC:
\begin{definition}[\citep{dann2018unifyingpacregretuniform}]
    A policy $\pi$ is called $\delta$-\textbf{U-PAC} for $\delta > 0$ with sample complexity bound $m_{U-PAC} \br*{\epsilon, \delta}$ if 
    $$\Pr \br*{\exists \epsilon > 0 : N_{\epsilon} \br*{\pi, \nu} > m_{U-PAC} \br*{\epsilon, \delta}} \le \delta.$$
\end{definition}
It can be shown that algorithms that are $\delta$-U-PAC are also $\br*{\epsilon, \delta}-$PAC for all $\epsilon > 0$, and enjoy high-probability regret bounds \citep{dann2018unifyingpacregretuniform}, which addresses the major clauses in the first disadvantage. Unfortunately, this framework does not address the second problem, still allowing catastrophic failure to possibly occur with constant probability. 

Another noteworthy framework is the Uniform Last-Iterate (ULI) framework \citep{liu2024uniformlastiterateguaranteebandits}, a stronger notion than U-PAC that gives us sharper bounds. 
\begin{definition}[ULI]
   A policy $\pi$ is called $\delta$-\textbf{ULI} if there exists a function $m_{ULI}\br*{\delta, t} \leq\text{poly}\br*{\frac{1}{t},\log\frac{1}{\delta}}$, such that 
       $\Pr \br*{\exists t : \Delta_{\pi_t} > m_{ULI} \br*{\delta, t}} \le \delta.$
\end{definition}
While the U-PAC framework allows algorithms to play maximally suboptimal arms for a very large but finite horizon, the ULI framework instead characterizes the actual convergence rate of $\Delta_{\pi_t}$. The main theorem in \cite{liu2024uniformlastiterateguaranteebandits} states that ULI implies U-PAC, and their proof generalizes to the policy gap. 
As such, any policy which is ULI also suffers from the same disadvantages as U-PAC.

\subsection{Expected Sample Complexity and the ACE Framework}
We introduce a new sample complexity measure, the expected sample complexity, which for a given policy $\pi$ and instance $\nu$ is defined as:
\begin{align*}
    \E\brs*{N_{\epsilon} \br*{\pi, \nu}} = \E\brs*{\sum_{t=1}^{\infty}{\Ind{\Delta_{\pi_t}>\epsilon}}}.
\end{align*}
When $\nu$ is clear from context, we let ${N_{\epsilon} \br*{\pi}} \triangleq {N_{\epsilon} \br*{\pi, \nu}}$.

We also introduce a novel framework for the analysis of the expected sample complexity called the approximately correct in expectation (ACE) framework. 
We use the following definitions:
\begin{definition}
A policy $\pi$ is called $\epsilon$-\textbf{ACE} 
$\br*{\textbf{A}\text{pproximately } \textbf{C}\text{orrect } \text{in } \textbf{E}\text{xpectation}}$ on $\nu$ with sample complexity bound $m_{ACE} \br*{\epsilon, \nu} < \infty$ if: 
$$\E \brs*{N_{\epsilon} \br*{\pi, \nu}} \le m_{ACE} \br*{\epsilon, \nu}.$$
\end{definition}

\begin{definition}
    A policy $\pi$ is \textbf{U-ACE} 
(\textbf{U}niformly-\textbf{A}pproximately  \textbf{C}orrect in \textbf{E}xpectation) on $\nu$ if $\pi$ is $\epsilon$-ACE with bound $m_{U-ACE} \br*{\epsilon, \nu}$ for all $\epsilon > 0$. 
\end{definition}
\section{Properties of (U-)ACE Algorithms}
In this section, we analyze the properties of ACE and U-ACE policies, including convergence, expected regret bounds, and the necessity of stochastic policies to be U-ACE.

We begin by showing that ACE bounds can be used to prove the convergence of algorithms. In the following proposition, we show that $\epsilon$-ACE implies almost-sure $\epsilon$ boundedness of the policy gap
\begin{restatable}{proposition}{ACEImpliesASBoundedness}\label{thm:ACE_implies_AS_boundedness}
    Let $\pi = \br*{\pi_t}_{t=1}^{\infty}$ be $\epsilon$-ACE for some $\epsilon > 0$. Then $\limsup_{t \rightarrow \infty} \Delta_{\pi_t} \le \epsilon$ almost surely.
\end{restatable}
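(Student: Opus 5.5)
The plan is to pass from finiteness of the expectation to almost-sure finiteness of $N_\epsilon(\pi,\nu)$, and then read off the $\limsup$ statement pathwise. The random variable $N_{\epsilon}(\pi,\nu) = \sum_{t=1}^{\infty} \Ind{\Delta_{\pi_t} > \epsilon}$ takes values in $\N \cup \brc*{\infty}$. Since $\pi$ is $\epsilon$-ACE, we have $\E\brs*{N_\epsilon(\pi,\nu)} \le m_{ACE}(\epsilon,\nu) < \infty$. A nonnegative extended-real random variable with finite expectation is finite almost surely: if $\Pr(N_\epsilon = \infty) = p > 0$, then $\E\brs*{N_\epsilon} \ge p \cdot \infty = \infty$, a contradiction. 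Alternatively, Markov's inequality gives $\Pr(N_\epsilon > M) \le m_{ACE}/M \to 0$ as $M \to \infty$. Hence $N_\epsilon(\pi,\nu) < \infty$ almost surely.

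Now fix an outcome $\omega$ in this probability-one event. The set $\brc*{t : \Delta_{\pi_t}(\omega) > \epsilon}$ is finite, so there is a random time $T_0(\omega)$ such that $\Delta_{\pi_t}(\omega) \le \epsilon$ for all $t > T_0(\omega)$. Therefore
\[
\limsup_{t\to\infty} \Delta_{\pi_t}(\omega) \le \sup_{t > T_0(\omega)} \Delta_{\pi_t}(\omega) \le \epsilon ,
\]
which is exactly the claim on an event of probability one.

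The only point requiring care is measurability: each $\Delta_{\pi_t} = \sum_a \pi_t(a\mid\F_{t-1})\Delta_a$ must be a random variable, so that the indicators, and hence $N_\epsilon$ as a countable sum of them, are measurable. This follows because $\pi_t$ is a (measurable) mapping of the history $\F_{t-1}$, as formalized in \Cref{sec:interaction-filtration}. Beyond this bookkeeping I expect no real obstacle. The argument is essentially the first Borel--Cantelli lemma applied to the events $\brc*{\Delta_{\pi_t} > \epsilon}$: their probabilities sum, by Tonelli's theorem, to $\E\brs*{N_\epsilon} < \infty$, so only finitely many of them occur almost surely.
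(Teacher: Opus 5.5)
Your proof is correct and follows essentially the same route as the paper's. The paper uses Tonelli to write $\E\brs*{N_\epsilon(\pi,\nu)} = \sum_{t=1}^{\infty}\Pr\br*{\Delta_{\pi_t} > \epsilon} < \infty$ and then invokes the first Borel--Cantelli lemma, whereas you go directly from finite expectation to $N_\epsilon(\pi,\nu) < \infty$ almost surely (which is exactly what that lemma's proof does), and the pathwise step to $\limsup_{t\to\infty}\Delta_{\pi_t} \le \epsilon$ is the same in both.
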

The proof of this proposition is given in \cref{appendix:ACE_implies_AS_boundedness} and relies on the Borel–Cantelli lemma. As a corollary, we get that U-ACE implies almost sure convergence of the policy gap to zero:
\begin{restatable}{corollary}{UACEImpliesASConvergence}\label{thm:U-ACE_implies_AS_convergence}
    Let $\pi = \br*{\pi_t}_{t=1}^{\infty}$ be U-ACE. Then $\lim_{t \rightarrow \infty} \Delta_{\pi_t} = 0$ almost surely.
\end{restatable}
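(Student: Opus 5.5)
The plan is to derive the corollary directly from \Cref{thm:ACE_implies_AS_boundedness}, applied along a countable sequence of suboptimality levels. Since $\pi$ is U-ACE, it is $\epsilon$-ACE for every $\epsilon>0$. In particular, it is $\epsilon_k$-ACE for $\epsilon_k = 1/k$, $k\in\N$. For each $k$, \Cref{thm:ACE_implies_AS_boundedness} gives an event $E_k$ with $\Pr(E_k)=1$ on which $\limsup_{t\to\infty}\Delta_{\pi_t}\le 1/k$.

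Next, set $E=\bigcap_{k\in\N}E_k$. Countable subadditivity gives $\Pr(E^c)\le\sum_k\Pr(E_k^c)=0$, so $\Pr(E)=1$. On $E$ we have $\limsup_{t}\Delta_{\pi_t}\le 1/k$ for every $k$, and hence $\limsup_t\Delta_{\pi_t}\le 0$.

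Finally, $\Delta_{\pi_t}=\sum_a\pi_t(a)\Delta_a\ge 0$, because every $\Delta_a\ge0$ and $\pi_t$ is a probability distribution. Therefore $\liminf_t\Delta_{\pi_t}\ge0$. Combined with the previous step, this shows that on $E$ the limit exists and equals $0$, i.e.\ $\lim_{t\to\infty}\Delta_{\pi_t}=0$ almost surely.

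The only point needing care is that U-ACE gives a separate null set for each $\epsilon$, and intersecting over uncountably many $\epsilon$ would not be justified. Restricting to the countable family $\epsilon_k=1/k$ avoids this, because $\limsup_t\Delta_{\pi_t}\le 1/k$ for all $k$ already forces the $\limsup$ to be at most $0$. Beyond this, the argument is routine.
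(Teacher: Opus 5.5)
Your proof is correct and is essentially the paper's own argument: apply \Cref{thm:ACE_implies_AS_boundedness} along $\epsilon_k = 1/k$, intersect the countably many almost-sure events, and use $\Delta_{\pi_t} \ge 0$ to turn $\limsup_t \Delta_{\pi_t} \le 0$ into $\lim_t \Delta_{\pi_t} = 0$.
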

This corollary is proved by applying \Cref{thm:ACE_implies_AS_boundedness} to a sequence $\epsilon \downarrow 0$; see \Cref{appendix:ACE_implies_AS_boundedness} for a formal proof.

This result yields a much stronger guarantee than the high-probability convergence implied by previous frameworks. 
We do remark that \Cref{thm:U-ACE_implies_AS_convergence} does not imply convergence of the sequence $\pi_t$ to an optimal policy $\pi^*$. In fact, $\pi_t$ may not converge at all; for example, if the policy flips between two distinct optimal policies. Yet, the corollary indicates that the expected reward of the sequence converges to that of an optimal policy, which we believe to be the more meaningful definition for a convergence of a policy to its optimum.

Next, we show that for algorithms with polynomial $m_{U-ACE}$, the expected regret is sublinear:  
\begin{restatable}{claim}{UACEImpliesExpectedReget}\label{thm:UACE_implies_expected_regret}
    Assume that $\pi$ is U-ACE with $m_{U-ACE}(\epsilon, \nu) \le \frac{C_\nu}{\epsilon^{\alpha}}$ for $C_\nu>0$, $\alpha > 1$. Then for all $T$, it holds that $R_T(\pi,\nu) \le \frac{\alpha}{\alpha - 1} C_{\nu}^{1/\alpha}T^{\frac{\alpha - 1}{\alpha}}$.  
    
    Moreover, for $\alpha = 1$, we obtain a logarithmic bound of $R_T(\pi,\nu) \le C_{\nu}\br*{1 + \max\brc*{0,\ln\br*{\frac{T}{C_{\nu}}}}}$.
\end{restatable}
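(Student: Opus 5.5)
By \Cref{thm:expected_regret_equivalent_for_policy_gap}, it suffices to bound $\E\brs*{\sum_{t=1}^{T}\Delta_{\pi_t}}$. The plan is a layer-cake decomposition of each policy gap. Since $\Delta_{\pi_t}\in[0,1]$, we can write $\Delta_{\pi_t} = \int_0^1 \Ind{\Delta_{\pi_t}>\epsilon}\,d\epsilon$. Summing over $t\le T$ and applying Tonelli gives
\begin{align*}
\E\brs*{\sum_{t=1}^{T}\Delta_{\pi_t}} = \int_0^1 \E\brs*{\sum_{t=1}^{T}\Ind{\Delta_{\pi_t}>\epsilon}}d\epsilon \le \int_0^1 \min\brc*{T,\, \E\brs*{N_\epsilon(\pi,\nu)}}d\epsilon \le \int_0^1 \min\brc*{T, \frac{C_\nu}{\epsilon^{\alpha}}}d\epsilon .
\end{align*}
Here the truncated count is bounded both trivially by $T$ and by the full infinite-horizon count $N_\epsilon$, whose expectation is at most $m_{U-ACE}(\epsilon,\nu)\le C_\nu\epsilon^{-\alpha}$ by the U-ACE assumption. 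The rest is calculus.

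For $\alpha>1$, split the integral at $\epsilon_0=(C_\nu/T)^{1/\alpha}$, where $T = C_\nu\epsilon_0^{-\alpha}$. Extending the upper limit to $\infty$ only increases the bound, so
\begin{align*}
\int_0^\infty \min\brc*{T, C_\nu\epsilon^{-\alpha}}d\epsilon = T\epsilon_0 + C_\nu\frac{\epsilon_0^{1-\alpha}}{\alpha-1}.
\end{align*}
Substituting $\epsilon_0$, the first term is $T\epsilon_0 = C_\nu^{1/\alpha}T^{(\alpha-1)/\alpha}$, and the second term equals $\frac{1}{\alpha-1}T\epsilon_0$. Their sum is $\frac{\alpha}{\alpha-1}C_\nu^{1/\alpha}T^{(\alpha-1)/\alpha}$, as claimed.

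For $\alpha=1$, the integral must stay on $[0,1]$ because $\int^\infty \epsilon^{-1}\,d\epsilon$ diverges. The split point is again $\epsilon_0 = C_\nu/T$, and there are two cases.
\begin{itemize}
\item If $\epsilon_0\ge 1$, i.e.\ $T\le C_\nu$, the integral is at most $T\le C_\nu$, matching the bound since the $\max$ term is $0$.
\item If $T> C_\nu$, the integral is at most $T\epsilon_0 + \int_{\epsilon_0}^1 C_\nu\epsilon^{-1}\,d\epsilon = C_\nu + C_\nu\ln(T/C_\nu)$.
\end{itemize}
The $\max\brc*{0,\cdot}$ in the statement unifies the two cases.

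The only delicate step is justifying the interchange of sum, expectation and integral, and using $\E[\sum_{t\le T}\Ind{\cdot}]\le \E[N_\epsilon]$ for every $\epsilon$ simultaneously. Nonnegativity handles both: Tonelli applies, and the truncation is pointwise monotone. Measurability of $(\epsilon,\omega)\mapsto \Ind{\Delta_{\pi_t}>\epsilon}$ is immediate since it is the indicator of a jointly measurable set.
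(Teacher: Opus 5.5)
Your proof is correct and follows the paper's argument. It uses the same layer-cake/Tonelli reduction to $\int_0^1 \min\{T, C_\nu\epsilon^{-\alpha}\}\,d\epsilon$, the same split at $(C_\nu/T)^{1/\alpha}$, and the same two-case treatment for $\alpha=1$. The only difference is that for $\alpha>1$ you extend the integral to $[0,\infty)$; this covers $T\le C_\nu$ without the paper's separate case and removes the negative term the paper then has to drop.
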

Proof of this claim can be found in \Cref{appendix:UACE_implies_expected_regret}.
After showing the favorable properties of the framework, we end this section with an impossibility result for deterministic algorithms: such algorithms cannot be U-ACE for all bandit instances. This will later guide us to only focus on stochastic policies.
\begin{restatable}{claim}{deterministicAlgSublinearRegretAndUACE}\label{thm:deterministic_alg_sublinear_regret_and_U-ACE}
    Let $\Ecal_{Ber}^{K}$ be the class of $K$-armed Bernoulli bandits. 
    For any deterministic policy $\pi$ and $K \ge 2$, there exists $\nu \in \Ecal_{Ber}^{K}$ and $\epsilon > 0$ s.t.
        $\E\brs*{N_{\epsilon}\br*{\pi, \nu}} = \infty.$
\end{restatable}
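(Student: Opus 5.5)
For a deterministic policy we have $\Delta_{\pi_t}=\Delta_{A_t}$, so $N_\epsilon(\pi,\nu)=\sum_{t}\Ind{\Delta_{A_t}>\epsilon}$: the number of pulls of arms whose gap exceeds $\epsilon$. The plan is to find a Bernoulli instance on which some fixed history prefix has positive probability and, once it occurs, forces the deterministic policy into infinitely many $\epsilon$-suboptimal pulls with positive probability. This gives $\E\brs*{N_\epsilon}=\infty$. The key lever is that Bernoulli rewards make every finite reward sequence have positive probability when means lie in $(0,1)$, and exact probability zero or one when means are $0$ or $1$.

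First I would try degenerate instances. Take $K=2$ and the instance $\nu^{(0)}$ where both arms have mean $0$ (or both mean $1$). On this instance all rewards are deterministic, so the deterministic policy generates a single deterministic action sequence $a_1,a_2,\dots$. Some arm, say arm $1$, is pulled infinitely often in this sequence; let $n_2\in\N\cup\{\infty\}$ be the number of pulls of arm $2$.

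\emph{Case $n_2<\infty$.} Let $t_0$ be the time after which only arm $1$ is pulled. Consider the instance $\nu$ with $\mu_1=0$ and $\mu_2=1$, so that $\Delta_1=1$. On $\nu$, arm $1$ always returns $0$, and arm $2$ always returns $1$. The interaction therefore follows the all-zero trajectory only if arm $2$ is never pulled, which fails when $n_2\ge 1$. To handle this, I would instead compare the all-ones instance with an instance where $\mu_1=0$, or more robustly, use a perturbation argument. Take $\mu_2\in(0,1)$ and $\mu_1=0$. The event that the first $n_2$ pulls of arm $2$ all return $0$ has probability $(1-\mu_2)^{n_2}>0$. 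On that event the history coincides with the all-zero trajectory forever, because arm $1$ deterministically returns $0$. Hence arm $1$ is pulled infinitely often, and with $\epsilon<\mu_2$ we get $N_\epsilon=\infty$ with positive probability.

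\emph{Case $n_2=\infty$.} Both arms are pulled infinitely often on the all-zero trajectory. I would take $\mu_1=0$ and $\mu_2\in(0,1)$. With positive probability, however, arm $2$ eventually returns a $1$, and the trajectory deviates. The all-zero trick then only controls a finite prefix, which is the main obstacle.

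To get around this, I would use that arm $1$ gives deterministic rewards when its mean is $0$. I would build the instance adaptively: define the sequence of histories in which arm $2$ returns a fixed reward pattern. Alternatively, and more simply, I would set $\mu_2=1$ and $\mu_1=0$ and compute the unique deterministic trajectory under this instance. If arm $1$ is pulled infinitely often there, then $N_\epsilon=\infty$ surely. Otherwise arm $1$ is pulled finitely often, and I swap to $\mu_1=1$, $\mu_2=0$, which produces the same trajectory, since the rewards seen are now different. So I would argue using the two instances ($\mu=(0,1)$ and $\mu=(1,0)$) together with the all-zero and all-one instances, taking care to match observed reward histories.

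The cleanest version I would aim for works as follows. Consider the trajectory $\tau$ generated when every pull of arm $1$ returns $0$ and every pull of arm $2$ returns $1$. If arm $1$ appears infinitely often in $\tau$, the instance $(0,1)$ gives $N_\epsilon=\infty$ surely. Otherwise arm $2$ appears infinitely often. Now take the instance $(0,\mu_2)$ with small $\mu_2$ only if the pulls of arm $2$ are finite. This reasoning is circular, so I expect the real difficulty to be handling mixed cases, resolved by considering the trajectory for the instance $(1,0)$ analogously, or by a symmetry/diagonal argument over the four deterministic instances. For $K>2$, I would set all extra arms to mean $0$ and reduce to the two-arm case.
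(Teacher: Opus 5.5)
Your proposal has a genuine gap. The reduction $\Delta_{\pi_t}=\Delta_{A_t}$ is fine, and so is your first case. If the all-zero trajectory pulls arm $2$ only $n_2<\infty$ times, then on the instance $(0,\mu_2)$ the event that arm $2$'s first $n_2$ rewards are $0$ has positive probability and reproduces that trajectory forever. Arm $1$, with gap $\mu_2$, is then pulled infinitely often.

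The gap is the case you leave open, where the all-zero trajectory pulls both arms infinitely often. Your suggested repair through the four instances with means in $\{0,1\}$ cannot work. Take the deterministic policy that plays round-robin until some arm returns a $1$ and then commits forever to that arm. Both arms are pulled infinitely often on the all-zero trajectory, yet $N_\epsilon\le 1$ on each of $(0,0),(1,1),(0,1),(1,0)$. The all-one trajectory is no help either: reproducing infinitely many $1$'s from an arm forces its mean to be $1$, which makes that arm optimal. This policy is only fooled on a non-degenerate instance, e.g. $(1/4,3/4)$ on the event that arm $1$ yields the first $1$. So no argument built only from deterministic reward rows can close the proof.

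The missing idea is a change of measure on non-degenerate instances. The paper avoids trajectories altogether. For a deterministic policy and $\epsilon<\Delta_{\min}$, Bernoulli rewards give $R_T(\pi,\nu)\le \E[N_\epsilon(\pi,\nu)]$. Every policy has $R_T\to\infty$ on some instance: by Lai--Robbins if it is consistent (on any instance with a suboptimal arm and $\mu^*<1$), and by the definition of inconsistency otherwise.

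Alternatively, your own prefix-transfer idea can be rescued. Keep the law of the infinitely-pulled arm fixed across the two instances and perturb only the finitely-pulled one, instead of making rows degenerate. Let $\nu=(\mathrm{Ber}(1/2),\mathrm{Ber}(1/4))$ and $\nu'=(\mathrm{Ber}(1/2),\mathrm{Ber}(3/4))$.
\begin{itemize}
\item If $\Pr_{\nu}(n_\infty(2)=\infty)>0$, you are done on $\nu$ with $\epsilon<1/4$.
\item Otherwise $\Pr_{\nu}(n_\infty(2)\le n)>0$ for some $n$. In the table model, for a deterministic policy this event depends only on arm $1$'s entire row and arm $2$'s first $n$ samples. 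The likelihood ratio of these between $\nu'$ and $\nu$ is at least $3^{-n}$, so $\Pr_{\nu'}(n_\infty(2)\le n)>0$.
\item On this event, arm $1$, which has gap $1/4$ under $\nu'$, is pulled infinitely often.
\end{itemize}
This yields $\Pr(N_\epsilon=\infty)>0$, slightly stronger than the paper's $\E[N_\epsilon]=\infty$. It extends to $K>2$ by giving the extra arms the same law in both instances. It is essentially the lemma the paper proves for Proposition~\ref{thm:U-ACE_without_consistency_lower_bound}, with KL replaced by a direct absolute-continuity argument.
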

A proof for this claim is given in \cref{appendix:deterministic_alg_sublinear_regret_and_U-ACE} and is based on the following observation: for deterministic algorithms, if $\epsilon<\Delta_{\min}$, the expected sample complexity is the number of rounds in which a suboptimal arm was played. Thus, it suffices to show that there exists an instance in which the expected number of suboptimal plays is infinite. This can be proved using known lower bounds for the bandit problem.

At first glance, \Cref{thm:deterministic_alg_sublinear_regret_and_U-ACE} seems to imply that U-ACE is not achievable; indeed, a simple application of Markov's inequality gives a PAC guarantee (albeit with polynomial dependence on $1/\delta$), since
\begin{align*}
    \Pr \br*{N_{\epsilon} \br*{\pi, \nu} > m_{PAC} \br*{\epsilon, \delta}} \le \frac{\E \brs*{N_{\epsilon} \br*{\pi}}}{m_{PAC} \br*{\epsilon, \delta}}.
\end{align*}
Thus, for $m_{PAC} \br*{\epsilon, \delta} = \frac{\E \brs*{N_{\epsilon} \br*{\pi}}}{\delta}$ we get $(\epsilon, \delta)$-PAC, while \Cref{thm:UACE_implies_expected_regret} simultaneously implies an expected regret bound. This seemingly contradicts Theorem 1 of \cite{dann2018unifyingpacregretuniform}, which states that no algorithm may achieve sublinear regret and finite PAC bounds for arbitrarily small $\epsilon$. Yet, a closer inspection of \citep{dann2018unifyingpacregretuniform} reveals their analysis assumes deterministic policies, and their methods are inapplicable for stochastic ones. This key observation and \Cref{thm:deterministic_alg_sublinear_regret_and_U-ACE} both encourage us to turn our attention to stochastic algorithms in our pursuit of U-ACE.

In the following section, we show that stochastic algorithms can enjoy both near-optimal expected regret, U-ACE and a finite $\br*{\epsilon, \delta}$-PAC bound for all $\epsilon > 0$, 
a surprising result considering most of the literature on sample complexity studies deterministic algorithms, dating back to the seminal work of \cite{kakade2003sample}.
\section{Algorithms for the ACE Framework}
We now analyze the expected sample complexity of two classes of algorithms: algorithms that receive $\epsilon$ as input (non-uniform algorithms), and algorithms that do not (uniform algorithms). 
We show that the dependence of the expected sample complexity on $\epsilon$ is inherently different for the different classes; to our knowledge, this is the first separation result of this type in the sample complexity setting. 

\subsection{Non-Uniform Algorithms}\label{sec:non_uniform_algs}
This setting is the most similar to the vanilla PAC framework; the algorithm receives $\epsilon$ as input, and its goal is to achieve $\epsilon$-ACE with minimal $m_{ACE}\br*{\epsilon, \nu}$. While algorithms designed for this setting can only support suboptimality levels equal or larger than theirs (in contrast to uniform algorithms), we will show that this regime leads to better ACE bounds. 

Before presenting our algorithm for this regime, we explain our design methodology; one can surmise that algorithms for this setting must explore infinitely often to stay $\epsilon$-optimal over an infinite horizon in expectation. We turn our attention to $\epsilon$-greedy \citep{auer2002finite}, a simple algorithm that performs biased coin tosses to decide whether to explore or exploit the environment at each round. Formally, for a given sequence of biases $\br*{\epsilon_t}_t$, the $\epsilon$-greedy policy is defined as
\begin{align*}
    \pi^{\epsilon-greedy}_t \br*{a \mid \F_{t-1}} = \frac{\epsilon_t}{K} + \br*{1 - \epsilon_t}\Ind{a = g_t},
\end{align*}
where $g_t \in \argmax_{a'}\hat{\mu}_t \br*{a'}$ is a maximizer of the empirical means in round $t$.
A simple approach would then be to run $\epsilon$-greedy with $\epsilon_t \triangleq \epsilon /2$. Intuitively, after enough rounds, the greedy action is optimal w.h.p., and so the policy gap is small since our exploration probability is also small by our choice of bias.

While such a policy will be $\epsilon$-ACE, it will be sample inefficient; the probability of exploration is too small, making the number of rounds needed to detect an optimal arm w.h.p. very large, incurring high expected sample complexity even in `easy' instances.

To balance the need for fast exploration at the beginning with more careful exploitation in later rounds, one can start with a more aggressive exploration algorithm and then transition to a balanced exploration-exploitation algorithm in later rounds. 
With this motivation in mind, we propose the Successive Elimination Then $\epsilon$-Greedy (SET$\epsilon$-G) algorithm (depicted in \cref{alg:known_epsilon_SE}), 
and analyze its expected sample complexity. 
The algorithm has two phases: an initial successive elimination phase for exploration and pruning of bad arms, and an $\epsilon$-greedy phase to exploit discovered good arms but keep exploring in case of misidentification. The algorithm enjoys the following ACE bound (see \cref{appendix:known_epsilon_upper_bound} for the full proof):

\begin{algorithm}[!t]
\caption{Successive Elimination Then $\epsilon$-Greedy}
\label{alg:known_epsilon_SE}
\begin{algorithmic}[1]
\STATE \textbf{Input:} $\epsilon > 0$
\STATE Set $\mathcal A_1 = [K]$, $q = \frac{\epsilon^2}{16K}$
\item[] \textcolor{gray}{\textbf{\# Phase I}} 
\FOR{$n=1,2,\ldots,n_\epsilon \triangleq \min\brc*{n \ge 1 :\beta_n < \frac{\epsilon}{32}}$}
    \STATE Set $\beta_n=\sqrt{\frac{1}{2n}\ln\br*{\frac{64K^2n^2}{\epsilon^2}}}.$
    \FOR{every $a \in \mathcal A_n$}
        \STATE Pull arm $a$ once
    \ENDFOR
    \STATE Let $\hat{\mu}_{a,n}$ denote the empirical mean of the first
    $n$ samples of every $a \in \mathcal A_n$
    \STATE Let {\small$\mathcal{A}_{n+1}
        \!=\!\brc*{a \!\in\! \mathcal A_n :\max_{b \in \mathcal A_n}\hat{\mu}_{b,n}-\hat{\mu}_{a,n}\le 6\beta_n}$}
    \IF{$|\mathcal A_{n+1}|=1$}
        \STATE Terminate Phase I
    \ENDIF
\ENDFOR
\item[] \textcolor{gray}{\textbf{\# Phase II}}
\STATE Initialize $\hat{\mu}_1(a), S_1(a), n_1(a)$ for all $a\in\brs*{K}$ using the samples from phase I
\FOR{remaining rounds $s = 1, 2, \ldots$}{
    \STATE Sample $B_s \sim Ber \br*{\min\brc*{{\epsilon}/{4}, 1}}$
    \IF{$B_s = 1$}{
        \STATE Play $a_s \sim uni \br*{\brs*{K}}$ and observe reward $X_s$
    }
    \ELSE{
        \STATE Play $a_s \in \argmax_{a'} \hat{\mu}_s (a')$ (break ties by smallest arm index) and observe reward $X_s$
    }
    \ENDIF
    \STATE Update $n_{s+1}(a_s) = n_{s}(a_s) + 1$,\\ $S_{s+1}(a_s) = S_s (a_s) + X_s,\ \hat{\mu}_{s+1}(a_s) = \frac{S_{s+1}(a_s)}{n_{s+1}(a_s)}$
    \STATE For all $a \ne a_s$, set 
    $n_{s+1}(a) = n_{s}(a),$\\ $S_{s+1}(a) = S_s (a),\ \hat{\mu}_{s+1}(a) = \hat{\mu}_{s}(a)$
}\ENDFOR
\end{algorithmic}
\end{algorithm}

\begin{restatable}{theorem}{KnownEpsilonUpperBound}\label{thm:known_epsilon_upper_bound}
Let $\epsilon > 0$, and let $\pi^{\text{SET$\epsilon$-G}}$ denote the policy induced by
\cref{alg:known_epsilon_SE}. Then for all $\nu$:
{\small\begin{align*}
    \E\brs*{N_\epsilon\br*{\pi^{\text{SET$\epsilon$-G}},\nu}} = \Ocal\br*{\sum_{\Delta_a > \frac{3\epsilon}{4}} \frac{1}{\Delta_a^2}\ln\br*{\frac{K}{\epsilon^2}}}.
\end{align*}}
\end{restatable}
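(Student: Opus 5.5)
The plan is to split the rounds into Phase I and Phase II and bound the expected number of $\epsilon$-suboptimal policy rounds in each. Phase I is deterministic, so a round counts exactly when the pulled arm has $\Delta_a>\epsilon$. In Phase II the policy is $\frac{\epsilon}{4K}+(1-\frac{\epsilon}{4})\Ind{a=g_s}$. Its gap is therefore at most $\frac{\epsilon}{4}\Delta_{\max}+\Delta_{g_s}\le \frac{\epsilon}{4}+\Delta_{g_s}$. Hence a Phase II round can be $\epsilon$-suboptimal only if the greedy arm satisfies $\Delta_{g_s}>\frac{3\epsilon}{4}$. This is where the threshold $\frac{3\epsilon}{4}$ in the statement comes from.

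We work on the good event $G=\{\forall a,\forall n\le n_\epsilon: |\hat\mu_{a,n}-\mu_a|\le\beta_n\}$. By Hoeffding and a union bound over $a$ and $n$, together with the choice $\beta_n=\sqrt{\frac{1}{2n}\ln\frac{64K^2n^2}{\epsilon^2}}$, we get $\Pr(G^c)\le \sum_n 2K\frac{\epsilon^2}{64K^2n^2}=O(\epsilon^2/K)$.

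\textbf{Phase I on $G$.} The optimal arm is never eliminated, since its empirical mean is within $2\beta_n$ of every other arm's. An arm with $\Delta_a>8\beta_n$ is eliminated at step $n$. Thus arm $a$ is pulled at most $\min\{n_\epsilon,\ n : 8\beta_n<\Delta_a\}=O(\Delta_a^{-2}\ln(K/\epsilon^2))$ times. Here we use $\ln(K n/\epsilon)\lesssim \ln(K/\epsilon^2)$ when $n\lesssim \Delta_a^{-2}\ln(\cdot)$ and $\Delta_a\ge\epsilon$. Summing over arms with $\Delta_a>\epsilon$ gives the desired bound. By the end of Phase I, either one arm remains, or all surviving arms satisfy $\Delta_a\le 8\beta_{n_\epsilon}<\epsilon/4$ and have $n_\epsilon$ samples with accuracy below $\epsilon/32$.

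\textbf{Phase II on $G$.} Arms with $\Delta_a\le\frac{3\epsilon}{4}$ dominate in empirical value initially, but eliminated arms may later be re-estimated through exploration. I bound, for each arm $a$ with $\Delta_a>3\epsilon/4$, the expected number of rounds with $g_s=a$. Such a round requires $\hat\mu_s(a)\ge\hat\mu_s(a^*)$, where $a^*$ is the (well-estimated) best surviving arm. Since $a^*$ has $\ge n_\epsilon$ samples and is only refreshed, a deviation argument over all future sample counts of $a^*$ applies. A union bound with $\beta$-type confidence intervals, anytime in $n$, shows that with probability $1-O(\epsilon^2/K)$ we have $\hat\mu(a^*)\ge\mu^*-\Delta_a/3$ forever. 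Meanwhile $\hat\mu(a)>\mu_a+\Delta_a/3$ for infinitely long only with small probability. The key subtlety is that while $a$ is greedy it is pulled with probability $\ge 1-\epsilon/4$. So the greedy run on $a$ lasts about $m$ consecutive pulls of $a$, until its mean concentrates. Summing over sample counts $m$ the probability $e^{-2m\Delta_a^2/9}$ that $a$ is still overestimated gives $O(\Delta_a^{-2})$ expected rounds as greedy. Converting pulls to rounds costs only a factor $1/(1-\epsilon/4)$.

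\textbf{Failure of $G$.} On $G^c$, Phase I lasts at most $Kn_\epsilon=O(\frac{K}{\epsilon^2}\ln\frac{K}{\epsilon^2})$ rounds. Phase II is an $\epsilon/4$-greedy algorithm with uniform exploration, so each arm gets roughly $\epsilon s/(4K)$ samples by round $s$. Then $\hat\mu(a)\ge\hat\mu(a^*)$ for a $\Delta_a>3\epsilon/4$ arm has probability about $e^{-c\epsilon s\Delta_a^2/K}$. Summing over $s$ gives $O(K/(\epsilon\Delta_a^2))$ expected bad rounds, and $O(K^2/\epsilon^3)$ overall. Multiplying by $\Pr(G^c)=O(\epsilon^2/K)$ yields a lower-order term, absorbed using $q$ and the relation $\Delta_a\le1$.

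The main obstacle is this last step: making the $G^c$ contribution genuinely small relative to $\sum\Delta_a^{-2}\ln(K/\epsilon^2)$. A secondary difficulty is keeping the anytime concentration of $\hat\mu(a^*)$ in Phase II tied to the union-bound budget $\epsilon^2/(64K^2n^2)$. If the crude bound fails, I would instead condition per arm, using the deviation probability of each arm separately at scale $\Delta_a$.
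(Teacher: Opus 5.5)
Your Phase I argument and the reduction $\{\Delta_{\pi_s}>\epsilon\}\subseteq\{\Delta_{g_s}>3\epsilon/4\}$ match the paper. The Phase II failure-event step, however, has a genuine gap.

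\textbf{The gap.} You bound the unconditional quantity $\sum_s\Pr(g_s=a)=O(K/(\epsilon\Delta_a^2))$ and then multiply by $\Pr(\overline{G})=O(\epsilon^2/K)$. This does not bound $\E[X\,\Ind{\overline{G}}]$, because the count $X$ of bad rounds and the event $\overline{G}$ are strongly dependent: bad Phase II rounds occur on exactly the trajectories where some empirical mean was misestimated. Even granting the product, your aggregated form $O(K^2/\epsilon^3)\cdot O(\epsilon^2/K)=O(K/\epsilon)$ is not absorbed by the target. For instance, a single bad arm with $\Delta_a=1$ gives a target of order $\ln(K/\epsilon^2)$. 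The same problem sits inside your Phase II analysis on $G$. Rounds with $g_s=a$ where $a$ is not overestimated require $a^*$ to be underestimated, and you leave the complement of your ``$\hat\mu(a^*)\ge\mu^*-\Delta_a/3$ forever'' event to this same invalid product.

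\textbf{The missing idea.} Combine the uniform and decaying bounds \emph{per round} rather than per trajectory.
\begin{itemize}
\item Define $G$ over all $n\ge1$. The union bound still gives $\Pr(\overline{G})\le q=\epsilon^2/(16K)$, and this also removes your secondary difficulty.
\item On this $G$, an arm $a$ eliminated at epoch $n$ satisfies $\beta_n<\Delta_a/4$. Both $a$ and $a^*$ keep at least $n$ samples forever, so $\hat\mu_t(a^*)-\hat\mu_t(a)\ge\Delta_a-2\beta_n>0$ and $a$ never regains the empirical lead. Arms surviving Phase I have $\Delta\le\epsilon/4$.
\item Hence $\{g_s=a\}\subseteq\overline{G}$ for every $a$ with $\Delta_a>3\epsilon/4$ and every $s$. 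Your pull-counting argument on $G$ becomes unnecessary, and $\Pr(g_s=a)\le\min\{q,\,6e^{-\lambda(s-1)}\}$ with $\lambda=\epsilon\Delta_a^2/(32K)$.
\item The decaying term needs a Chernoff bound on the binomial exploration counts together with a maximal (anytime) Hoeffding inequality. The sample count at round $s$ is random and data-dependent, so plain Hoeffding at ``roughly $\epsilon s/(4K)$'' samples is not rigorous.
\item Finally, $\sum_{j\ge0}\min\{q,e^{-\lambda j}\}\le q+\frac{q}{\lambda}\bigl(\ln\frac1q+2\bigr)=O\bigl(\frac{\epsilon}{\Delta_a^2}\ln\frac{K}{\epsilon^2}\bigr)$ per bad arm.
\end{itemize}
This rigorous bound carries an extra $\ln(1/q)$ relative to your heuristic product $q/\lambda$. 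That factor is harmless only because the target already contains $\ln(K/\epsilon^2)$.
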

\begin{proofsketch}
    We analyze both phases separately. We define a concentration event for the means over the entire horizon, such that the probability it is violated is $\Ocal(\epsilon^2)$.
    For the initial successive elimination phase, we show that under the good event, 
    suboptimal arms with $\Delta_a > \epsilon$ are eliminated after $\Olog\br*{{1}/{\Delta_a^2}}$ rounds. Since this phase is deterministic, the expected sample complexity is exactly the number of rounds we play arms with gap larger than $\epsilon$: bounded by $\Olog\br*{\sum_{a:\Delta_a > \epsilon}\frac{1}{\Delta_a^2}}$.

    For the $\epsilon$-greedy phase, we show that the probability of an arm $a$ with gap $\Delta_a > \frac{3\epsilon}{4}$ being the empirical maximizer is uniformly bounded across all rounds by $\Ocal(\epsilon^2)$, and also by an exponentially decreasing function of the rounds of order $e^{-\epsilon\Delta_a^2s}$, where $s$ is the current round of the phase, giving us $\Pr\br*{g_s = a} = \Ocal\br*{\min\brc*{\epsilon^2, e^{-\epsilon\Delta_a^2s}}}$. 
    In \Cref{lem:min_constant_exponential_sum}, we show that  $\Ocal\br*{\sum_{j=0}^{\infty}\min\brc*{\epsilon^2, e^{-\epsilon\Delta_a^2s}}} = \Olog\br*{\frac{1}{\Delta_a^2}}$, 
    and summing over the bad arms gives us a bound of $\Olog\br*{\sum_{\Delta_a > \frac{3\epsilon}{4}}\frac{1}{\Delta_a^2}}$ for the $\epsilon$-Greedy phase. 
    Combining the bounds for both phases yields the result.
\end{proofsketch}

We complement the ACE bound for SET$\epsilon$-G with an almost-matching lower bound, showing that \Cref{alg:known_epsilon_SE} is near-optimal in the non-uniform setting, up to logarithmic factors in $K$ and $1/\epsilon$.
\begin{restatable}{proposition}{MinMaxLowerBound}\label{thm:lower_bound_delta_and_epsilon}
Let $\delta, \epsilon > 0 $ such that $\max\brc*{\delta, 3\epsilon} \le 1/8$, and let $K \ge 2$. 
Define the bandit class $\Ecal_{\delta} = \brc*{\nu \in \Ecal_{Ber}^{K} : \Delta_{\min} \br*{\nu} \ge \delta}$.
Then for every policy $\pi$, there exists $\nu \in \Ecal_{\delta}$ such that: 
$$\E \brs*{N_{\epsilon} \br*{\pi, \nu}} 
= \Omega \br*{\frac{K}{\max \brc*{\delta ,\epsilon}^2}}.$$
\end{restatable}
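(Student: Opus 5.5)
We will prove the lower bound with a standard change-of-measure argument over a family of $K$ Bernoulli instances. Set $\gamma \triangleq \max\brc*{\delta, 3\epsilon} \le 1/8$. Define the base instance $\nu^0$ with $\mu_1 = \frac12 + \gamma$ and $\mu_a = \frac12$ for $a \ge 2$. For each $j \ge 2$, define $\nu^j$ by additionally raising arm $j$ to $\frac12 + 2\gamma$. In every instance the gaps of suboptimal arms are either $\gamma$ or $2\gamma$, both at least $\delta$, so all instances lie in $\Ecal_\delta$. Moreover, every suboptimal arm has gap at least $\gamma \ge 3\epsilon$. Hence a policy with $\Delta_{\pi_t} \le \epsilon$ must put mass at least $1 - \epsilon/\gamma \ge 2/3$ on the optimal arm at round $t$. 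We therefore have $\Ind{\Delta_{\pi_t}>\epsilon} \ge \Ind{\pi_t(a^*) < 2/3}$, and it suffices to lower-bound the expected number of rounds in which the policy does not concentrate on the optimal arm.

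Fix a policy $\pi$ and set $T \triangleq \lfloor c(K-1)/\gamma^2 \rfloor$ for a small constant $c$. Under $\nu^0$, let $M_t \triangleq \Ind{\pi_t(1) \ge 2/3}$, so that $\pi_t(j) \le 1/3$ for all $j\ge2$ on rounds where $M_t = 1$. We split into two cases.

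\emph{Case 1.} If $\E_{\nu^0}\brs*{\sum_{t\le T}(1-M_t)} \ge T/2$, then $\E[N_\epsilon(\pi,\nu^0)] \ge T/2 = \Omega(K/\gamma^2)$ and we are done.

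\emph{Case 2.} Otherwise, since $\sum_{j\ge2}\E_{\nu^0}[n_{T+1}(j)] \le T$, there is some $j \ge 2$ with $\E_{\nu^0}[n_{T+1}(j)] \le T/(K-1) \le c/\gamma^2$. Let $Z \triangleq \frac1T\sum_{t \le T} M_t\in[0,1]$. This is a function of the histories, if we include the algorithm's internal randomness in the observation process. This is harmless, since that randomness has the same law under both instances and contributes nothing to the KL divergence. By the divergence decomposition and Pinsker's inequality,
\[
\abs*{\E_{\nu^j}[Z] - \E_{\nu^0}[Z]} \le \sqrt{\tfrac12 \E_{\nu^0}[n_{T+1}(j)]\,\mathrm{kl}\br*{\tfrac12, \tfrac12+2\gamma}} \le \sqrt{\tfrac12 \cdot \tfrac{c}{\gamma^2}\cdot C\gamma^2},
\]
where we use $\mathrm{kl}(\frac12,\frac12+2\gamma) \le C\gamma^2$ for $\gamma \le 1/8$. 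Choosing $c$ small makes the right-hand side at most $1/4$. Thus $\E_{\nu^j}[Z] \ge 1/2 - 1/4 = 1/4$.

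It remains to interpret this under $\nu^j$. There the optimal arm is $j$, and $M_t=1$ forces $\pi_t(j) \le 1/3 < 2/3$, so each such round is counted in $N_\epsilon(\pi,\nu^j)$. Hence $\E[N_\epsilon(\pi,\nu^j)] \ge T\,\E_{\nu^j}[Z] \ge T/4 = \Omega(K/\gamma^2)$.

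The main obstacle is making the change-of-measure step rigorous for stochastic policies. The policy gap depends on $\pi_t$, which is a function of $\F_{t-1}$, so we need the KL bound for a bounded functional of the full history. The constants must also be tracked carefully: the $2/3$ threshold and the requirement $\gamma\le 1/8$ are what keep the KL term $O(\gamma^2)$ and the gaps at least $3\epsilon$. Finally, we need $T\ge1$, which holds since $\gamma \le 1/8$ and $c$ can be fixed independently of the other parameters.
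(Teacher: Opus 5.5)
Your proof is correct modulo a constant slip noted at the end. You use the same pair of instances as the paper: arm $1$ at $\tfrac12+\gamma$ with $\gamma=\max\{\delta,3\epsilon\}$, and the arm least played under $\nu^0$ raised to $\tfrac12+2\gamma$. Where you differ is the change-of-measure step.

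The paper works with the event $\{S_T\le T/2\}$ and adds the two bounds $\Pr_{\nu}(S_T\le T/2)\,T/2$ and $\Pr_{\nu'}(S_T>T/2)\,T/2$. It then applies Bretagnolle--Huber, which tolerates any KL value at the cost of a factor $e^{-D}$, and finally optimizes $T e^{-11T\Delta^2/(K-1)}$ over $T$.

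You instead fix $T\asymp (K-1)/\gamma^2$ in advance, with a small constant so that the KL is a small absolute constant. You then split into two cases and apply Pinsker to the bounded statistic $Z=S_T/T$. Your fixed threshold $2/3$, coming from $\epsilon/\gamma\le 1/3$, plays the role of the paper's $(\Delta-\epsilon)/\Delta$. It also gives a uniform implication ``$\pi_t(a^*)<2/3\Rightarrow \Delta_{\pi_t}>\epsilon$'' in every instance of the family.

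Both routes give $\Omega(K/\max\{\delta,\epsilon\}^2)$. Bretagnolle--Huber is slightly more robust, since nothing needs to be small. Your route is marginally more elementary but forces you to balance constants, and that is where the slip occurs.

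Your final claim that $T\ge 1$ holds because $c$ can be fixed independently is not right for the thresholds you chose. Getting a total-variation bound of $1/4$ requires $c\cdot\sup_{\gamma\le1/8}\mathrm{kl}(\tfrac12,\tfrac12+2\gamma)/\gamma^2\le 1/8$. Since $\mathrm{kl}(\tfrac12,\tfrac34)=\tfrac12\ln\tfrac43\approx 9.2/64$, this forces $c<1/64$. Then at $K=2$, $\gamma=1/8$ you get $T=\lfloor 64c\rfloor=0$.

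The fix is trivial. Take $c=1/64$, so that $(K-1)/(64\gamma^2)\ge 1$, hence $T\ge1$, and the floor loses at most a factor $2$. Bound the divergence by $\frac{T}{K-1}\cdot\frac{32\gamma^2}{3}\le\frac16$ using the paper's Bernoulli KL fact. Then settle for $\E_{\nu^j}[Z]\ge \tfrac12-\tfrac{1}{\sqrt{12}}>\tfrac15$ in place of $\tfrac14$.
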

\begin{proofsketch}(see \cref{appendix:lower_bounds} for the full proof)
    Our lower bound design methodology closely follows Theorem 15.2 in \cite{BanditAlgsTor2017}.
    We let $\Delta = \max\brc*{\delta, 3\epsilon}$, fix some $T \in \N$ to be chosen later, and create two instances, the first one $\nu$ is defined as
    \begin{align*}
        \mu_{\nu}(a) 
        = \begin{cases}
        1/2 + \Delta & a = 1 \\
        1/2 & a \ne 1
        \end{cases},
    \end{align*} 
    and the second instance $\nu'$ is such that the arm that is played least on $\nu$ in expectation is set to $1/2 + 2\Delta$, with the rest of the arms remaining unchanged.
    We also let $\E\brs*{N_\epsilon \br*{\pi, \nu, T}} = \E\brs*{\sum_{t=1}^{T} \Ind{\Delta_{\pi_t} > \epsilon}}$ be the $T$-expected sample complexity. 
    Finally, we define $A$ to be the event that the number of times $\pi_t(1)$ was the dominant probability over the horizon is smaller than $T/2$. This implies that the $T$-expected sample complexity in $\nu$ is at least $\Pr_{\nu}(A)\cdot \frac{T}{2}$, and the $T$-expected sample complexity in $\nu'$ is at least $\Pr_{\nu'}\br*{\overline{A}}\cdot \frac{T}{2}$. Apply the standard Bretagnolle-Huber approach (Theorem 14.2 in \cite{BanditAlgsTor2017}) gives us
    \begin{align*}
        \max \brc*{\E_{\nu} \brs*{N_{\epsilon} \br*{\pi, \nu, T}},\E_{\nu'} \brs*{N_{\epsilon} \br*{\pi, \nu', T}}} 
        = \Omega\br*{Te^{-\frac{T\Delta^2}{K}}}
    \end{align*}
    To get the desired result, we note that $\E\brs*{N_\epsilon \br*{\pi, \nu}} \ge \E\brs*{N_\epsilon \br*{\pi, \nu, T}}$ for all $T$, and choose $T^*$ that maximizes the above lower bound.
\end{proofsketch}
Notably, this lower bound holds even for algorithms that obtain both $\epsilon$ and $\delta$ as input, whereas our algorithm assumes access only to $\epsilon$. Moreover, choosing $\delta = 3\epsilon$, we immediately obtain the following lower bound for when only $\epsilon$ is known:
\begin{corollary}\label{thm:lower_bound_epsilon}
Let $\epsilon \in (0,1/24]$ be given. Then for every $\pi$, there exists $\nu \in \Ecal_{Ber}^{K}$ such that
\begin{align*}
    \E \brs*{N_{\epsilon} \br*{\pi, \nu}} 
    = \Omega \br*{{K}/{{\epsilon^2}}}.
\end{align*}
\end{corollary}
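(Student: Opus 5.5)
We obtain the corollary as a direct specialization of \Cref{thm:lower_bound_delta_and_epsilon}, taking $\delta = 3\epsilon$. The plan is to check that this choice meets the hypotheses of the proposition, simplify the resulting bound, and then drop the restriction to $\Ecal_\delta$.

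First we check the hypotheses. With $\epsilon \in (0, 1/24]$ we have $3\epsilon \le 1/8$, so $\max\brc*{\delta, 3\epsilon} = 3\epsilon \le 1/8$; also $\delta = 3\epsilon > 0$. For $K \ge 2$ the proposition therefore applies and yields, for every policy $\pi$, an instance $\nu \in \Ecal_{3\epsilon} \subseteq \Ecal_{Ber}^{K}$ with
\[
\E\brs*{N_\epsilon(\pi,\nu)} = \Omega\br*{\frac{K}{\max\brc*{3\epsilon,\epsilon}^2}} = \Omega\br*{\frac{K}{9\epsilon^2}} = \Omega\br*{\frac{K}{\epsilon^2}}.
\]
The constant $1/9$ is absorbed into $\Omega(\cdot)$.

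Second, we note that the policy $\pi$ may be arbitrary. In particular it may depend on $\epsilon$, and even on $\delta$, since the proposition quantifies over all policies. The bound therefore covers every non-uniform algorithm that knows only $\epsilon$. Because $\Ecal_{3\epsilon}$ is a subclass of the $K$-armed Bernoulli bandits, the instance it produces lies in $\Ecal_{Ber}^{K}$, as the corollary requires.

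There is no substantive obstacle here; the only care needed is the parameter check $3\epsilon \le 1/8$. That check is exactly why the corollary's range is $\epsilon \le 1/24$.
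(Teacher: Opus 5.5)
Your proposal is correct and is exactly the paper's argument: set $\delta = 3\epsilon$ in \Cref{thm:lower_bound_delta_and_epsilon}, use $\epsilon \le 1/24$ to get $\max\{\delta, 3\epsilon\} \le 1/8$, and absorb the factor $1/9$ into the $\Omega(\cdot)$. You were also right to make $K \ge 2$ explicit, since the corollary is false for $K = 1$; this is also what lets the proposition's $K-1$ factor be read as $\Omega(K)$.
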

\Cref{thm:lower_bound_delta_and_epsilon} highlights a delicate tradeoff when $\epsilon$ is known. If $\epsilon$ is large compared to the instance gaps, one should explore only to $\epsilon$-accuracy. On the other hand, if $\epsilon$ is very small, then it is better to aggressively explore to identify optimal arms.

\subsection{Uniform Algorithms}
In contrast to the non-uniform regime, algorithms that do not optimize for a given suboptimality level can enjoy the full benefit of the U-ACE framework, with expected regret guarantees (\Cref{thm:UACE_implies_expected_regret}) and almost sure convergence (\Cref{thm:U-ACE_implies_AS_convergence}). We show that these benefits come at the cost of worse expected sample complexity bounds, compared to non-uniform algorithms.

Noting that this regime is strictly harder than the non-uniform regime, the previous lower bounds also hold for uniform algorithms.
Here, we present more refined lower bounds for uniform algorithms, showing the setting has an intrinsic polynomial dependence on $1/\epsilon$.
 Recall that a policy $\pi$ is consistent over bandit class $\Ecal$ if for all $\nu \in \Ecal$ and $p > 0$:
    $\lim_{T \rightarrow \infty} \frac{R_T \br*{\pi, \nu}}{T^{p}} = 0.$
\begin{restatable}{proposition}{UACEAndConsistentLowerBound}\label{thm:U-ACE_and_consistent_lower_bound}
    Let $\pi$ be U-ACE and consistent over $\Ecal_{Ber}^{K}$. Then for any fixed instance $\nu \in \Ecal_{Ber}^{K}$ with means in $[1/4, 3/4]$, it holds that 
    \begin{align*}
        \limsup_{\epsilon\downarrow 0} \epsilon \cdot \E\brs*{N_{\epsilon}\br*{\pi, \nu}} \ge \sum_{a:\Delta_a > 0}\frac{3}{8\Delta_a}.
    \end{align*}
\end{restatable}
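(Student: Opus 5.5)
The plan is to combine the classical Lai--Robbins asymptotic lower bound for consistent policies with a link between the expected number of pulls of suboptimal arms and the expected sample complexity at small $\epsilon$. For a consistent policy on $\Ecal_{Ber}^{K}$ (Theorem 16.2 in \cite{BanditAlgsTor2017}), every suboptimal arm $a$ satisfies
\begin{align*}
    \liminf_{T\to\infty} \frac{\E\brs*{n_{T+1}(a)}}{\ln T} \ge \frac{1}{\mathrm{KL}(\mu_a,\mu^*)}.
\end{align*}
Because the means lie in $[1/4,3/4]$, we have $\mathrm{KL}(\mu_a,\mu^*) \le \frac{\Delta_a^2}{\mu^*(1-\mu^*)} \le \frac{16}{3}\Delta_a^2$. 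Hence $R_T(\pi,\nu) = \sum_a \Delta_a \E\brs*{n_{T+1}(a)} \ge (1-o(1))\ln T \sum_{a:\Delta_a>0}\frac{3}{16\Delta_a}$. The constant $3/8 = 2\cdot 3/16$ suggests that a factor $2$ should come from the conversion between regret and sample complexity, described next.

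The conversion runs the proof of \Cref{thm:UACE_implies_expected_regret} in reverse, in layer-cake form. By \Cref{thm:expected_regret_equivalent_for_policy_gap} and $\Delta_{\pi_t}\le 1$,
\begin{align*}
    R_T(\pi,\nu) = \E\brs*{\sum_{t=1}^T \Delta_{\pi_t}} = \int_0^1 \E\brs*{\sum_{t=1}^T \Ind{\Delta_{\pi_t}>\epsilon}}\,d\epsilon \le \int_0^1 \min\brc*{T, \E\brs*{N_\epsilon(\pi,\nu)}}\,d\epsilon .
\end{align*}
Suppose for contradiction that $\limsup_{\epsilon\downarrow0}\epsilon\,\E\brs*{N_\epsilon} < c < \sum_a \frac{3}{8\Delta_a}$. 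Then there is an $\epsilon_0$ with $\E\brs*{N_\epsilon}\le c/\epsilon$ for all $\epsilon<\epsilon_0$. For $\epsilon\ge\epsilon_0$, U-ACE gives $\E\brs*{N_\epsilon}\le \E\brs*{N_{\epsilon_0}}<\infty$, since $N_\epsilon$ is monotone in $\epsilon$. Splitting the integral at $c/T$ and $\epsilon_0$ yields
\begin{align*}
    R_T \le c + c\ln\br*{\frac{T\epsilon_0}{c}} + O(1) = c\ln T + O(1).
\end{align*}
This upper bound has constant $c$, whereas the Lai--Robbins bound has constant $\sum_a\frac{3}{16\Delta_a}$, so the naive comparison is off by exactly a factor $2$. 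The naive contradiction therefore only reaches $\sum_a\frac{3}{16\Delta_a}$.

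To recover the missing factor $2$, I would refine the analysis arm by arm rather than through the aggregate regret. The idea is that being $\epsilon$-suboptimal is counted at the level of policies, and $\Delta_{\pi_t}\ge \Delta_a\pi_t(a)$. A round in which $\pi_t(a)>\epsilon/\Delta_a$ is therefore counted in $N_\epsilon$. So I would bound
\begin{align*}
    \E\brs*{n_{T+1}(a)} = \E\brs*{\sum_t \pi_t(a)} \le \int_0^1 \E\brs*{\#\brc*{t\le T: \pi_t(a)>u}}\,du
\end{align*}
and substitute $u=\epsilon/\Delta_a$. This gives $\E\brs*{n_{T+1}(a)}\le \frac{1}{\Delta_a}\int_0^{\Delta_a}\min\brc*{T,\E\brs*{N_\epsilon}}\,d\epsilon \approx \frac{c\ln T}{\Delta_a}$. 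Comparing this against $\E\brs*{n_{T+1}(a)}\gtrsim\frac{3\ln T}{16\Delta_a^2}$ forces $c\gtrsim \frac{3}{16\Delta_a}$ for each arm. The remaining task, and the main obstacle, is to sum these per-arm constraints. The issue is that $N_\epsilon$ is shared across arms, so I would need to exploit that several arms simultaneously having large $\pi_t(a)$ makes $\Delta_{\pi_t}$ larger. My first attempt would use $\sum_a\Delta_a\pi_t(a)=\Delta_{\pi_t}$ with a careful weighting, together with the sharper Bernoulli bound $\mathrm{KL}\le \Delta^2/(\mu_a(1-\mu_a))$ evaluated over the range $[1/4,3/4]$, to close the gap to $3/8$.

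Admittedly, the constant $3/8$ is exactly double $3/16$, so it is possible that the intended argument rests on a looser or different KL bound than the one used here. If the refinement fails, I would settle for the regret-based contradiction, which yields $\sum_a\frac{3}{16\Delta_a}$, and then trace where the missing factor of $2$ arises.
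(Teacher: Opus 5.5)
Your core argument is the paper's argument. You bound $R_T(\pi,\nu)\le\int_0^1\min\{T,\E\brs*{N_\epsilon(\pi,\nu)}\}\,d\epsilon$, use the contradiction hypothesis below $\epsilon_0$ and U-ACE above it to get $R_T\le c\ln T+O(1)$, and contradict Lai--Robbins. However, as written you complete this only for the constant $\sum_a\frac{3}{16\Delta_a}$, so the stated bound is not proved. The factor $2$ is lost entirely in your KL estimate: the $\chi^2$ bound $d(\mu_a,\mu^*)\le\Delta_a^2/(\mu^*(1-\mu^*))\le\frac{16}{3}\Delta_a^2$ is loose by exactly $2$. For fixed $q$, the map $p\mapsto d(p,q)$ vanishes together with its derivative at $p=q$. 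Its second derivative is $\frac{1}{p(1-p)}\le\frac{16}{3}$ on $[1/4,3/4]$. Taylor's theorem, with the intermediate point in $[1/4,3/4]$, then gives $d(\mu_a,\mu^*)\le\frac{8}{3}\Delta_a^2$; this is \Cref{fact:Ber_KL_bound}, which the paper uses. Hence $\Delta_a/d(\mu_a,\mu^*)\ge\frac{3}{8\Delta_a}$, and Lai--Robbins gives $\liminf_{T}R_T/\ln T\ge\sum_{a:\Delta_a>0}\frac{3}{8\Delta_a}>c$. So your ``naive'' contradiction closes with exactly the stated constant. The KL bound the argument needs is tighter than yours, not looser.

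Drop the per-arm refinement: it cannot recover the factor $2$, or even the sum over arms. You bound $\Delta_a\E\brs*{n_{T+1}(a)}\le\int_0^{\Delta_a}\min\{T,\E\brs*{N_\epsilon(\pi,\nu)}\}\,d\epsilon$ separately for each arm and compare with the per-arm Lai--Robbins rate. That yields only $c\ge\max_a\frac{3}{16\Delta_a}$ with your KL constant, or $\max_a\frac{3}{8\Delta_a}$ with the correct one. The reason is that each arm's inequality spends the same budget $\int\min\{T,\E\brs*{N_\epsilon(\pi,\nu)}\}\,d\epsilon$. The aggregate identity $R_T=\E\brs*{\sum_{t\le T}\Delta_{\pi_t}}$ (\Cref{thm:expected_regret_equivalent_for_policy_gap}), which you already used, is exactly what lets all arms share that budget and produces the sum.
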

This lower bound shows that the instance-dependent bound for consistent U-ACE algorithms is larger than the proven upper bound for the non-uniform SET$\epsilon$-G algorithm.
The following proposition shows that a polynomial dependence on $1/\epsilon$ is still present even if we lift the consistency assumption:
\begin{restatable}{proposition}{UACEWithoutConsistencyLowerBound}\label{thm:U-ACE_without_consistency_lower_bound}
    Let $\pi$ be U-ACE over $\Ecal_{Ber}^{2}$. Then there exists an instance $\nu \in \Ecal_{Ber}^{2}$ such that for any $\alpha \in (0,1)$:
    \begin{align*}
       \limsup_{\epsilon\downarrow 0} \epsilon^{\alpha} \cdot \E\brs*{N_{\epsilon}(\pi,\nu)} = \infty.
    \end{align*} 
\end{restatable}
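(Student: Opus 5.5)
The plan is a contradiction argument: if the statement failed for every instance, the policy would have bounded expected regret on every two-armed Bernoulli instance, which the classical consistency lower bound rules out.

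\textbf{Step 1: reduce to an integral condition.} The key quantity is $I(\nu) \triangleq \int_0^1 \E\brs*{N_\epsilon(\pi,\nu)}\, d\epsilon$. I claim it suffices to find one $\nu \in \Ecal_{Ber}^2$ with $I(\nu) = \infty$. Fix such a $\nu$ and any $\alpha \in (0,1)$, and suppose $\limsup_{\epsilon\downarrow 0}\epsilon^\alpha \E\brs*{N_\epsilon(\pi,\nu)} < \infty$. Then there are $C,\epsilon_0>0$ with $\E\brs*{N_\epsilon(\pi,\nu)} \le C\epsilon^{-\alpha}$ on $(0,\epsilon_0]$, and this is integrable near $0$ because $\alpha<1$. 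On $[\epsilon_0,1]$, $N_\epsilon$ is nonincreasing in $\epsilon$, so $\E\brs*{N_\epsilon(\pi,\nu)} \le \E\brs*{N_{\epsilon_0}(\pi,\nu)}$. The latter is finite because $\pi$ is U-ACE. Together these give $I(\nu)<\infty$, a contradiction. Since this holds for each $\alpha$, the same instance works for all $\alpha\in(0,1)$ simultaneously, as the quantifiers in the statement require.

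\textbf{Step 2: finite $I(\nu)$ means bounded regret.} Since $\Delta_{\pi_t} \in [0,1]$, we can write $\Delta_{\pi_t} = \int_0^1 \Ind{\Delta_{\pi_t} > \epsilon}\, d\epsilon$. Combining \Cref{thm:expected_regret_equivalent_for_policy_gap} with Tonelli's theorem (everything is nonnegative) gives, for every $T$,
\begin{align*}
R_T(\pi,\nu) = \E\brs*{\sum_{t=1}^T \Delta_{\pi_t}} \le \E\brs*{\sum_{t=1}^\infty \int_0^1 \Ind{\Delta_{\pi_t}>\epsilon}\,d\epsilon} = \int_0^1 \E\brs*{N_\epsilon(\pi,\nu)}\,d\epsilon = I(\nu).
\end{align*}
So if $I(\nu)<\infty$, the regret on $\nu$ is bounded uniformly in $T$.

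\textbf{Step 3: conclude by contradiction.} Suppose, contrary to Step 1, that $I(\nu)<\infty$ for every $\nu\in\Ecal_{Ber}^2$. By Step 2, $\sup_T R_T(\pi,\nu)<\infty$ for every instance, so $R_T/T^p \to 0$ for all $p>0$; that is, $\pi$ is consistent over $\Ecal_{Ber}^2$. Now take an instance with distinct means in $(0,1)$, e.g.\ means $1/2$ and $3/4$. The Lai--Robbins-type lower bound for consistent policies (Theorem 16.2 in \cite{BanditAlgsTor2017}) gives $\liminf_T R_T(\pi,\nu)/\ln T \ge \Delta/\mathrm{KL}(1/2,3/4) > 0$. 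This contradicts bounded regret, so some $\nu$ has $I(\nu)=\infty$, and Step 1 finishes the proof.

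The main point needing care is Step 3. The cited bound requires consistency over the whole class, which we obtain only by assuming the negation for all instances simultaneously, so the contradiction must be set up exactly in that form. The remaining details are routine: measurability of $\epsilon\mapsto\E\brs*{N_\epsilon}$ follows from its monotonicity, and the interchange of sum, integral and expectation is justified by Tonelli since all terms are nonnegative.
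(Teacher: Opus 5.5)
Your proof is correct. Your Step 1, combined with the identity in Step 2, is essentially the paper's closing argument: $\int_0^{\epsilon_0}\E\brs*{N_\epsilon(\pi,\nu)}\,d\epsilon=\infty$ cannot coexist with an integrable bound $C\epsilon^{-\alpha}$ for $\alpha<1$, while U-ACE controls the integral away from $0$.

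Where you differ is in producing an instance with $\E\brs*{\sum_{t}\Delta_{\pi_t}}=\infty$. The paper works directly on the two concrete instances $(Ber(1/2),Ber(1/4))$ and $(Ber(1/2),Ber(3/4))$. On the second instance, either arm $1$ is pulled infinitely often with positive probability, or arm $2$ is pulled infinitely often almost surely. In the latter case, a stopping-time change-of-measure argument (Lemma 1 of \cite{kaufmann2016complexity}) transfers this to the first instance, where arm $2$ is suboptimal.

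You instead argue by contradiction: bounded regret on every instance would make $\pi$ consistent, and then the Lai--Robbins bound forces $R_T\to\infty$ on any instance with a gap and $\mu^*<1$. This is the same consistent/inconsistent dichotomy the paper uses for Claim \ref{thm:deterministic_alg_sublinear_regret_and_U-ACE}. It is shorter and reuses a black-box result the paper already states.

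The price is that your argument is non-constructive. The paper's argument localizes the bad instance to one of two explicit instances, and it needs only a basic change-of-measure inequality between arms with finite KL divergence rather than the full asymptotic lower bound.
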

Proofs for \Cref{thm:U-ACE_and_consistent_lower_bound,thm:U-ACE_without_consistency_lower_bound} 
can be found in \Cref{appendix:U-ACE_and_consistent_lower_bound,appendix:U-ACE_without_consistency_lower_bound}, respectively.
These lower bounds show a clear separation from the non-uniform setting.

Next, we analyze the expected sample complexity of Thompson sampling, a widely used algorithm in practice known for its strong empirical performance and robustness, and show that it is U-ACE.

Thompson sampling works by maintaining a prior over the means of all arms, and at each round drawing independent samples from each prior. The arm with the largest sample is chosen, and the observed reward is then used to update the posterior of that arm. The algorithm works well in practice even when the problem is not sampled from the prior distribution, and enjoys near-optimal expected regret guarantees \citep{AgrawalGoyalTS2017}. We show the following: 
\begin{algorithm}[t]
\caption{Thompson sampling with normal priors}
\label{alg:TS_N}
\begin{algorithmic}[1]
\STATE Initialize $n_1(a) = 0, S_1(a) = 0, \hat{\mu}_1(a) = 0$ for all $a \in \brs*{K}$
\FOR{rounds $t = 1, 2, \ldots$}{
    \FOR{$a = 1, \ldots, K$}{
          \STATE Sample $\theta_t(a) \sim \Ncal \br*{\hat{\mu}_t (a), \frac{1}{1 + n_t(a)}}$   
    }\ENDFOR
    \STATE Play $a_t \in \argmax_{a'} \theta_t(a')$ (break ties arbitrarily) and observe reward $X_t$
    \STATE Update $n_{t+1}(a_t) = n_{t}(a_t) + 1, S_{t+1}(a_t) = S_t (a_t) + X_t, \hat{\mu}_{t+1}(a_t) = \frac{S_{t+1}(a_t)}{1 + n_{t+1}(a_t)}$
    \STATE For all $a \ne a_t$, set 
    $n_{t+1}(a) = n_{t}(a), S_{t+1}(a) = S_t (a), \hat{\mu}_{t+1}(a) = \hat{\mu}_{t}(a)$
}\ENDFOR
\end{algorithmic}
\end{algorithm}
\begin{restatable}{theorem}{thompsonSamplingIsUACE}\label{thm:thompson_sampling_is_U-ACE}
Let $\pi^{TS \br*{\Ncal}}$ be the policy induced by \Cref{alg:TS_N}.  
Then for all $\epsilon > 0$ and instances $\nu$:
{\small\begin{align*}
    \E \brs*{N_{\epsilon} \br*{\pi^{TS \br*{\Ncal}}, \nu}} 
= {\Ocal} \br*{ \frac{K}{\epsilon} 
\sum_{\Delta_a \ge \frac{\epsilon}{K}} \frac{1}{\Delta_a}{\ln \br*{\frac{3K \Delta_a}{\epsilon}}}}.
\end{align*}}
\end{restatable}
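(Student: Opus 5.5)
Our plan is to reduce the event $\{\Delta_{\pi_t} > \epsilon\}$ to events about individual arms, and then reuse the standard Agrawal--Goyal analysis of Gaussian Thompson sampling \citep{AgrawalGoyalTS2017}.

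\textbf{Reduction to per-arm events.} Let $B \triangleq \{a : \Delta_a \ge \epsilon/K\}$. Arms outside $B$ contribute at most $\sum_{a\notin B}\pi_t(a)\Delta_a < \epsilon/K \cdot K = \epsilon$, so
\[
\Delta_{\pi_t} \le \epsilon/K\,\cdot\,\Ind{\cdot}+\sum_{a\in B}\pi_t(a)\Delta_a .
\]
We use a slightly sharper split so there is room: if $\Delta_{\pi_t} > \epsilon$, then some $a \in B$ has $\pi_t(a)\Delta_a > \epsilon/K$ after a pigeonhole over at most $K$ terms, with the small-gap arms absorbing the slack. To make this clean we take the threshold for $B$ to be $\epsilon/(2K)$ internally, which only changes constants. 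This gives
\[
\E[N_\epsilon] \le \sum_{a\in B}\E\Bigl[\sum_t \Ind{\pi_t(a) > \epsilon/(K\Delta_a)}\Bigr].
\]
Markov's inequality applied pathwise,
\[
\Ind{\pi_t(a)>c} \le \pi_t(a)/c,
\]
is too lossy on its own, since $\E\sum_t\pi_t(a)=\E[n_\infty(a)]=\infty$. We therefore split according to the good events of the TS analysis.

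\textbf{Per-arm decomposition.} For each $a\in B$, fix thresholds $x_a=\mu_a+\Delta_a/3$ and $y_a=\mu^*-\Delta_a/3$. Let $E^\mu_t(a)=\{\hat\mu_t(a)\le x_a\}$ and $E^\theta_t(a)=\{\theta_t(a)\le y_a\}$. Write $\pi_t(a)\le \Pr(E^\theta_t(a)^c\mid\F_{t-1})+\Pr(a_t=a, E^\theta_t(a)\mid\F_{t-1})$ and bound each piece's indicator separately.

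\emph{First term.} On $E^\mu_t(a)$, the Gaussian tail gives $\Pr(E^\theta_t(a)^c\mid\F_{t-1})\le e^{-n_t(a)\Delta_a^2/18}$. This exceeds $c_a=\epsilon/(2K\Delta_a)$ only while $n_t(a)\lesssim \Delta_a^{-2}\ln(K\Delta_a/\epsilon)$. We need to count \emph{rounds}, not pulls, during which $n_t(a)$ stays small. In such rounds, however, $\pi_t(a)$ itself is large, at least of order $c_a$, so arm $a$ is pulled at rate $\ge c_a$. Hence the expected number of rounds spent at each value of $n_t(a)$ is $\le 1/c_a$. This gives $\Ocal\bigl(\frac{K\Delta_a}{\epsilon}\cdot\frac{1}{\Delta_a^2}\ln\frac{3K\Delta_a}{\epsilon}\bigr)$, which is exactly the target term. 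The complementary event $E^\mu_t(a)^c$ occurs for $\Ocal(1/\Delta_a^2)$ expected pulls by Hoeffding, and the same rate argument gives $\Ocal(K/(\epsilon\Delta_a))$ rounds.

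\emph{Second term.} Here we use the Agrawal--Goyal lemma
\[
\Pr(a_t=a,E^\theta_t(a)\mid\F_{t-1})\le \frac{1-p_t}{p_t}\Pr(a_t=1,E^\theta_t(a)\mid\F_{t-1}),
\]
where $p_t=\Pr(\theta_t(1)>y_a\mid\F_{t-1})$. The indicator that this quantity exceeds $c_a$ is bounded by $\frac{1}{c_a}\frac{1-p_t}{p_t}\Pr(a_t=1\mid\cdot)$. Summing over $t$ turns this into a sum over pulls $j$ of the optimal arm of $\E[(1-p)/p]$ at $j$ pulls. The known bound $\sum_j \E[1/p_{\tau_j}-1]=\Ocal(\Delta_a^{-2}\ln(1/\Delta_a))$, or the sharper $\Ocal(1/\Delta_a^2)$ version with the $e^{-j\Delta^2}$ tails, then gives $\Ocal\bigl(\frac{K\Delta_a}{\epsilon}\cdot\frac{1}{\Delta_a^2}\bigr)$ up to logs. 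Summing over $a\in B$ yields the claimed bound.

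\textbf{Main obstacle.} The delicate step is converting bounds on pulls into bounds on rounds. We must show rigorously that while the indicator is on, arm $a$ (or arm 1, in the second term) is pulled with conditional probability of order $c_a$. This requires a stopping-time/optional-sampling argument: the number of rounds with $\pi_t(a)\ge c$ before the next pull of $a$ is dominated by a geometric random variable with mean $1/c$. We also need to keep the logarithm at $\ln(3K\Delta_a/\epsilon)$ rather than $\ln T$, which is why the exceedance threshold $c_a$ must be tied to $\Delta_a$ as above.
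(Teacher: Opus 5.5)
Your architecture is the paper's. You use the pigeonhole reduction to the events $\{\pi_t(a) > \epsilon/(K\Delta_a)\}$; the $\epsilon/(2K)$ adjustment is unnecessary, since this event is empty when $\Delta_a < \epsilon/K$. You then split by $E^\mu_t(a)$ and $E^\theta_t(a)$, use Markov's inequality to pass from rounds to pulls, and apply the Agrawal--Goyal change of measure to a fixed optimal arm, which as you use it needs no uniqueness of the optimal arm. Your first term is correct and is essentially the paper's $S_1$ and $S_2$. The ``rounds versus pulls'' step you call the main obstacle is a one-liner with no stopping times: for any $\F_{t-1}$-measurable event $F_t$, $\Ind{\pi_t(a) > c}\Ind{F_t} \le \pi_t(a)\Ind{F_t}/c$ and $\E[\pi_t(a)\Ind{F_t}] = \Pr(A_t = a, F_t)$. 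So the number of such rounds is at most $\frac{1}{c}\E\sum_t \Ind{A_t = a, F_t}$.

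The genuine gap is your second term, which is where the real difficulty lies.

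\textbf{What you need.} After the change of measure you need a \emph{horizon-free} estimate $\sum_{k \ge 0}\E[\tilde q_{a,k}] = \Ocal(1/\Delta_a^2)$ for the optimal arm's Gaussian posterior $\Ncal(\hat\mu^*_k, 1/(1+k))$. You assert this as known, but it is not available off the shelf.

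\textbf{Why the cited bounds do not supply it.} The Agrawal--Goyal estimate for Gaussian TS is tied to a horizon $T$. It gives a uniform constant bound on $\E[1/p - 1]$, improved to order $1/(T\Delta_a^2)$ only after order $\ln(T\Delta_a^2)/\Delta_a^2$ pulls of the optimal arm. With any fixed $T$ the sum over all $k$ therefore diverges, and re-deriving it with a $k$-dependent confidence level is precisely the missing argument. The $\Delta_a^{-2}\ln(1/\Delta_a)$ form you quote is not established for this posterior either. Even if it were, it would give a term $\frac{K}{\epsilon\Delta_a}\ln\frac{1}{\Delta_a}$, which is not $\Ocal\left(\frac{K}{\epsilon\Delta_a}\ln\frac{3K\Delta_a}{\epsilon}\right)$: take $\Delta_a = \epsilon/K$ and let $\epsilon \to 0$. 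So ``up to logs'' does not give the stated bound.

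\textbf{The missing idea: summable tails.} The paper splits on $H_{a,k} = \{\hat\mu^*_k \ge y_a + \Delta_a/6\}$. On $H_{a,k}$ the Gaussian tail gives $\tilde q_{a,k} \le e^{-\Delta_a^2(k+1)/72}$. The complement is handled by Cauchy--Schwarz, $\E[\tilde q_{a,k}\Ind{\overline{H_{a,k}}}] \le \sqrt{\E[\tilde q_{a,k}^2]\,\Pr(\overline{H_{a,k}})}$, with $\Pr(\overline{H_{a,k}}) \le 2e^{-k\Delta_a^2/72}$. This requires a bound on the \emph{second} moment of $1/\tilde p_{a,k}$ that is uniform in $k$; a first-moment bound cannot exploit the small probability of $\overline{H_{a,k}}$. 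The paper obtains it by dominating $1/\tilde p_{a,k}^2$ by the conditional second moment of a geometric variable $G_k$. It then shows $\Pr(G_k > r) \le 2r^{-5/2}$ for all large $r$, uniformly in $k$.

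Without an argument of this kind, your second term is not shown to be finite over the infinite horizon, let alone of the claimed order.
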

We defer the full proof to \cref{appendix:thompson_sampling_is_U-ACE}.
\begin{proofsketch}
    We first aim to manipulate the sample complexity measure to a form that allows using decompositions similar to \citep{AgrawalGoyalTS2017}. In particular, 
    each summand of the expected sample complexity can be decomposed as a sum over the probability of choosing each arm:
    \begin{lemmaof}{lem:esier_event_to_analyze_TSN}
        \begin{align*}
            \Pr \br*{\Delta_{\pi_t} > \epsilon} 
            \le \sum_{a : \Delta_a \ge \frac{\epsilon}{K}} 
            \Pr \br*{\pi_t(a) > \frac{\epsilon}{K \Delta_a}}.
        \end{align*}
    \end{lemmaof}
    Combined with the Fubini-Tonelli Theorem, this yields
    \begin{align*}
    \E\brs*{\sum_{t=1}^{\infty}{\Ind{\Delta_{\pi_t}>\epsilon}}}
    \le \sum_{\Delta_a \ge \frac{\epsilon}{K}}{\sum_{t=1}^{\infty}\Pr \br*{\pi_t(a) > \frac{\epsilon}{K \Delta_a}}}.
    \end{align*}
    Fixing arm $a$ with $\Delta_a \ge \frac{\epsilon}{K}$, we decompose the inner sum into 3 terms:
    {\small
    \begin{align*}
        &{\sum_{t=1}^{\infty}\Pr \br*{\pi_t(a) > \frac{\epsilon}{K \Delta_a}}} \\
        &\le \underbrace{\sum_{t=1}^{\infty}\Pr\br*{\Pr\br*{A_t = a, \overline{E_{a}^{\mu}\br*{t}} \vert \F_{t-1}} > \frac{\epsilon}{3K \Delta_a}}}_{S_1}\\
        & + \underbrace{\sum_{t=1}^{\infty}\Pr\br*{\Pr\br*{A_t = a, E_{a}^{\mu}\br*{t}, \overline{E_{a}^{\theta}\br*{t}} \vert \F_{t-1}} > \frac{\epsilon}{3K \Delta_a}}}_{S_2} \\
        & + \underbrace{\sum_{t=1}^{\infty}\Pr\br*{\Pr\br*{A_t = a, E_{a}^{\mu}\br*{t}, E_{a}^{\theta}\br*{t} \vert \F_{t-1}} > \frac{\epsilon}{3K \Delta_a}}}_{S_3},
    \end{align*}}
    where $E_a^\mu(t) \triangleq \brc*{\hat\mu_t(a)\le x_a}$ and $E_a^\theta(t) \triangleq \brc*{\theta_t(a)\le y_a}$ are the events that the empirical means and the prior samples are not overestimated, respectively. Then, we bound terms $S_1$ and $S_2$ using similar methods to \cite{AgrawalGoyalTS2017}. We remark that parts of these derivations require more careful treatment than the original proof, since the sample complexity is defined for an infinite horizon (in contrast to regret); we defer these details to the full proof.

    To bound $S_3$, the approach from \cite{AgrawalGoyalTS2017} does not trivially generalize to the infinite horizon setting. Moreover, their proof relies on the uniqueness of the optimal arm, which can be assumed w.l.o.g. for regret minimization. Technically, it was proven by showing that adding an optimal arm decreases the regret, since whenever TS plays it, this arm incurs no regret. However, this is no longer the case in our setting; in \Cref{sec:optimal_arms_TS}, we provide a numerical example in which adding an optimal arm strictly increases the finite-time expected sample complexity. To deal with this, we fix some optimal arm $a^*$ and define
    \begin{align*}
        p_{a,t} \triangleq \Pr\br*{\theta_{t}(a^*)>y_a\vert \F_{t-1}},
        \qquad
        q_{a,t}\triangleq\frac{1-p_{a,t}}{p_{a,t}}.
    \end{align*}
    Then, similarly to \cite{AgrawalGoyalTS2017}, we show that
    \begin{align*}
            &\Pr\br*{A_t = a, E_{a}^{\mu}\br*{t}, E_{a}^{\theta}\br*{t} \vert \F_{t-1}}\\
            &\qquad\qquad\le q_{a,t}\Pr\br*{A_t = a^*, E_{a}^{\mu}\br*{t}, E_{a}^{\theta}\br*{t} \vert \F_{t-1}}.
    \end{align*}
    Applying Markov's inequality and the above yields
    \begin{align*}
        \sum_{t=1}^{\infty}&\Pr\br*{\Pr\br*{A_t = a, E_{a}^{\mu}\br*{t}, E_{a}^{\theta}\br*{t} \vert \F_{t-1}} > \frac{\epsilon}{3K \Delta_a}}\\
        &\le \frac{3K \Delta_a}{\epsilon}\E\brs*{\sum_{t=1}^{\infty}q_{a,t}\Ind{A_t = a^*}},
    \end{align*}
    and noting that $q_{a,t}$ only updates when $A_t = a^*$, we decouple our analysis from the rounds $t$ by defining 
    {\small
    \begin{align*}
        &\hat{\mu}_k^* \triangleq \frac{1}{k+1}\sum_{s=1}^{k}X_{a^*,s}, \\
        &\tilde{p}_{a,k} \triangleq \Pr\br*{\Ncal\br*{\hat{\mu}_k^*, \frac{1}{1+k}} > y_a \middle\vert X_{a^*,1}, \ldots, X_{a^*,k}},\\
        &\tilde{q}_{a,k} \triangleq \frac{1 - \tilde{p}_{a,k}}{\tilde{p}_{a,k}}.
    \end{align*}}
    We are then left to show that $\sum_{k=0}^{\infty}\E\brs*{\tilde{q}_{a,k}} = \Ocal\br*{\frac{1}{\Delta_a^2}}$.
    To do this, we define a sequence of good events $\brc*{H_{a,k}}_{k}$, showing that $\tilde{q}_{a,k}\Ind{H_{a,k}} \le \Ocal\br*{e^{-\Delta_a^2k}}$, and via the Cauchy-Schwarz inequality that
    \begin{align*}
        \E\brs*{\tilde{q}_{a,k}\Ind{\overline{H_{a,k}}}} \le \sqrt{\E\brs*{\tilde{q}_{a,k}^2}}\sqrt{\Pr\br*{\overline{H_{a,k}}}}.
    \end{align*}
    We prove that $\Pr\br*{\overline{H_{a,k}}}=\Ocal\br*{e^{-\Delta_a^2k}}$ and show that $\E\brs*{\tilde{q}_{a,k}^2}$ is uniformly bounded for all $k$.
    This gives us the desired bound on $S_3$. Finally, combining with the bounds for $S_1$ and $S_2$, and summing over all arms with $\Delta_a \ge \frac{\epsilon}{K}$ yields the desired result.
\end{proofsketch}
An immediate corollary is an instance-independent upper bound for TS that matches the lower bound in \Cref{thm:lower_bound_epsilon} (up to $K$ factors), which is applicable here since knowing $\epsilon$ can only improve our upper bound:
\begin{restatable}{corollary}{TSInstanceIndependent}\label{thm:TS_instance_independent}
    For all $\epsilon > 0$, it holds that:
     \begin{align*}
         \E \brs*{N_{\epsilon} \br*{\pi^{TS \br*{\Ncal}}}} 
        = {\Ocal} \br*{\frac{K^3}{\epsilon^2}}.
     \end{align*}
\end{restatable}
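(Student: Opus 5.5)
We derive this corollary directly from \Cref{thm:thompson_sampling_is_U-ACE} by bounding the instance-dependent sum uniformly over all instances $\nu$. That theorem gives
\begin{align*}
    \E \brs*{N_{\epsilon} \br*{\pi^{TS \br*{\Ncal}}, \nu}} \le C\,\frac{K}{\epsilon}\sum_{a:\Delta_a \ge \frac{\epsilon}{K}} \frac{1}{\Delta_a}\ln\br*{\frac{3K\Delta_a}{\epsilon}}
\end{align*}
for a universal constant $C$. It therefore suffices to maximize each summand over the admissible range $\Delta_a \in [\epsilon/K, 1]$. We use only that the number of summands is at most $K$.

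For a single arm, substitute $x = \frac{K\Delta_a}{\epsilon} \ge 1$. The summand becomes
\begin{align*}
    \frac{1}{\Delta_a}\ln\br*{\frac{3K\Delta_a}{\epsilon}} = \frac{K}{\epsilon}\cdot\frac{\ln(3x)}{x}.
\end{align*}
The function $f(x) = \ln(3x)/x$ has derivative $(1-\ln(3x))/x^2$, so on $[1,\infty)$ it is maximized at $x = e/3 < 1$ if unconstrained. Hence it is decreasing on $[1,\infty)$, and its maximum there is $f(1) = \ln 3$. Each summand is therefore at most $\frac{K\ln 3}{\epsilon}$.

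Summing over at most $K$ arms gives a sum of at most $\frac{K^2 \ln 3}{\epsilon}$. Multiplying by the prefactor $\frac{K}{\epsilon}$ yields $\Ocal\br*{\frac{K^3}{\epsilon^2}}$, uniformly in $\nu$, which proves the claim.

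There is no real obstacle here. The only point to check is that the bound of \Cref{thm:thompson_sampling_is_U-ACE} holds with a constant independent of $\nu$ and $\epsilon$, so that taking the supremum over instances is legitimate. We also need $\Delta_a \ge \epsilon/K$ to keep $x \ge 1$, so that the logarithm is positive and $f$ is monotone on the relevant range.

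If $\epsilon \ge K$, the sum is empty: no arm has $\Delta_a \ge \epsilon/K$, since every gap satisfies $\Delta_a \le 1$. The statement is then trivial. More directly, whenever $\epsilon \ge 1$, we always have $\Delta_{\pi_t} \le 1 \le \epsilon$, so $N_\epsilon = 0$.
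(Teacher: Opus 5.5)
Your proof is correct and follows essentially the same route as the paper: you bound each summand of the instance-dependent bound of \Cref{thm:thompson_sampling_is_U-ACE} by maximizing $\Delta \mapsto \frac{1}{\Delta}\ln\left(\frac{3K\Delta}{\epsilon}\right)$, then multiply by at most $K$ summands and the prefactor $\frac{K}{\epsilon}$. The only difference is that the paper uses the unconstrained maximizer $\Delta = \frac{e\epsilon}{3K}$, giving $\frac{3K}{e\epsilon}$ per arm, whereas you use the constraint $\Delta_a \ge \epsilon/K$ to get the slightly smaller boundary value $\frac{K\ln 3}{\epsilon}$; both are valid.
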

The proof is given in \Cref{appendix:thompson_sampling_is_U-ACE}. We attribute part of the $\Ocal\br*{K^2}$ factor to \Cref{lem:esier_event_to_analyze_TSN}, but note that it was key in our analysis. We leave tightening the $K$-dependence through an alternative approach to future work.
This result shows that TS is instance-independent optimal in $\epsilon$, while \Cref{thm:thompson_sampling_is_U-ACE} matches the lower bound from \Cref{thm:U-ACE_and_consistent_lower_bound} in $\epsilon$, up to logarithmic factors.

\section{Conclusions and Future Work}
This work introduces the expected sample complexity and the ACE framework for its analysis. We showed that U-ACE guarantees imply expected regret bounds and almost-sure convergence to the optimal expected reward, much stronger theoretical guarantees than the popular PAC setting and its various extensions. We also prove that no deterministic policy can be U-ACE uniformly over the $K$-armed Bernoulli class, 
revealing a separation between deterministic and stochastic policies that is absent in other frameworks.

We then characterized the behavior of the expected sample complexity, showing an instance-independent rate in $\epsilon$ of $\tilde{\Theta}\br*{1/\epsilon^2}$ both when $\epsilon$ is known and unknown. Furthermore, we proved that the instance-dependent rates of the two regimes are fundamentally different; notably, in the limit of small $\epsilon$, knowing the value of $\epsilon$ in advance leads to improved expected sample complexity. To our knowledge, this is the first separation result between known-$\epsilon$ and uniform sample complexity bounds.

This work opens up many possible future directions. A natural extension is defining and analyzing the expected sample complexity in other sequential decision-making problems; continuing this line of work in tabular RL can open the door to simplifying expected regret analyses together with convergence results for policy-gradient approaches. Another direction would be defining and analyzing the expected sample complexity in the linear bandits model, an extension of the multi-armed bandit problem, in which stochastic policies have been a topic of many recent works. 

\subsection*{AI use statement}
The setting, \cref{alg:known_epsilon_SE}, and most of the critical results were initially derived by the human authors, including both upper bounds. AI tools were partially used for deriving some of the lower bounds, and some of the original proof arguments were later refined in an iterative interaction with AI tools. 

Additionally, we used generative AI tools for the generation of \Cref{fig:interaction} and the code for \Cref{fig:TS_graph}.
Finally, AI tools were used to verify the correctness of the proofs and revise the paper before submission.
Due to the theoretical nature of our work, the use of AI for generating synthetic data, refining hypotheses, feedback on experiments, translation, dataset cleaning, thematic data analysis, and result interpretation is not applicable.

All LLM-assisted proofs were carefully checked line by line by the authors.
We take responsibility for the final content of this work, including text, claims, or artifacts produced with the aid of generative AI.

\section*{Acknowledgments}
This research was supported by Israel Science Foundation research grant (ISF's No. 4118/25) and the
Maimonides Fund's Future Scientists Center.
\bibliographystyle{plainnat}
\bibliography{references}

\clearpage
\appendix
\thispagestyle{empty}

\onecolumn
\aistatstitle{Supplementary Materials}
\section{Prerequisites}
\subsection{Useful Facts}
\begin{fact}[Hoeffding's Inequality]\label{fact:hoeffding_inequality}
    Let $X_1, \ldots, X_n$ be a sequence of i.i.d. random variables bounded in $\brs*{0, 1}$ almost surely with expectation $\mu$. Denote by $\mu \triangleq \E \brs*{X_1}$, and define $S_n = \sum_{i=1}^{n} X_i$, and $\hat{\mu} = \frac{S_n}{n}$. Then for all $t \ge 0$:
    $$\Pr \br*{S_n - n\mu \ge t} \le \exp \br*{-\frac{2t^2}{n}},$$
    and:
    $$\Pr \br*{\hat{\mu} - \mu \ge t} \le \exp \br*{{-2nt^2}}.$$
\end{fact}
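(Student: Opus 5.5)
The statement immediately above is Hoeffding's inequality (\Cref{fact:hoeffding_inequality}), so the plan is the standard Chernoff-bound argument combined with Hoeffding's lemma. It has three steps: reduce to a moment generating function bound, bound each factor, and optimize.

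First, fix any $\lambda > 0$. By monotonicity of $x \mapsto e^{\lambda x}$ and Markov's inequality,
\begin{align*}
\Pr\br*{S_n - n\mu \ge t} \le e^{-\lambda t}\,\E\brs*{e^{\lambda (S_n - n\mu)}} = e^{-\lambda t}\prod_{i=1}^{n}\E\brs*{e^{\lambda (X_i - \mu)}},
\end{align*}
where the equality uses independence of the $X_i$.

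Second, I will prove Hoeffding's lemma: for a random variable $Y = X_i - \mu$ with $\E\brs*{Y}=0$ and $Y \in [-\mu, 1-\mu]$, an interval of length $1$, we have $\E\brs*{e^{\lambda Y}} \le e^{\lambda^2/8}$.
\begin{enumerate}
\item By convexity of $y\mapsto e^{\lambda y}$ on $[\alpha,\beta] = [-\mu,1-\mu]$, we have $e^{\lambda y} \le (\beta - y)e^{\lambda \alpha} + (y-\alpha)e^{\lambda\beta}$.
\item Taking expectations and using $\E\brs*{Y}=0$ gives $\E\brs*{e^{\lambda Y}} \le \beta e^{\lambda\alpha} - \alpha e^{\lambda \beta} \triangleq e^{L(\lambda)}$.
\item We have $L(0)=L'(0)=0$, and $L''(\lambda) = p(1-p) \le 1/4$ for an appropriate $p\in[0,1]$ that depends on $\lambda$, since $L''$ is the variance of a two-point distribution on $\{0,1\}$.
\item Taylor's theorem with remainder then gives $L(\lambda)\le \lambda^2/8$.
\end{enumerate}
This is the only nontrivial step and the main obstacle. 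The calculation is short once $L$ is written as $L(\lambda) = \lambda\alpha + \ln\br*{1-\theta+\theta e^{\lambda}}$ with $\theta = -\alpha = \mu$.

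Third, combining the two steps yields $\Pr\br*{S_n - n\mu \ge t} \le \exp\br*{-\lambda t + n\lambda^2/8}$. Choosing $\lambda = 4t/n$, which minimizes the exponent, gives $\exp\br*{-2t^2/n}$. The case $t=0$ is trivial.

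The statement for the empirical mean follows by substituting $nt$ for $t$: $\Pr\br*{\hat\mu - \mu \ge t} = \Pr\br*{S_n - n\mu \ge nt} \le \exp\br*{-2(nt)^2/n} = \exp\br*{-2nt^2}$.
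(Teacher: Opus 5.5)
Your argument is correct. The Chernoff step, Hoeffding's lemma via the chord bound and the estimate $L''(\lambda)=p(1-p)\le 1/4$, the choice $\lambda=4t/n$, and the rescaling $t\mapsto nt$ for the empirical mean all check out. The paper does not prove this statement; it records it as a standard fact, and your Chernoff-plus-Hoeffding's-lemma argument is the standard textbook proof it relies on.
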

\begin{fact}[Multiplicative Chernoff Bound]\label{fact:Chernoff_bernoulli}
    Let $B_1, \ldots, B_n$ be a sequence of independent Bernoulli random variables. Let $S_n = \sum_{i = 1}^{n} B_i$ and $\mu = \E \brs*{S_n}$. Then for any $\delta \in \br*{0, 1}$:
    $$\Pr \br*{S_n \le \br*{1 - \delta} \mu} \le \exp \br*{-\frac{\delta^2 \mu}{2}}.$$
\end{fact}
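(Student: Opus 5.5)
We would prove the lower-tail bound by the standard exponential-moment (Chernoff) method. Let $p_i \triangleq \E\brs*{B_i}$, so that $\mu=\sum_{i=1}^n p_i$. Fix $\lambda>0$. Since $x\mapsto e^{-\lambda x}$ is decreasing, the event $\brc*{S_n\le(1-\delta)\mu}$ coincides with $\brc*{e^{-\lambda S_n}\ge e^{-\lambda(1-\delta)\mu}}$. Markov's inequality and independence of the $B_i$ then give
\begin{align*}
\Pr\br*{S_n\le(1-\delta)\mu}\le e^{\lambda(1-\delta)\mu}\,\E\brs*{e^{-\lambda S_n}} = e^{\lambda(1-\delta)\mu}\prod_{i=1}^n \E\brs*{e^{-\lambda B_i}}.
\end{align*}

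Next we bound each factor. We have $\E\brs*{e^{-\lambda B_i}} = 1-p_i\br*{1-e^{-\lambda}}$. Using $1+x\le e^{x}$, this is at most $\exp\br*{-p_i(1-e^{-\lambda})}$. Multiplying over $i$ yields
\begin{align*}
\Pr\br*{S_n\le(1-\delta)\mu}\le \exp\br*{-\mu\br*{1-e^{-\lambda}}+\lambda(1-\delta)\mu}.
\end{align*}
We optimize the exponent by choosing $\lambda=-\ln(1-\delta)>0$, which is valid since $\delta\in(0,1)$. Then $1-e^{-\lambda}=\delta$, and the bound becomes $\exp\br*{-\mu\brs*{\delta+(1-\delta)\ln(1-\delta)}}$.

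It remains to verify the elementary inequality $f(\delta)\triangleq\delta+(1-\delta)\ln(1-\delta)\ge \delta^2/2$ on $[0,1)$. This is the only step needing any care, and it is a simple calculus check. Both sides vanish at $\delta=0$. We have $f'(\delta)=-\ln(1-\delta)$, and $-\ln(1-\delta)\ge\delta$ for $\delta\in[0,1)$, while $\delta$ is exactly the derivative of $\delta^2/2$. Integrating from $0$ gives the inequality. Plugging it into the exponent (with $\mu\ge0$) gives $\Pr\br*{S_n\le(1-\delta)\mu}\le\exp\br*{-\delta^2\mu/2}$, as claimed.
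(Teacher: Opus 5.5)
Your proof is correct and complete. It is the standard exponential-moment argument, and both the choice $\lambda=-\ln(1-\delta)$ and the calculus check $\delta+(1-\delta)\ln(1-\delta)\ge\delta^2/2$ are carried out properly. The paper states this bound as a known fact without proof, so there is no paper proof to compare against, and your derivation is the textbook one that would fill that gap.
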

\begin{fact}\label{fact:inequality_for_infinite_sums}
    For all $ x> 0$:
    \begin{align*}
        \frac{e^{-x}}{1 - e^{-x}} \le \frac{1}{x},
        \qquad
        \text{and}
        \qquad 
        \frac{1}{1 - e^{-x}} \le 1 + \frac{1}{x}.
    \end{align*}
\end{fact}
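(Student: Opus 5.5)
The statement is the elementary Fact, and I would prove both inequalities by reducing them to the single bound $e^{x}\ge 1+x$, which holds for all real $x$. That bound itself comes from the convexity of $x\mapsto e^x$: the tangent line at $0$ lies below the graph. Alternatively, set $g(x)=e^x-1-x$; then $g(0)=0$ and $g'(x)=e^x-1$ has the sign of $x$, so $g\ge 0$.

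For the first inequality, fix $x>0$. Then $1-e^{-x}>0$, so I can multiply through by the positive quantity $x(1-e^{-x})$. This turns $\frac{e^{-x}}{1-e^{-x}}\le\frac1x$ into the equivalent form $xe^{-x}\le 1-e^{-x}$, which rearranges to $(1+x)e^{-x}\le 1$, i.e. $1+x\le e^{x}$. That is exactly the basic bound.

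For the second inequality, I would use the identity
\begin{align*}
\frac{1}{1-e^{-x}} = \frac{(1-e^{-x})+e^{-x}}{1-e^{-x}} = 1+\frac{e^{-x}}{1-e^{-x}}.
\end{align*}
Applying the first inequality to the last term gives $\frac{1}{1-e^{-x}}\le 1+\frac1x$.

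The only care needed is to check the sign of $1-e^{-x}$ before multiplying through, so that the direction of the inequality is preserved. Since $x>0$ this factor is strictly positive, and no real obstacle arises.
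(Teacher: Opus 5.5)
Your proof is correct and follows the paper's approach: the first inequality is reduced to $1+x\le e^{x}$ (the paper rearranges it as $\frac{1}{e^x-1}\le\frac1x$ and multiplies by $e^{-x}$), and the second follows from the identity $\frac{1}{1-e^{-x}} = 1+\frac{e^{-x}}{1-e^{-x}}$.
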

\begin{proof}
    Recall that $1 + x \le e^{x}$ for all $x \in \R$. Rearrange to get:
    $$\frac{1}{e^x - 1} \le \frac{1}{x}.$$
    Multiplying both numerator and denominator of the L.H.S. by $e^{-x}$ gives us the first desired inequality; the second inequality is a direct result of the first, as $\frac{1}{1 - e^{-x}}=\frac{e^{-x}}{1 - e^{-x}}+1$.
\end{proof}
\begin{fact}[\cite{abramowitz1964handbook}]\label{fact:Gaussian_normalized_tail_bound}
    Let $Z \sim \Ncal \br*{\mu, \sigma^2}$ be a Gaussian random variable. Then for any $t > 0$:
    \begin{align*}
    \frac{1}{\sqrt{2\pi}}\frac{t}{1 + t^2}e^{-t^2/2}
    \le \Pr \br*{Z - \mu \ge t\sigma} 
    \le \frac{1}{2}e^{-t^2/2}.
    \end{align*}
\end{fact}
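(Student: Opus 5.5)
The plan is to reduce to the standard normal and then prove each inequality by a short calculus argument. Write $Z = \mu + \sigma W$ with $W \sim \Ncal(0,1)$. Then $\Pr\br*{Z - \mu \ge t\sigma} = \Pr\br*{W \ge t} \triangleq Q(t)$, and it suffices to bound $Q(t) = \int_t^\infty \varphi(x)\,dx$ for $t>0$, where $\varphi(x) = \frac{1}{\sqrt{2\pi}}e^{-x^2/2}$.

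For the upper bound, I would shift the variable of integration by $x = u + t$ and use $(u+t)^2 = u^2 + 2ut + t^2$:
\begin{align*}
Q(t) = \int_0^\infty \varphi(u+t)\,du = e^{-t^2/2}\int_0^\infty \varphi(u)e^{-ut}\,du \le e^{-t^2/2}\int_0^\infty \varphi(u)\,du = \frac{1}{2}e^{-t^2/2}.
\end{align*}
The inequality uses $e^{-ut}\le 1$ for $u,t\ge 0$, and the last equality is symmetry of $\varphi$. This gives the right-hand inequality directly.

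For the lower bound, I would define $g(t) \triangleq Q(t) - \frac{t}{1+t^2}\varphi(t)$. Both terms vanish as $t\to\infty$, so $g(t)\to 0$. It therefore suffices to show that $g$ is nonincreasing, since then $g(t)\ge \lim_{s\to\infty} g(s) = 0$ for every $t>0$.

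To check monotonicity, I would differentiate using $Q'(t) = -\varphi(t)$, $\varphi'(t) = -t\varphi(t)$ and $\frac{d}{dt}\frac{t}{1+t^2} = \frac{1-t^2}{(1+t^2)^2}$. This gives
\begin{align*}
g'(t) = \varphi(t)\br*{-1 - \frac{1-t^2}{(1+t^2)^2} + \frac{t^2}{1+t^2}} = -\frac{2\varphi(t)}{(1+t^2)^2} < 0.
\end{align*}
The bracket simplifies because its numerator over $(1+t^2)^2$ is $-(1+t^2)^2 - (1-t^2) + t^2(1+t^2) = -2$.

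Hence $g \ge 0$ on $(0,\infty)$, which is exactly $Q(t) \ge \frac{1}{\sqrt{2\pi}}\frac{t}{1+t^2}e^{-t^2/2}$. The only step needing care is this algebraic simplification of $g'$. Everything else is routine: the limit $g(t)\to 0$ as $t\to\infty$, and differentiation under the integral via the fundamental theorem of calculus.
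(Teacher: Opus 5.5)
Your proof is correct. The shift $x=u+t$ gives $Q(t)=e^{-t^2/2}\int_0^\infty\varphi(u)e^{-ut}\,du\le\frac{1}{2}e^{-t^2/2}$. The simplification $-(1+t^2)^2-(1-t^2)+t^2(1+t^2)=-2$ also checks out, so $g'(t)=-2\varphi(t)/(1+t^2)^2<0$ together with $g(t)\to 0$ forces $g\ge 0$.

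The paper gives no proof of this Fact; it is quoted from Abramowitz and Stegun. There it follows from the classical two-sided Mills-ratio bounds, which in your normalization read $\frac{2\varphi(t)}{t+\sqrt{t^2+4}}<Q(t)\le\frac{2\varphi(t)}{t+\sqrt{t^2+8/\pi}}$ for $t\ge 0$. The left side dominates $\frac{t}{1+t^2}\varphi(t)$ because $(t^2+2)^2\ge t^2(t^2+4)$, and the right side is at most $\frac{2\varphi(t)}{\sqrt{8/\pi}}=\frac{1}{2}e^{-t^2/2}$. So the citation gives sharper constants at no cost to the paper. Your argument is instead elementary and self-contained, and it proves exactly the two inequalities the Thompson-sampling analysis uses: the upper bound in the $S_2$ and $H_{a,k}$ estimates, and the lower bound in the second-moment bound for $G_k$.

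One minor point: the identity $\Pr(Z-\mu\ge t\sigma)=\Pr(W\ge t)$ divides by $\sigma$, so you are implicitly assuming $\sigma>0$. This is harmless and in fact necessary, since for a degenerate Gaussian the probability equals $1$ and the stated upper bound fails.
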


\begin{fact}[Lemma 2 of \cite{poland2004convergence}]\label{fact:Ber_KL_bound}
    Let $p, q \in \brs*{1/4, 3/4}$. Then:
    \begin{align*}
        D\br*{Ber(p), Ber(q)} \le \frac{8\br*{p-q}^2}{3}.
    \end{align*}
\end{fact}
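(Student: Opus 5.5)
The plan is to view the divergence as a function of its first argument and use a second-order Taylor expansion around $q$. Fix $q\in\brs*{1/4,3/4}$ and define, for $x\in(0,1)$,
\begin{align*}
    f(x) \triangleq D\br*{Ber(x), Ber(q)} = x\ln\frac{x}{q} + (1-x)\ln\frac{1-x}{1-q}.
\end{align*}
This function is smooth on $(0,1)$. Direct differentiation gives $f'(x) = \ln\frac{x}{q} - \ln\frac{1-x}{1-q}$ and $f''(x) = \frac{1}{x} + \frac{1}{1-x} = \frac{1}{x(1-x)}$. In particular $f(q)=0$ and $f'(q)=0$.

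Next, apply Taylor's theorem with the Lagrange (or integral) remainder at $q$:
\begin{align*}
    f(p) = f(q) + f'(q)(p-q) + \frac{1}{2}f''(\xi)(p-q)^2 = \frac{(p-q)^2}{2\,\xi(1-\xi)}.
\end{align*}
Here $\xi$ lies between $p$ and $q$. Since both $p$ and $q$ lie in the interval $\brs*{1/4,3/4}$, so does $\xi$, by convexity of the interval.

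On $\brs*{1/4,3/4}$, the function $\xi(1-\xi)$ is minimized at the endpoints, where it equals $\frac{1}{4}\cdot\frac{3}{4}=\frac{3}{16}$. Therefore $f''(\xi)\le 16/3$, and we obtain $D\br*{Ber(p),Ber(q)}\le \frac{1}{2}\cdot\frac{16}{3}(p-q)^2 = \frac{8(p-q)^2}{3}$, as claimed.

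There is no real obstacle here. The only point needing care is to expand in the first argument $p$ (for fixed $q$), since that makes $f'(q)=0$ and keeps the remainder point inside $\brs*{1/4,3/4}$. The cruder chi-square bound $D\le (p-q)^2/(q(1-q))$ loses a factor of $2$ and gives only $\frac{16}{3}(p-q)^2$, so it should be avoided.
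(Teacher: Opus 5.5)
The paper gives no proof of this statement; it imports it as Lemma 2 of \cite{poland2004convergence}, so there is no in-paper argument to compare against. Your argument is a correct and complete proof on its own. The second derivative in the first argument is $f''(x)=1/(x(1-x))$, independent of $q$. The Lagrange point $\xi$ stays in $[1/4,3/4]$, and $\xi(1-\xi)\ge 3/16$ there, which gives exactly the constant $8/3$ used later in the paper: for $D(Ber(1/2),Ber(1/2+2\Delta))\le 32\Delta^2/3$ in the minimax lower bound, and for the $3/(8\Delta_a)$ form of Lai--Robbins.

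One small correction to your closing commentary. Expanding in the second argument would also give a vanishing first derivative at the base point and an intermediate point inside the interval, so that is not what distinguishes the two choices. The real advantage of expanding in $p$ is that the resulting second derivative does not depend on the other argument. In the second argument it is $p/y^2+(1-p)/(1-y)^2$, which depends on both points. On the box it reaches $12+4/9$ (at $p=3/4$, $y=1/4$), so a sup bound on the remainder there would only give the weaker constant $56/9$.
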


\begin{fact}[Lai-Robbins lower bound \citep{Lai_Robbins_1985, BanditAlgsTor2017}]\label{fact:Lai-Robbins_lower_bound}
Let $\Ecal_{Ber}^{K}$ denote the class of Bernoulli bandits with $K$ arms, and let $\pi$ be consistent with respect to this class. 
Then for all $\nu \in \Ecal_{Ber}^{K}$:
\begin{align*}
    \liminf_{T\rightarrow \infty}\frac{R_T(\pi)}{\log(T)} \ge \sum_{a:\Delta_a > 0}\frac{\Delta_a}{d\br*{\mu_a, \mu^*}},
\end{align*}
where $d\br*{b, c} = D\br*{Ber(b), Ber(c)}$. Consequently, by \Cref{fact:Ber_KL_bound}, for instances with means bounded in $\brs*{1/4,3/4}$:
\begin{align*}
    \liminf_{T\rightarrow \infty}\frac{R_T(\pi)}{\log(T)} \ge \sum_{a:\Delta_a > 0}\frac{3}{8\Delta_a},
\end{align*}
\end{fact}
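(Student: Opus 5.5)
The first display is the classical asymptotic lower bound of \cite{Lai_Robbins_1985}, stated as in \cite{BanditAlgsTor2017}. I would take it as given: $\pi$ is consistent over $\Ecal_{Ber}^{K}$, which is a product class, so the standard change-of-measure argument applies directly. The only thing to prove is the "consequently" part, which is a termwise comparison of the right-hand sides.

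\textbf{Step 1: the KL terms are finite and positive.} Fix $\nu \in \Ecal_{Ber}^{K}$ with all means in $\brs*{1/4,3/4}$, and fix an arm $a$ with $\Delta_a > 0$. Then $\mu_a, \mu^* \in \brs*{1/4,3/4}$ and $\mu_a \neq \mu^*$. Because both parameters lie strictly inside $(0,1)$, $d\br*{\mu_a,\mu^*}$ is finite, and because they differ it is strictly positive. Hence each ratio $\Delta_a / d\br*{\mu_a,\mu^*}$ is a well-defined positive real number.

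\textbf{Step 2: bound each summand.} Apply \Cref{fact:Ber_KL_bound} with $p=\mu_a$ and $q=\mu^*$:
\begin{align*}
    d\br*{\mu_a,\mu^*} \le \frac{8\br*{\mu^*-\mu_a}^2}{3} = \frac{8\Delta_a^2}{3}.
\end{align*}
Inverting this positive inequality and multiplying by $\Delta_a>0$ gives
\begin{align*}
    \frac{\Delta_a}{d\br*{\mu_a,\mu^*}} \ge \frac{3\Delta_a}{8\Delta_a^2} = \frac{3}{8\Delta_a}.
\end{align*}

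\textbf{Step 3: sum and conclude.} Summing Step 2 over the finitely many arms with $\Delta_a>0$ shows that the right-hand side of the Lai--Robbins bound dominates $\sum_{a:\Delta_a>0} \frac{3}{8\Delta_a}$. Chaining this with the first display yields the second display.

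There is no real obstacle here. The only points needing care are that $d\br*{\mu_a,\mu^*}>0$, so that dividing by it is legitimate, and that the means are restricted to $\brs*{1/4,3/4}$, so that \Cref{fact:Ber_KL_bound} applies. Both are guaranteed by the assumptions.
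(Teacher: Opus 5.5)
Your proposal is correct and follows the same route as the paper. The paper also treats the first display as the cited Lai--Robbins bound and obtains the second by applying \Cref{fact:Ber_KL_bound} termwise with $p=\mu_a$, $q=\mu^*$, inverting, and summing over suboptimal arms; your check that $0<d(\mu_a,\mu^*)<\infty$ for means in $[1/4,3/4]$ is what makes the inversion legitimate.
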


\begin{fact}[Maximal Hoeffding's inequality]\label{fact:Hoeffding_maximal}
    Let $\br*{X_t}_{t \ge 1}$ be a sequence of random variables adapted to a filtration $\F_t$ such that $Y_t \triangleq X_t - \E \brs*{X_t \vert \F_{t-1}} \in \brs*{m_t,M_t}$, with $M_t,m_t$ being $\F_{t-1}$-measurable and $M_t - m_t \le c$ almost surely for all $t$, for some absolute constant $c > 0$.
    Let $\br*{\iota_t}_{t\ge1}$ be $\F_{t-1}$-measurable, and assume that $\iota_t \in \brc*{0,1}$ for all $t\in \N$.
    Define $N_t = \sum_{s\le t} \iota_s$ and $\hat{\mu}_t = \frac{\sum_{s\le t} \iota_s Y_s}{N_t \vee 1}$. Then for any $n, \epsilon > 0$:
    \begin{align*}
        \Pr \br*{\exists t : N_t \ge n \text{ and } \abs{\hat{\mu_t}} \ge \epsilon} \le 2\exp \br*{-\frac{2n\epsilon^2}{c^2}}.
    \end{align*}
\end{fact}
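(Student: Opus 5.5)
This is the maximal Hoeffding inequality for adaptively sampled martingale differences. I would prove it by reducing to a time-uniform martingale bound indexed by the number of selected samples, rather than by calendar time. By symmetry (apply the same argument to $-Y_t$, whose range $[-M_t,-m_t]$ also has width at most $c$), it suffices to prove the one-sided bound $\Pr\br*{\exists t : N_t \ge n,\ \hat{\mu}_t \ge \epsilon} \le \exp\br*{-2n\epsilon^2/c^2}$ and then take a union bound, which produces the factor $2$.

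\textbf{Step 1 (the exponential supermartingale).} Fix $\lambda > 0$. By Hoeffding's lemma, applied conditionally on $\F_{t-1}$ (legitimate since $m_t, M_t$ are $\F_{t-1}$-measurable and $\E\brs*{Y_t\vert\F_{t-1}}=0$), we have $\E\brs*{e^{\lambda \iota_t Y_t}\vert \F_{t-1}} \le e^{\iota_t\lambda^2 c^2/8}$, using that $\iota_t$ is predictable. Hence
\begin{align*}
    W_t \triangleq \exp\br*{\lambda \sum_{s\le t}\iota_s Y_s - \frac{\lambda^2 c^2}{8} N_t}
\end{align*}
is a nonnegative supermartingale with $W_0 = 1$. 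Ville's (maximal) inequality then gives $\Pr\br*{\sup_t W_t \ge 1/\delta}\le \delta$.

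\textbf{Step 2 (converting to the event of interest).} On the event $\brc*{N_t\ge n,\ \hat\mu_t\ge\epsilon}$ we have $\sum_{s\le t}\iota_sY_s \ge \epsilon N_t$, so $\log W_t \ge N_t\br*{\lambda\epsilon - \lambda^2c^2/8}$. Choosing $\lambda = 4\epsilon/c^2$ makes the bracket equal to $2\epsilon^2/c^2 > 0$. Since this is positive and $N_t \ge n$, we get $W_t \ge e^{2n\epsilon^2/c^2}$. Applying Ville's inequality with $\delta = e^{-2n\epsilon^2/c^2}$ yields the one-sided bound.

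\textbf{Main obstacle.} The only delicate point is the uniformity over the random, unbounded time $t$, together with the fact that $N_t$ is random. The supermartingale argument handles both at once: Ville's inequality is time-uniform, and the threshold condition $N_t\ge n$ is used only pointwise, in the monotonicity step of Step 2. This avoids any peeling or union bound over values of $N_t$. A technical check is that Ville's inequality applies to the supremum over the infinite horizon. This follows from Doob's inequality on $[0,T]$ for each $T$, followed by monotone convergence as $T\to\infty$.
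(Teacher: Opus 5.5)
Your proposal is correct and is essentially the paper's proof. The paper uses the same exponential supermartingale $\exp(\lambda\sum_{s\le t}\iota_sY_s-\lambda^2c^2N_t/8)$, justified by the conditional Hoeffding lemma with predictable $\iota_t$; it then applies Ville's inequality, chooses $\lambda=4\epsilon/c^2$ together with $N_t\ge n$, and handles the other tail by symmetry and a union bound to get the factor $2$.
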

\begin{proof}
    We note that this is a slightly more general version of Proposition 5 in \citep{garivier2022kl}.
    Let $Z_t \triangleq \sum_{s\le t} \iota_s Y_s$, and for $x \in \R$ define $M_t^x \triangleq \exp \br*{xZ_t - \frac{x^2 c^2N_t}{8}}$. We show that $M_t^x$ is a supermartingale. We have that
    {\begin{align}
        \E\brs*{M_t^x\vert \F_{t-1}} 
         &= \E\brs*{\exp \br*{xZ_t - \frac{x^2 c^2N_t}{8}} \middle\vert \F_{t-1}} \nonumber\\
         &= \E\brs*{\exp \br*{x\sum_{s\le t} \iota_s Y_s - \frac{x^2 c^2N_t}{8}} \middle\vert \F_{t-1}} \nonumber\\
         &= \exp \br*{x\sum_{s\le t - 1} \iota_s Y_s - \frac{x^2 c^2N_{t - 1}}{8}}\E\brs*{\exp \br*{x\iota_t Y_t-\frac{x^2 c^2\iota_t}{8}}\middle\vert\F_{t-1}} \nonumber\\
         &= M_{t-1}^x\exp \br*{-\frac{x^2 c^2\iota_t}{8}}\E\brs*{\exp \br*{x\iota_t Y_t} \middle\vert \F_{t-1}}, \label{eq: max-concentration 1}
    \end{align}}
    where the final equality is by the definition of $M_{t-1}^x$ and since $\iota_t$ is $\F_{t-1}$-measurable.
    Note that $\iota_tY_t \in \brs*{\iota_tm_t, \iota_tM_t}$ almost surely, and that $\E\brs*{\iota_tY_t\vert\F_{t-1}} = \iota_t\E\brs*{Y_t\vert\F_{t-1}} = 0$, and so we can apply the conditional version of Hoeffding's lemma on $\iota_tY_t \in \brs*{\iota_tm_t, \iota_tM_t}$ to get
    \begin{align*}
        \E\brs*{\exp\br*{x\iota_tY_t }\middle\vert \F_{t-1}}
        \leq \exp\br*{\frac{x^2\iota_t^2 c^2}{8}}
        =\exp\br*{\frac{x^2\iota_t c^2}{8}},
    \end{align*}
    where the equality holds since $\iota_t \in \brc*{0,1}$, and thus $\iota_t^2 = \iota_t$. 
    Substituting back into \eqref{eq: max-concentration 1}, we get that $ \E\brs*{M_t^x\vert \F_{t-1}} \leq M_{t-1}^x$, and so $M_t^x$ is indeed a supermartingale. As such, it holds that $\E\brs*{M_t^x} \le \E\brs*{M_0^x} = 1$; noting that $M_t^x$ is also non-negative, we can apply Ville's inequality and get that for all $u > 0$
    \begin{align}\label{eq:maximal_heoffding_villes_ineqaulity}
        \Pr \br*{\exists t : M_t^x \ge u} \le \frac{\E\brs*{M_0^x}}{u} \le \frac{1}{u}.
    \end{align}
    Therefore, for all $\epsilon > 0$ and $n \in \N$, taking $x = \frac{4\epsilon}{c^2}$ yields
    \begin{align*}
        \Pr\br*{\exists t : Z_t \ge N_t\epsilon, N_t \ge n}
        &= \Pr\br*{\exists t : \exp \br*{xZ_t - \frac{x^2 c^2N_t}{8}} \ge \exp \br*{xN_t \epsilon - \frac{x^2 c^2N_t}{8}}, N_t \ge n}\\
        &= \Pr\br*{\exists t : M_t^x \ge \exp \br*{xN_t \epsilon - \frac{x^2 c^2N_t}{8}}, N_t \ge n}\\
        &= \Pr\br*{\exists t : M_t^x \ge \exp \br*{\frac{2}{c^2}N_t\epsilon^2}, N_t \ge n} \tag{definition of $x$}\\
        &\le \Pr\br*{\exists t : M_t^x \ge \exp \br*{\frac{2}{c^2}n\epsilon^2}}\\
        &\le \exp\br*{-\frac{2n\epsilon^2}{c^2}}. \tag{\Cref{eq:maximal_heoffding_villes_ineqaulity}}
    \end{align*}
    We can apply similar arguments to get $\Pr\br*{\exists t : Z_t \le -N_t\epsilon, N_t \ge n} \le \exp\br*{-\frac{2n\epsilon^2}{c^2}}$, and finally noting that $\hat{\mu}_t = Z_t/N_t$ gives us that for all $n, \epsilon > 0$
    \begin{align*}
        \Pr \br*{\exists t : N_t \ge n \text{ and } \abs{\hat{\mu_t}} \ge \epsilon} 
        \le \Pr\br*{\exists t : Z_t \le -N_t\epsilon, N_t \ge n} + \Pr\br*{\exists t : Z_t \ge N_t\epsilon, N_t \ge n} 
        \le 2\exp\br*{-\frac{2n\epsilon^2}{c^2}},
    \end{align*}
    as desired.
\end{proof}

\clearpage

\subsection{Interaction Protocol and Filtrations}\label{sec:interaction-filtration}
One of the distinguishing features of this work is the analysis of the policy itself rather than the realized action. This raises the need to carefully define how histories and internal randomization of the algorithm work.

We explicitly represent the internal randomization of the algorithm by a sequence of mutually independent random variables $(U_t)_{t\ge1}$, 
independent of the reward table, such that $U_t$ contains all randomization used by the algorithm in round $t$ before observing the reward. 
We let
\begin{align*}
    \F_t \triangleq \sigma\br*{A_1,X_1, \ldots,A_t, X_t},
\end{align*}
denote the history up to the end of round $t$, with $\F_0$ taken to be the trivial sigma-algebra.
At the beginning of round $t$, the policy $\pi_t$ is $\F_{t-1}$-measurable, and conditioned on the history,
\begin{align*}
    A_t \vert \F_{t-1} \sim \pi_t\br*{\cdot\vert \F_{t-1}}.
\end{align*}
Noting that the action $A_t$ is completely determined by $\F_{t-1}$ and $U_{t}$, and since the action space is finite, we may write
\begin{align*}
    A_t = \Phi_t\br*{\pi_t, U_t}
\end{align*}
for some measurable map $\Phi_t$. For $t \ge 0$, we define the filtration
\begin{align*}
    \Gcal_t \triangleq \sigma\br*{\F_{t}, U_1, \ldots, U_{t + 1}}
\end{align*}
to be the history available at round $t$ together with the realized internal randomness.
Hence, immediately before observing the reward in round $t$, the available
information is $\Gcal_{t-1}$. In particular, all random choices determining the action in round $t$ are
in $\Gcal_{t-1}$, and in particular $A_t$ is $\Gcal_{t-1}$-measurable, giving us that $\E\brs*{X_t\vert \Gcal_{t-1}}=\mu_{A_t}.$

\begin{figure}[h]
    \centering
    \includegraphics[width=1\textwidth]{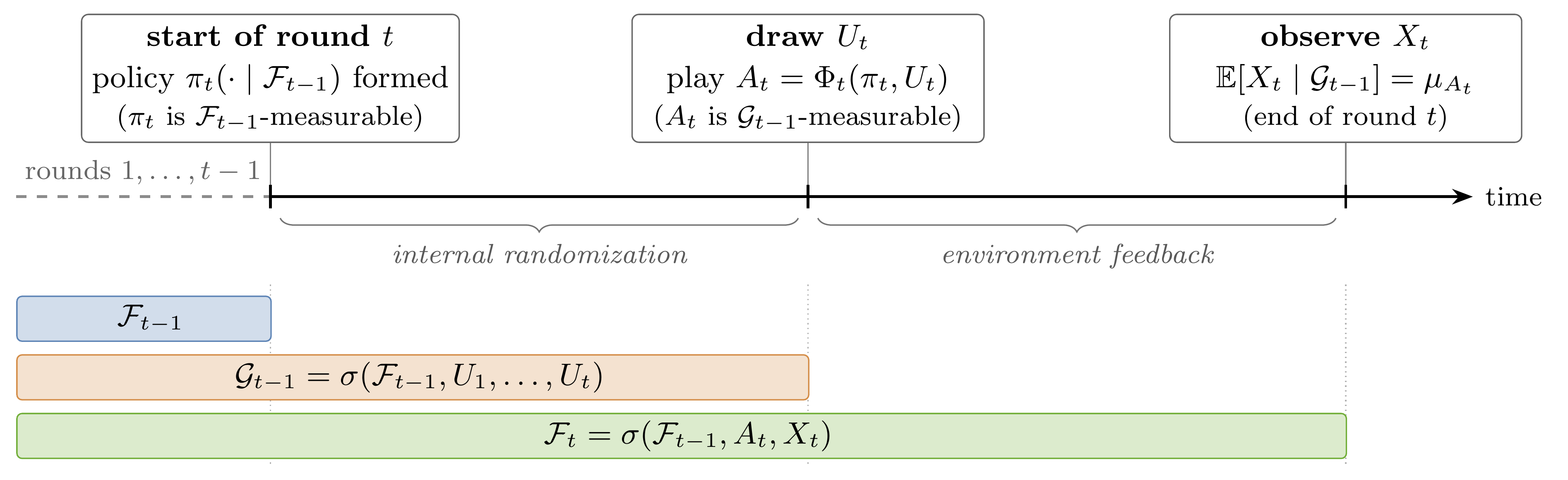}
    \caption{
    Flow of randomness in round $t$ of the interaction.
    The policy $\pi_t$ is determined from the past history $\F_{t-1}$. The internal randomization $U_t$ is then
    revealed, making the realized action $A_t$ and any other internal choice mechanism
    predictable before the reward $X_t$ is observed.
    }
    \label{fig:interaction}
\end{figure}

\subsection{The Table Model}
The sequential nature of the multi-armed bandit problem requires a delicate approach to the analysis of algorithms. 
Towards this, in the analysis of all our algorithms, we adopt an infinite table model to generate our samples; details on why a policy is well-defined for such a construction can be found in \citep[Section 4.6]{BanditAlgsTor2017}.
Before the interaction, we generate $\br{\tilde{X}_{a,k}}_{a \in [K], k \in \N}$ such that the law of $\tilde{X}_{a,k}$ is $\nu_a$ (independent of all other samples). During the interaction, we let $X_t = \tilde{X}_{A_t, n_t(A_t) + 1}$, noting that $A_t, n_t(A_t)$ are $\Gcal_{t-1}$-measurable, thus making this construction valid. We further denote by $\tilde{\mu}_{a,k} \triangleq \frac{1}{k}\sum_{s=1}^{k}\tilde{X}_{a,s}$ the empirical mean of the first $k$ samples from the row in the table representing arm $a$. 

\clearpage
\section{Properties of U-ACE -- Omitted Proofs}
\subsection{Proof of claim \ref{thm:expected_regret_equivalent_for_policy_gap}}\label{appendix:expected_regret_equivalent_for_policy_gap}
\expectedRegretEquivalentForPolicyGap*
\begin{proof}
First, observe that the expected regret can be rewritten as
\begin{align*}
    R_{T}\br*{\pi,\nu} 
    = \E\brs*{\sum_{t=1}^{T}{\Delta_t}}
    = \E\brs*{\sum_{t=1}^{T}\sum_{a \in \brs*{K}}{\Delta_t}\Ind{A_t = a}}
    = \sum_{a \in \brs*{K}}{\Delta_a}\E\brs*{\sum_{t=1}^{T}\Ind{A_t = a}}
    = \sum_{a \in \brs*{K}}\Delta_a \E \brs*{n_{T+1}(a)}.
\end{align*}
Furthermore, observe that for all $a\in\brs*{K}$:
\begin{align*}
\E \brs*{n_{T+1} \br*{a}} 
&\overset{(1)}{=} \E \brs*{\sum_{t=1}^{T} \Ind{A_t = a}} 
= \sum_{t=1}^{T} \E \brs*{ \Ind{A_t = a}}
= \sum_{t=1}^{T} \E \brs*{\Pr \br*{{A_t = a} | \F_{t-1}}} 
\overset{(2)}{=} \sum_{t=1}^{T} \E \brs*{\pi_t \br*{a \mid \F_{t-1}}},
\end{align*}
where $(1)$ is by definition and $(2)$ is also by definition. 
Combining the above yields
\begin{align*}
\sum_{a \in \brs*{K}} \Delta_a \E \brs*{n_{T+1}(a)}
= \sum_{a \in \brs*{K}} \Delta_a \sum_{t=1}^{T} \E \brs*{\pi_t \br*{a \mid \F_{t-1}}} 
= \E \brs*{ \sum_{t=1}^{T} \sum_{a \in \brs*{K}} \Delta_a  \pi_t \br*{a \mid \F_{t-1}} } 
= \E \brs*{ \sum_{t=1}^{T} \Delta_{\pi_t}},
\end{align*}
where the last equality is by definition.
\end{proof}

\clearpage

\subsection{Proof of Proposition \ref{thm:ACE_implies_AS_boundedness}}\label{appendix:ACE_implies_AS_boundedness}
\ACEImpliesASBoundedness*
\begin{proof}
First, note that for any $\epsilon > 0$, by Fubini-Tonelli:
\begin{align*}
    \E \brs*{\sum_{t=1}^{\infty}\Ind{\Delta_{\pi_t} > \epsilon}} 
    = \sum_{t=1}^{\infty}\E \brs*{\Ind{\Delta_{\pi_t} > \epsilon}} 
    = \sum_{t=1}^{\infty}\Pr\br*{\Delta_{\pi_t} > \epsilon}.
\end{align*}
By $\epsilon$-ACE and the above, we have that
\begin{align*}
    \sum_{t=1}^{\infty}\Pr\br*{\Delta_{\pi_t} > \epsilon} < \infty,
\end{align*}
and so by the first lemma of Borel-Cantelli:
\begin{align*}
    \Pr (\underbrace{\forall \tau, \exists t > \tau \enspace{ s.t. }\enspace {\Delta_{\pi_t} > \epsilon}}_{\triangleq A}) = 0,
\end{align*}
which in turn implies that
\begin{align*}
    1 
    = \Pr\br*{\overline{A}}
    =\Pr \br*{\exists \tau\enspace{s.t.}\enspace \forall t > \tau: \brc*{\Delta_{\pi_t} \le \epsilon}} 
    \le \Pr \br*{\limsup_{t \rightarrow \infty}\Delta_{\pi_t} \le \epsilon}
    = 1,
\end{align*}
giving us our result.
\end{proof}
\UACEImpliesASConvergence*
\begin{proof}
    We apply the fact that $\pi$ is U-ACE for the sequence $\epsilon_k = \frac{1}{k}$, and combine with \Cref{thm:ACE_implies_AS_boundedness} to get
    \begin{align*}
       \forall k \in \N : \Pr \br*{\limsup_{t \rightarrow \infty} \Delta_{\pi_t} \le \frac{1}{k}} = 1.
    \end{align*}
    Recalling that a countable intersection of almost-sure events is an almost-sure event, the above gives us
    \begin{align*}
        \Pr \br*{\bigcap_{k=1}^{\infty}\brc*{\limsup_{t \rightarrow \infty} \Delta_{\pi_t} \le \frac{1}{k}}} = 1.
    \end{align*}
    Noting that $\Delta_{\pi_t} \ge 0$ by definition, and that
    \begin{align*}
        \bigcap_{k=1}^{\infty}\brc*{\limsup_{t \rightarrow \infty} \Delta_{\pi_t} \le \frac{1}{k}}
        \subseteq \brc*{\limsup_{t \rightarrow \infty} \Delta_{\pi_t} \le 0},
    \end{align*} 
    yields $\Pr \br*{\lim_{t \rightarrow \infty} \Delta_{\pi_t} = 0} = 1$, as desired.
\end{proof}

\clearpage

\subsection{Proof of Claim \ref{thm:UACE_implies_expected_regret}}\label{appendix:UACE_implies_expected_regret}
\UACEImpliesExpectedReget*
\begin{proof}
Following a similar decomposition to Claim 1 in \cite{LenientRegretMerlis2021}:
{\small
\begin{align*}
    R_T(\pi) 
    = \E\brs*{\sum_{t=1}^{T}\Delta_{\pi_t}} 
    = \E\brs*{\sum_{t=1}^{T}\int_{\epsilon=0}^{1}\Ind{\Delta_{\pi_t} > \epsilon}d\epsilon}
    = \int_{\epsilon=0}^{1}\E\brs*{\sum_{t=1}^{T}\Ind{\Delta_{\pi_t} > \epsilon}}d\epsilon
    \le \int_{\epsilon=0}^{1}\min\brc*{T, \E\brs*{\sum_{t=1}^{\infty}\Ind{\Delta_{\pi_t} > \epsilon}}}d\epsilon,
\end{align*}}
which by definition implies that
\begin{align*}
    R_T(\pi) 
    \le \int_{\epsilon=0}^{1}\min\brc*{T, \E\brs*{N_{\epsilon}(\pi)}}d\epsilon.
\end{align*}
We begin with the case that $\alpha > 1$. Note that for $T \leq C_{\nu}$, the bound holds trivially. For $T > C_{\nu}$, we have that $\br*{C_{\nu}/T}^{1/\alpha} < 1$, and we are given that $\E\brs*{\sum_{t=1}^{\infty}\Ind{\Delta_{\pi_t} > \epsilon}} \le C_{\nu}\epsilon^{-\alpha}$ for all $\epsilon > 0$, which yields
\begin{align*}
    \int_{\epsilon=0}^{1}\min\brc*{T, \E\brs*{N_{\epsilon}(\pi)}}d\epsilon
    \le \int_{\epsilon=0}^{1}\min\brc*{T, C_{\nu}\epsilon^{-\alpha}}d\epsilon
    \le \int_{\epsilon=0}^{\br*{C_{\nu}/T}^{1/\alpha}} T d\epsilon + \int_{\epsilon=\br*{C_{\nu}/T}^{1/\alpha}}^{1} C_{\nu}\epsilon^{-\alpha} d\epsilon,
\end{align*}
and a simple computation gives us
\begin{align*}
    \int_{\epsilon=0}^{\br*{C_{\nu}/T}^{1/\alpha}} T d\epsilon + \int_{\epsilon=\br*{C_{\nu}/T}^{1/\alpha}}^{1} C_{\nu}\epsilon^{-\alpha} d\epsilon
    = C_{\nu}^{1/\alpha}T^{\frac{\alpha - 1}{\alpha}} + \frac{C_{\nu}}{1 - \alpha}\br*{1 - \br*{C_{\nu}/T}^{\frac{1 - \alpha}{\alpha}}}
    \le \frac{\alpha}{\alpha - 1} C_{\nu}^{1/\alpha}T^{\frac{\alpha - 1}{\alpha}}.
\end{align*}
For $\alpha = 1$, we note that if $C_\nu \geq T$, then $R_T(\pi) \le T \leq C_\nu$ and so the bound holds. For $C_\nu < T$, we have that:
\begin{align*}
    \int_{\epsilon=0}^{1}\min\brc*{T, \E\brs*{N_{\epsilon}(\pi)}}d\epsilon
    \le \int_{\epsilon=0}^{1}\min\brc*{T, C_{\nu}\epsilon^{-1}}d\epsilon
    \le \int_{\epsilon=0}^{\br*{C_{\nu}/T}} T d\epsilon + \int_{\epsilon=\br*{C_{\nu}/T}}^{1} C_{\nu}\epsilon^{-1} d\epsilon,
\end{align*}
and thus
\begin{align*}
    \int_{\epsilon=0}^{\br*{C_{\nu}/T}} T d\epsilon + \int_{\epsilon=\br*{C_{\nu}/T}}^{1} C_{\nu}\epsilon^{-1} d\epsilon
    \le C_{\nu} + C_{\nu}\log\br*{\frac{T}{C_{\nu}}}. 
\end{align*}
\end{proof}

\clearpage

\subsection{Proof of Claim \ref{thm:deterministic_alg_sublinear_regret_and_U-ACE}}\label{appendix:deterministic_alg_sublinear_regret_and_U-ACE}
\deterministicAlgSublinearRegretAndUACE*
\begin{proof}
    We know the policy is a deterministic mapping from $\F_{t-1}$ to $\brs*{K}$, and as such for any history there exists $a \in \brs*{K}$ such that $\pi_t (a \vert \F_{t-1}) = 1$. Further, for any instance $\nu$ and for all $\epsilon < \Delta_{\min}$:
    \begin{align*}
        R_T\br*{\pi,\nu} 
        \overset{(1)}{=} \E\brs*{\sum_{t=1}^{T}\Delta_{A_t}}
        \overset{(2)}{\le} \E\brs*{\sum_{t=1}^{T}\Ind{\Delta_{A_t}> \epsilon}}
        \overset{(3)}{\le} \E\brs*{\sum_{t=1}^{\infty}{\Ind{\Delta_{A_t} > \epsilon}}}
        \overset{(4)}{=} \E\brs*{\sum_{t=1}^{\infty}{\Ind{\Delta_{\pi_t} > \epsilon}}}
        = \E\brs*{N_{\epsilon}\br*{\pi,\nu}},
    \end{align*}
    where (1) is by definition, (2) is since each time an arm with $\Delta_{A_t} > \epsilon$ is chosen, it incurs at most $1$ regret (since the distributions are Bernoulli), (3) is by the monotone convergence theorem and the non-negativity of the indicators, and (4) is since the policy is deterministic.
    Using this lower bound, we divide into two cases:
    \paragraph{If $\pi$ is consistent.} Fix $\nu \in \Ecal_{Ber}^{K}$ with $\Delta_{\min} > 0$ and $\mu^*<1$. From the lower bound of Lai-Robbins (\Cref{fact:Lai-Robbins_lower_bound}) we have that:
    \begin{align*}
        \liminf_{T \rightarrow \infty}\frac{R_T\br*{\pi,\nu}}{\ln(T)} > 0,
    \end{align*}
    which in turn implies that $\lim_{T\rightarrow \infty} R_T\br*{\pi,\nu} = \infty$, and thus by the lower bound $\E\brs*{N_{\epsilon}\br*{\pi,\nu}} = \infty$, proving that $\pi$ cannot be $\epsilon$-ACE in this case.
    \paragraph{If $\pi$ is not consistent.} Then there exists $p, c > 0$, instance $\nu \in \Ecal_{Ber}^{K}$ 
    and a strictly increasing sequence of natural numbers $\brc*{T_i}_{i \in \N}$ such that $R_{T_i}(\pi,\nu) \ge cT_i^{p}$. 
    Together with the monotonicity of the expected regret, this gives $\lim_{T \rightarrow \infty}R_T(\pi,\nu) = \infty$.
    Thus for all $\epsilon < \Delta_{\min}$, by the above we will again get $\E\brs*{N_{\epsilon}\br*{\pi,\nu}} = \infty$, 
    proving our desired result.
\end{proof}

\newpage
\section{Non-Uniform Algorithms -- Omitted Proofs}
\subsection{Proof of Theorem \ref{thm:known_epsilon_upper_bound}}\label{appendix:known_epsilon_upper_bound}
\KnownEpsilonUpperBound*
\begin{proof} For brevity, we write $\pi = \pi^{\text{SET$\epsilon$-G}}$.
Note that when $\epsilon \ge 1$, since $\Delta_{\pi_t} \le 1$, we have that $N_{\epsilon}(\pi) = 0$ a.s., thus $\E\brs*{N_{\epsilon}(\pi)} = 0$, which trivially satisfies the upper bound.
We can therefore assume that $\epsilon \in (0,1)$ and is known to the algorithm. We denote 
\begin{align*}
    \Acal^* \triangleq \brc*{a \in \brs*{K} : \Delta_a = 0},
    \qquad
    \Acal_{sub} \triangleq \brc*{a \in \brs*{K} : \Delta_a > 0},
\end{align*}
noting that if $\mathcal A_{\mathrm{sub}} = \emptyset$, then again $N_{\epsilon}(\pi) = 0$ a.s., thus $\E\brs*{N_{\epsilon}(\pi)} = 0$,
which trivially satisfies the upper bound.
We henceforth assume that $\mathcal A_{\mathrm{sub}} \ne \emptyset$, let $q \triangleq \frac{\epsilon^2}{16K}$ and define for all $n \ge 1$ the confidence radius
    \begin{align*}
        \beta_n 
        \triangleq\sqrt{\frac{1}{2n}\ln\br*{\frac{4Kn^2}{q}}}
        = \sqrt{\frac{1}{2n}\ln\br*{\frac{64K^2n^2}{\epsilon^2}}}.
    \end{align*}
    \begin{restatable}{lemma}{BetaDecreasingInn}\label{lem:Beta_decreasing}
        $\beta_n$ is decreasing in $n$.
    \end{restatable}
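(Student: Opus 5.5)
The goal is to show that $\beta_n^2 = \frac{1}{2n}\ln\br*{\frac{64K^2n^2}{\epsilon^2}}$ is decreasing in $n \ge 1$. Since $\beta_n \ge 0$, this is equivalent to $\beta_n$ itself decreasing. The plan is to extend the sequence to the continuous function $f(x) = \frac{\ln(c x^2)}{2x}$ with $c \triangleq \frac{64K^2}{\epsilon^2}$ and show $f'(x) < 0$ for all $x \ge 1$.

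Writing $f(x) = \frac{\ln c + 2\ln x}{2x}$, a direct computation gives
\begin{align*}
    f'(x) = \frac{2 - \ln c - 2\ln x}{2x^2}.
\end{align*}
So it suffices to check that $\ln c + 2\ln x > 2$ for all $x \ge 1$, i.e.\ that $\ln(cx^2) > 2$ on this range.

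This reduces to checking the size of $c$. We are in the case $\epsilon \in (0,1)$ and $K \ge 1$ (with $K \ge 2$ whenever $\mathcal A_{sub} \ne \emptyset$ and an optimal arm exists), so $c \ge 64 > e^2$. Since $x^2 \ge 1$, we get $\ln(cx^2) \ge \ln 64 > 2$. Hence $f' < 0$ on $[1,\infty)$, and $f$ is strictly decreasing there. Restricting to integers shows $\beta_n^2$, and therefore $\beta_n$, is strictly decreasing in $n$.

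There is no real obstacle. The only point needing care is confirming that the constant inside the logarithm exceeds $e^2$, which holds because of the factor $64$ together with $\epsilon < 1$.
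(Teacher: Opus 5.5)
Your proposal is correct and matches the paper's proof: extend to $f(x)=\frac{1}{2x}\ln\br*{\frac{64K^2x^2}{\epsilon^2}}$, compute $f'(x)=\frac{2-\ln(64K^2x^2/\epsilon^2)}{2x^2}$, and use $\ln(64K^2x^2/\epsilon^2)\ge\ln 64>2$ for $K\ge1$, $\epsilon\in(0,1)$, $x\ge1$. Your aside about $K\ge 2$ is unnecessary, since $K\ge 1$ already gives the bound.
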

    Proof of this lemma can be found in \Cref{subsec:known_epsilon_lemmas}.
For $x \in (0,1]$, define $n(x)\triangleq \min\brc*{n \ge 1 :\beta_n < \frac{x}{8}}$, and let $n_\epsilon\triangleq n\br*{\frac{\epsilon}{4}}$ be the maximal number of successive-elimination epochs.
For our analysis, we define the good event to be
\begin{align*}
    G \triangleq \bigcap_{a \in [K]}\bigcap_{n=1}^{\infty}\brc*{\abs{\hat{\mu}_{a,n}-\mu_a}\le\beta_n}.
\end{align*}
Note that by Hoeffding's inequality (\Cref{fact:hoeffding_inequality}) and the definition of $\beta_n$:
\begin{align*}
    \Pr\br*{\abs{\hat{\mu}_{a,n}-\mu_a}>\beta_n}
    \le2e^{-2n\beta_n^2} 
    =2\exp\br*{-\ln\br*{\frac{4Kn^2}{q}}}
    =\frac{q}{2Kn^2},
\end{align*}
and so by the union bound,
\begin{align*}
    \Pr\br*{\overline{G}}
    \le\sum_{a=1}^K\sum_{n=1}^{\infty}\frac{q}{2Kn^2} 
    =\frac{q}{2}\sum_{n=1}^{\infty}\frac{1}{n^2}
    \le q.
    \end{align*}
We decompose the expected sample complexity of $\pi$ into the two phases: the successive elimination (SE) phase and the $\epsilon$-Greedy ($\epsilon$-G) phase:
\begin{align*}
    \E\brs*{N_{\epsilon}(\pi)} 
    = \E\brs*{\sum_{t=1}^{\infty}\Ind{\Delta_{\pi_t}>\epsilon}} 
    = \underbrace{\E\brs*{\sum_{t=1}^{T_{SE}}\Ind{\Delta_{\pi_t}>\epsilon}}}_{{SE}} 
    + \underbrace{\E\brs*{\sum_{t=T_{SE} + 1}^{\infty}\Ind{\Delta_{\pi_t}>\epsilon}}}_{{\epsilon-G}},
\end{align*}
where $T_{SE}$ denotes the round in which the SE phase terminated.
We analyze each phase of the algorithm separately, starting with the SE phase.

Our analysis follows a simple argument, showing that under the good event, bad arms are eliminated proportionally to their suboptimality, while the probability of the bad event balances the worst-case expected sample complexity in this phase.

\paragraph{Optimal arms are not eliminated under $G$}
Let $a^* \in \Acal^*$ be some optimal arm. We show that under the event $G$, $a^*$ is never eliminated in the SE phase.
Suppose $a^*$ is active at epoch $n$. For any other active arm $b$, it holds that:
\begin{align*}
    \hat{\mu}_{b,n} \le \mu_b + \beta_n \le \mu^* + \beta_n.
\end{align*}
Moreover, note that $\hat{\mu}_{a^*, n} \ge \mu^* - \beta_n$, and thus $\hat{\mu}_{b,n} - \hat{\mu}_{a^*,n} \le 2\beta_n < 6\beta_n$, 
implying that $a^*$ is not eliminated at the end of epoch $n$. This is true for any $n$, and since all arms are active in the first epoch, we have that $a^*$ is never eliminated in the SE phase under the good event $G$.

\paragraph{Suboptimal arms are eliminated proportionally to their suboptimality under $G$}
Now, fix a suboptimal arm $a \in \Acal_{sub}$. We show that under $G$, $a$ is eliminated after at most $n(\Delta_a)$ epochs.
Suppose $a$ is active at epoch $n = n(\Delta_a)$. Thus, on $G$:
\begin{align*}
    \hat{\mu}_{a^*, n} - \hat{\mu}_{a, n} 
    \ge \mu^* - \beta_n - \br*{\mu_a + \beta_n} 
    = \Delta_a - 2\beta_n.
\end{align*}
By the definition of $n$, we have that $\beta_n < \frac{\Delta_a}{8}$, thus we get
\begin{align*}
    \hat{\mu}_{a^*, n} - \hat{\mu}_{a, n} \ge \Delta_a - \frac{\Delta_a}{4} = \frac{3\Delta_a}{4}.
\end{align*}
Since $6\beta_{n} \le \frac{3\Delta_a}{4}$, $a$ is therefore eliminated by $a^*$ after at most ${n(\Delta_a)}$ epochs if $n_{\epsilon} > n(\Delta_a)$. 

This implies that on $G$, every suboptimal arm $a$ is pulled for at most $\min\brc*{n(\Delta_a), n_{\epsilon}}$ epochs during the SE phase. Moreover, since the policy in this phase is deterministic, we have that:
\begin{align*}
    \E\brs*{\sum_{t=1}^{T_{SE}}\Ind{\Delta_{\pi_t}>\epsilon}\Ind{G}} 
    \le \sum_{a : \Delta_a > \epsilon} \min\brc*{n(\Delta_a), n_{\epsilon}} 
    \le \sum_{a : \Delta_a > \epsilon} n(\Delta_a),
\end{align*}
noting that if all arms have $\Delta_{a} \le \epsilon$, then $\Delta_{\pi_t} \le \epsilon$ and as such $N_{\epsilon}(\pi) = 0$, aligning with the above.
Furthermore, we can trivially bound the sample complexity by noting that each epoch contains at most $K$ rounds, and the total number of epochs is bounded by $n_{\epsilon}$, implying that $\sum_{t=1}^{T_{SE}}\Ind{\Delta_{\pi_t}>\epsilon} \le K n_{\epsilon}$ almost surely, which gives us that
\begin{align*}
    \E\brs*{\sum_{t=1}^{T_{SE}}\Ind{\Delta_{\pi_t}>\epsilon}\Ind{\overline{G}}} 
    \le \Pr\br*{\overline{G}}K n_{\epsilon} 
    \le q K n_{\epsilon}.
\end{align*}
We give a bound for $n(x)$ in the following lemma:
\begin{restatable}{lemma}{SEHorizonBound}\label{lem:known_epsilon_SE_horizon_bound}
    For every $x\in(0,1]$,
\begin{align*}
    n(x)\le \ceil*{\frac{2048}{x^2}\ln\br*{\frac{1024K}{\epsilon x}}},
    \qquad
    and
    \qquad
    n_\epsilon \le \ceil*{\frac{32768}{\epsilon^2}\ln\br*{\frac{4096K}{\epsilon^2}}}.
\end{align*}
\end{restatable}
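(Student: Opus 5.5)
The approach is to exhibit an explicit $n$ that satisfies $\beta_n < x/8$. Because $\beta_n$ is decreasing (\Cref{lem:Beta_decreasing}), the minimum $n(x)$ is then at most that $n$. The condition $\beta_n < x/8$ is equivalent to
\begin{align*}
    \frac{1}{2n}\ln\br*{\frac{64K^2n^2}{\epsilon^2}} < \frac{x^2}{64},
    \qquad\text{i.e.}\qquad
    n > \frac{64}{x^2}\ln\br*{\frac{8Kn}{\epsilon}}.
\end{align*}
This is an implicit inequality of the standard form $n > A\ln(Bn)$, with $A = 64/x^2$ and $B = 8K/\epsilon$. The plan is to invoke the elementary fact that $n \ge 2A\ln(2AB)$ implies $n > A\ln(Bn)$, or a slightly lossier variant with larger constants. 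This fact is proved by checking the inequality at $n_0 = cA\ln(AB)$ and noting that $n - A\ln(Bn)$ is increasing for $n \ge A$.

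Concretely, I would take $n_0 = \ceil*{\frac{2048}{x^2}\ln\br*{\frac{1024K}{\epsilon x}}}$ and verify $n_0 > \frac{64}{x^2}\ln\br*{8Kn_0/\epsilon}$. Write $L \triangleq \ln\br*{\frac{1024K}{\epsilon x}}$, and note $L \ge \ln 1024 > 6$ since $K\ge1$ and $\epsilon, x\le 1$. Then $n_0 \le \frac{2048}{x^2}L + 1 \le \frac{4096}{x^2}L$. Hence
\begin{align*}
    \ln\br*{\frac{8Kn_0}{\epsilon}}
    \le \ln\br*{\frac{32768 K}{\epsilon x^2}} + \ln L
    \le 2\ln\br*{\frac{1024K}{\epsilon x}} + \ln 32 + \ln L,
\end{align*}
using $\frac{1}{x^2}\le \br*{\frac{1024 K}{\epsilon x}}^2/1024^2$, which holds up to the crude constants. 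Since $\ln L \le L$ and $\ln 32 < L$, the right-hand side is at most $4L$. Therefore $\frac{64}{x^2}\ln(8Kn_0/\epsilon) \le \frac{256}{x^2}L < \frac{2048}{x^2}L \le n_0$. This gives the first bound, with ample slack in the constants.

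For $n_\epsilon$, I note that by the paper's convention $n_\epsilon = n(\epsilon/4)$, i.e.\ the threshold $\beta_n<\epsilon/32$ from the algorithm. Substituting $x = \epsilon/4 \in (0,1]$ into the first bound gives $\frac{2048\cdot 16}{\epsilon^2} = \frac{32768}{\epsilon^2}$ and $\ln\br*{\frac{1024K\cdot 4}{\epsilon^2}} = \ln\br*{\frac{4096K}{\epsilon^2}}$, which is exactly the claimed bound. The only real obstacle is the bookkeeping in the implicit-inequality step: making sure the $\ln n_0$ term is absorbed by the leading logarithm, using $\ln L\le L$ and the lower bound $L>6$. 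Everything else is substitution.
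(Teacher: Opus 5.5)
Your argument is correct and is essentially the paper's proof: both set $L=\ln(1024K/(\epsilon x))$, show that $n_0\approx 2048L/x^2$ satisfies $\beta_{n}<x/8$ by bounding the logarithm by a constant multiple of $L$ via $\ln L\le L$ and $L\ge\ln 1024$, and then substitute $x=\epsilon/4$. The one difference is that the paper evaluates at the real point $2048L/x^2$ and passes to its ceiling through \Cref{lem:Beta_decreasing}, whereas you check the integer $\lceil 2048L/x^2\rceil$ directly, so monotonicity is not needed; also, the step $\ln(32768K/(\epsilon x^2))\le 2L+\ln 32$ is more cleanly justified by $1/x\le 1024K/(\epsilon x)$, i.e.\ $1024K\ge\epsilon$, than by the inequality on $1/x^2$ you quote.
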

Proof of \Cref{lem:known_epsilon_SE_horizon_bound} can be found in \Cref{subsec:known_epsilon_lemmas}.
Applying this result to the previous bound, we get
\begin{align*}
    \E\brs*{\sum_{t=1}^{T_{SE}}\Ind{\Delta_{\pi_t}>\epsilon}}
    &= \E\brs*{\sum_{t=1}^{T_{SE}}\Ind{\Delta_{\pi_t}>\epsilon}\Ind{G}}
    + \E\brs*{\sum_{t=1}^{T_{SE}}\Ind{\Delta_{\pi_t}>\epsilon}\Ind{\overline{G}}} \\
    &\le \sum_{a : \Delta_a > \epsilon} n(\Delta_a) + q K n_{\epsilon} \\
    &\le \sum_{a : \Delta_a > \epsilon} \ceil*{\frac{2048}{\Delta_a^2}\ln\br*{\frac{1024K}{\epsilon \Delta_a}}} 
    + \frac{\epsilon^2}{16} \ceil*{\frac{32768}{\epsilon^2}\ln\br*{\frac{4096K}{\epsilon^2}}},
\end{align*}
implying that
\begin{align}\label{eq:known_epsilon_SE_bound}
    \E\brs*{\sum_{t=1}^{T_{SE}}\Ind{\Delta_{\pi_t}>\epsilon}} 
    = \Ocal\br*{\sum_{a : \Delta_a > \epsilon}{\frac{1}{\Delta_a^2}\ln\br*{\frac{K}{\epsilon \Delta_a}}} + {\ln\br*{\frac{K}{\epsilon^2}}}}.
\end{align}
Noting that for $\Delta_a > \epsilon$ we have $\ln\br*{\frac{K}{\epsilon\Delta_a}} \le \ln\br*{\frac{K}{\epsilon^2}}$, we can absorb the additional term since $\frac{1}{\Delta_a^2} \ge 1$ to get the desired bound of $\Ocal\br*{\sum_{a : \Delta_a > \epsilon}{\frac{1}{\Delta_a^2}\ln\br*{\frac{K}{\epsilon^2}}}}$.

Next, we analyze the $\epsilon$-greedy phase of the algorithm.

Our analysis is again relatively simple; we show that the probability of a bad arm being empirically best decays exponentially in the number of rounds, but is also bounded by $q$,
which creates a balance between the factors in $\epsilon$ and in $\Delta_a$, which leads us to our desired result.

Let $s = 1, 2,\ldots$ denote the time in the $\epsilon$-greedy phase, and $g_s$ denote the greedy arm at round $s$.
By the definition of $\pi_s$, we have that
\begin{align*}
    \Delta_{\pi_s} = \br*{1 - \frac{\epsilon}{4}}\Delta_{g_s} + \frac{\epsilon}{4K}\sum_{a\in\brs*{K}}\Delta_a \le \Delta_{g_s} + \frac{\epsilon}{4},
\end{align*}
which implies that $\brc*{\Delta_{\pi_s} > \epsilon} \subseteq \brc*{\Delta_{g_s} > \frac{3\epsilon}{4}}$. 
We show the following:
\begin{lemma}\label{lem:known_epsilon_uniform_bound_prob}
    For all $s$ and for any arm with $\Delta_a > \frac{3\epsilon}{4}$, it holds that
    \begin{align*}
        \Pr\br*{g_s =a} \le q.
    \end{align*}
\end{lemma}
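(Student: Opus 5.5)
We bound $\Pr\br*{g_s = a}$ by splitting on the good event $G$ defined above. Since $\Pr\br*{\overline{G}} \le q$ was already shown, it suffices to prove that on $G$ no arm with $\Delta_a > \frac{3\epsilon}{4}$ can be the greedy arm at any round $s$ of the $\epsilon$-greedy phase. The cleaner route is to show the event inclusion $\brc*{g_s = a} \subseteq \overline{G}$ outright, so that $\Pr\br*{g_s = a} \le \Pr\br*{\overline{G}} \le q$. The key facts will be the guarantees about Phase I under $G$, namely that optimal arms survive and that every arm has many samples when Phase I ends, combined with the fact that $\beta_n$ is decreasing (\Cref{lem:Beta_decreasing}).

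\textbf{Step 1: sample counts entering Phase II.} Fix an optimal arm $a^*$. Under $G$, $a^*$ is never eliminated. Phase I therefore ends in one of two ways: after $n_\epsilon$ epochs, or earlier because $|\mathcal A_{n+1}|=1$ at some epoch $n$. In the latter case the unique survivor is $a^*$, since any optimal arm survives.
\begin{itemize}
\item[] \emph{Case (i): Phase I runs all $n_\epsilon$ epochs.} Then every surviving arm, including $a^*$ and any surviving bad arm $a$, has at least $n_\epsilon$ samples.
\item[] \emph{Case (ii): Phase I stops early with $a^*$ alone.} Then each eliminated arm $a$ was removed at some epoch $m$ with $\hat\mu_{a,m} < \max_b \hat\mu_{b,m} - 6\beta_m$. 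On $G$ this gives $\mu_a + \beta_m \ge \hat\mu_{a,m}$ and $\max_b\hat\mu_{b,m}\le \mu^*+\beta_m$, so the elimination certifies $\Delta_a > 4\beta_m$. However, this lower bound on the gap says nothing about the future empirical mean of $a$, which may change in Phase II.
\end{itemize}

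\textbf{Step 2: the empirical mean is controlled on $G$ at all times.} The point that rescues Case (ii) is that $G$ controls $\hat\mu_{a,n}$ for \emph{all} $n$, including samples gathered in Phase II. So at any Phase II round, with $n_a = n_s(a)$ and $n^* = n_s(a^*)$, we have on $G$
\begin{align*}
\hat\mu_s(a) \le \mu_a + \beta_{n_a}, \qquad \hat\mu_s(a^*) \ge \mu^* - \beta_{n^*}.
\end{align*}
Hence $g_s = a$ requires $\Delta_a \le \beta_{n_a} + \beta_{n^*}$. The samples of $a$ never drop below those it had at the end of Phase I, and the same holds for $a^*$, so by monotonicity of $\beta_n$ we can use those initial counts. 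Note that $n^* \ge n_a$ in Phase I, because $a^*$ stays active at least as long as $a$.

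\textbf{Step 3: closing each case.}
\begin{itemize}
\item[] \emph{Case (i).} Both counts are at least $n_\epsilon$, and $\beta_{n_\epsilon} < \epsilon/32$. Hence $\Delta_a \le \epsilon/16 < 3\epsilon/4$, a contradiction.
\item[] \emph{Case (ii).} Arm $a$ was eliminated at epoch $m$, so $n_a \ge m$, and $n^* \ge m$ as well, giving $\Delta_a \le 2\beta_m$. But elimination gave $\Delta_a > 4\beta_m$, again a contradiction.
\end{itemize}
One subtlety is the initialization: the Phase II empirical means are built from the Phase I samples. The algorithm's $\hat\mu_s(a)$ equals $\hat\mu_{a,n}$ in the table sense, so $G$ applies directly. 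Arms with zero samples do not arise, since every arm is pulled in epoch 1.

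The main obstacle is Case (ii): making sure that an eliminated arm cannot later become greedy. This relies on the elimination margin $6\beta_m$ versus $2\beta_m$ and on the fact that $a^*$ has at least as many samples as $a$. With these in hand, $\brc*{g_s=a}\subseteq\overline G$ holds for every $s$, which gives the lemma.
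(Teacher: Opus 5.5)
Your argument is essentially the paper's: show $\{g_s=a\}\subseteq\overline{G}$, handle arms that survive Phase I through concentration with at least $n_\epsilon$ samples, and handle eliminated arms by comparing the elimination margin $6\beta_m$ (which certifies $\Delta_a>4\beta_m$) with the concentration width $2\beta_m$, which persists into Phase II because $\beta_n$ is decreasing. The one difference is in the survivor case. The paper uses the elimination rule at epoch $n_\epsilon$ to show survivors have $\Delta_a\le\epsilon/4$, whereas you bound $\Delta_a\le 2\beta_{n_\epsilon}<\epsilon/16$ directly from concentration; both work.

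There are two small things to repair.

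\emph{Missing subcase.} Your cases are indexed by how Phase I ends, and in Case (i) you only treat a bad arm $a$ that \emph{survives}. An arm eliminated at some epoch $m<n_\epsilon$, while Phase I still runs all $n_\epsilon$ epochs, is not covered as written. Its sample count entering Phase II is only $m$, so ``both counts are at least $n_\epsilon$'' fails for it. Your Case (ii) argument never uses early termination, so it closes this subcase verbatim. Just split on whether $a$ was eliminated or survived, as the paper does.

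\emph{Inequality direction.} To derive $\Delta_a>4\beta_m$ from the elimination condition, you need the lower bound $\hat\mu_{a,m}\ge\mu_a-\beta_m$, not the upper bound $\hat\mu_{a,m}\le\mu_a+\beta_m$ that you wrote. Both hold on $G$, so this is only a typo.
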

\begin{proof}
We begin by showing that under $G$, if an arm survives the SE phase, then it must hold that $\Delta_a \le \epsilon/4$. 

Let $a$ be an arm that is not eliminated in the SE phase. We divide into cases:

If the SE phase did not terminate early, then by definition we terminate at epoch $n_{\epsilon}$, in which we have $\beta_{n_{\epsilon}} \le \frac{\epsilon}{32}$.
Suppose towards contradiction that $\Delta_a > \epsilon/4 \ge 8\beta_{n_{\epsilon}}$. For arm $a$ that survived to epoch $n_{\epsilon}$, we have that on $G$,
\begin{align*}
    \hat{\mu}_{a^*,n_{\epsilon}} - \hat{\mu}_{a,n_{\epsilon}}
    \ge \mu^* - \beta_n - \br*{\mu_a + \beta_n} 
    = \Delta_a - 2\beta_n
    > 6\beta_{n_{\epsilon}},
\end{align*}
which implies that $a$ is eliminated, which is a contradiction. Thus $\Delta_a \le \frac{\epsilon}{4}$.

If the SE phase did terminate early, then by definition there is only one active arm remaining, which must be the optimal arm, since an optimal arm is never eliminated on $G$. 

In either case, for all surviving arms, it holds that $\Delta_a \le \epsilon/4$.

Next, we show that eliminated arms never become empirical best on $G$. 
Let $a$ be a suboptimal arm which was eliminated at some epoch $n$. Then there exists an arm $b$ such that $\hat{\mu}_{b,n} - \hat{\mu}_{a,n} > 6\beta_n$.

On $G$, we have that $\hat{\mu}_{b,n} \le \mu_b + \beta_n \le \mu^* + \beta_n$, while $\hat{\mu}_{a,n} \ge \mu_a -\beta_n = \mu^* - \Delta_a - \beta_n$.
Combining, we get
\begin{align*}
     \mu^* + \beta_n - (\mu^* - \Delta_a - \beta_n) > 6\beta_n.
\end{align*}
This implies that $\Delta_a/4 > \beta_n$. Note that $a^*$ is still active at epoch $n$, thus has at least $n$ samples. Furthermore, arm $a$ also has at least $n$ samples. At every future time $t$, their sample count can only increase, and so by \Cref{lem:Beta_decreasing} we have that
\begin{align*}
    \beta_{n_t(a)} \le \beta_n, \qquad \beta_{n_t(a^*)} \le \beta_n.
\end{align*}
Thus, on $G$:
\begin{align*}
    \hat{\mu}_t(a^*) - \hat{\mu}_t(a) 
    \ge \br*{\mu^* -  \beta_{n_t(a^*)}} - \br*{\mu_a + \beta_{n_t(a)}}
    \ge \br*{\mu^* - \beta_n} - \br*{\mu_a + \beta_n}
    = \Delta_a - 2\beta_n
    > \Delta_a/2 > 0,
\end{align*}
Therefore, eliminated arms will not become empirically best after the SE phase on $G$.

We have shown that arms that survive the entire SE phase have $\Delta_a \le \epsilon/4$, and that arms that are eliminated do not become empirically best again, which implies that under $G$, in round $s$ of the $\epsilon$-greedy phase, the chosen empirical-best arm $g_s$ satisfies $\Delta_{g_s} \le \epsilon/4$. Taking the complement gives us
\begin{align*}
    \brc*{\Delta_{g_s} > \frac{3\epsilon}{4}} \subseteq \brc*{\Delta_{g_s} > \frac{\epsilon}{4}} \subseteq \overline{G},
\end{align*}
yielding the desired result.
\end{proof}
\Cref{lem:known_epsilon_uniform_bound_prob} gives a uniform bound on the probability of choosing a bad arm. The next lemma gives us a decaying bound that allows us to sum the tail of the expected sample complexity:
\begin{lemma}\label{lem:known_epsilon_decaying_bound_prob}
    For all $s$, and for any arm $a$ with $\Delta_a > \frac{3\epsilon}{4}$, it holds that
    \begin{align*}
    \Pr\br*{g_s = a} \le 6e^{-\frac{\epsilon(s-1)\Delta_a^2}{32K}}.
\end{align*}
\end{lemma}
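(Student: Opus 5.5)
To prove \Cref{lem:known_epsilon_decaying_bound_prob}, I would show that by round $s$ of the $\epsilon$-greedy phase both arm $a$ and some optimal arm $a^*$ have, with high probability, been sampled linearly often through the uniform exploration alone. Once that holds, Hoeffding-type concentration (\Cref{fact:Hoeffding_maximal}) makes it exponentially unlikely that $\hat\mu_s(a)\ge\hat\mu_s(a^*)$. Note that $g_s=a$ requires $\hat\mu_s(a)\ge \hat\mu_s(a^*)$. Write $m\triangleq \frac{\epsilon(s-1)}{8K}$, which is half the expected number of uniform-exploration pulls of a fixed arm in the first $s-1$ rounds of the phase (each round explores with probability $\epsilon/4$ and picks the arm with probability $1/K$). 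The plan is the decomposition
\begin{align*}
\Pr\br*{g_s=a}\le \Pr\br*{n_s(a)< m}+\Pr\br*{n_s(a^*)< m}+\Pr\br*{\hat\mu_s(a)\ge\hat\mu_s(a^*),\ n_s(a)\ge m,\ n_s(a^*)\ge m}.
\end{align*}

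For the first two terms, I would lower bound $n_s(b)$ by the number of rounds $r<s$ with $B_r=1$ and $a_r=b$. These indicators are i.i.d.\ Bernoulli with mean $\frac{\epsilon}{4K}$, independent of rewards, since they are part of the internal randomization $U_r$. The multiplicative Chernoff bound (\Cref{fact:Chernoff_bernoulli}) with $\delta=1/2$ then gives $\Pr\br*{n_s(b)<m}\le e^{-\epsilon(s-1)/(32K)}$. Because $\Delta_a\le 1$, this is at most $e^{-\epsilon(s-1)\Delta_a^2/(32K)}$, so these two terms contribute $2e^{-\epsilon(s-1)\Delta_a^2/(32K)}$. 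The Phase I samples only increase $n_s(b)$, so they cause no problem here.

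For the third term, if $\hat\mu_s(a)\ge \hat\mu_s(a^*)$, then either $\hat\mu_s(a)-\mu_a\ge \Delta_a/2$ or $\mu^*-\hat\mu_s(a^*)\ge\Delta_a/2$. I would work in the table model, where $\hat\mu_s(b)=\tilde\mu_{b,n_s(b)}$, and bound each event by $\Pr\br*{\exists n\ge m: |\tilde\mu_{b,n}-\mu_b|\ge \Delta_a/2}$. Applying \Cref{fact:Hoeffding_maximal} to the i.i.d.\ row of arm $b$ with $\iota\equiv1$ and $c=1$ bounds this by $2e^{-2m\Delta_a^2/4}=2e^{-\epsilon(s-1)\Delta_a^2/(16K)}$. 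This is again at most $2e^{-\epsilon(s-1)\Delta_a^2/(32K)}$, so the third term contributes $4e^{-\epsilon(s-1)\Delta_a^2/(32K)}$. Summing all three terms gives the constant $6$.

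The main obstacle is that $n_s(b)$ is random and depends on the observed rewards through the greedy choices. Fixed-sample Hoeffding therefore cannot be applied directly. The table model together with the maximal inequality over all $n\ge m$ is exactly what decouples the sample count from the sample values, and I expect this step to need care. A second point to handle carefully is that $m$ need not be an integer, and that for small $s$ the bound is trivial since $6e^{0}\ge 1$.
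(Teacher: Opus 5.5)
Your proposal is correct and follows essentially the same argument as the paper: a multiplicative Chernoff bound on the uniform-exploration counts of $a$ and $a^*$ (your $m$ is the paper's $m_s/2$), followed by \Cref{fact:Hoeffding_maximal} to control both empirical means at the random sample count, which gives the same bound $2e^{-\epsilon(s-1)/(32K)} + 4e^{-\epsilon(s-1)\Delta_a^2/(16K)} \le 6e^{-\epsilon(s-1)\Delta_a^2/(32K)}$. The only cosmetic difference is that you apply the maximal inequality directly to the i.i.d.\ table row with $\iota\equiv 1$, while the paper invokes it with the filtration $\Gcal$; both decouple the sample count from the sample values in the same way.
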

\begin{proof}
 Fix a suboptimal arm $a$ satisfying $\Delta_a > \frac{3\epsilon}{4}$, and let $a^* \in \Acal^*$.
 Denote by $M_s(b)$ the number of exploration rounds in which arm $b$ was chosen during the first $s-1$ $\epsilon$-greedy phase rounds, and note that by the definition of the policy 
\begin{align*}
    M_s(a), M_s(a^*) \sim Bin\br*{s-1, \frac{\epsilon}{4K}}.
\end{align*}
Let $m_s \triangleq \frac{(s-1)\epsilon}{4K}$. Lastly, let $N_s(b)$ denote the total number of draws of arm $b$ before round $s$ of the $\epsilon$-greedy phase, including all pulls from the SE phase.  As such, the empirical mean used in the algorithm for arm $b$ is given by
\begin{align*}
    \hat{\mu}^{\pi}_s(b) = \frac{1}{N_s(b)}\sum_{j=1}^{N_s(b)}\tilde{X}_{b,j}.
\end{align*}
We denote:
    \begin{align*}
        B_{b,s} \triangleq \brc*{\abs{\hat{\mu}^{\pi}_s(b) - \mu_b} \ge \frac{\Delta_a}{2}}, 
        \qquad D_{b,s} = \brc*{M_s(b) \ge \frac{m_s}{2}}.
    \end{align*}
Note that on the event $B_{a,s}^C \cap B_{a^*,s}^C$, we have that
\begin{align*}
    \hat{\mu}^{\pi}_s(a) < \mu_a + \frac{\Delta_a}{2} = \mu^* - \frac{\Delta_a}{2} < \hat{\mu}^{\pi}_s(a^*),
\end{align*}
implying that arm $a$ is not empirically best at round $s$, and specifically $g_s \ne a$. Taking the complement and including the events $D_{a,s}, D_{a^*,s}$, we get:
\begin{align*}
    \brc*{g_s = a} \subseteq B_{a,s} \cup B_{a^*,s} \subseteq \br*{D_{a,s}^C \cup D_{a^*,s}^C} \cup \br*{B_{a,s} \cap D_{a,s}} \cup \br*{B_{a^*,s} \cap D_{a^*,s}}, 
\end{align*}
and thus by the union bound:
\begin{align*}
    \Pr\br*{g_s =a} \le \underbrace{\Pr\br*{D_{a,s}^C \cup D_{a^*,s}^C}}_{(*)} + \underbrace{\Pr\br*{B_{a,s} \cap D_{a,s}} + \Pr\br*{B_{a^*,s} \cap D_{a^*,s}}}_{(**)}.
\end{align*}
To bound $(*)$, we apply the union bound and the multiplicative Chernoff bound (\Cref{fact:Chernoff_bernoulli}) and get that
\begin{align*}
    \Pr\br*{D_{a,s}^C \cup D_{a^*,s}^C} \le \Pr\br*{M_s(a) < \frac{m_s}{2}} + \Pr\br*{M_s(a^*) < \frac{m_s}{2}} \le 2e^{-m_s/8}.
\end{align*}
To bound $(**)$, we note that $N_s(b) \ge M_s(b)$ and apply \Cref{fact:Hoeffding_maximal} on arms $a, a^*$ with the filtration $\Gcal$ to get:
\begin{align*}
    \Pr\br*{B_{a,s} \cap D_{a,s}} + \Pr\br*{B_{a^*,s} \cap D_{a^*,s}} \le 2e^{-m_s\Delta_a^2/4} + 2e^{-m_s\Delta_a^2/4} = 4e^{-m_s\Delta_a^2/4}.
\end{align*}
Combining these results, we get
\begin{align*}
    \Pr\br*{g_s = a} \le 2e^{-m_s/8} + 4e^{-m_s\Delta_a^2/4} \le 6e^{-m_s\Delta_a^2/8},
\end{align*}
where the last inequality is since for $\Delta_a \le 1$, we have that $m_s\Delta_a^2/8 \le m_s/8$ and $m_s\Delta_a^2/8 \le m_s\Delta_a^2/4$.
Recalling that $m_s = \frac{\epsilon(s-1)}{4K}$, we get
\begin{align*}
    \Pr\br*{g_s = a} \le 6e^{-\frac{\epsilon(s-1)\Delta_a^2}{32K}},
\end{align*}
as desired.
\end{proof}
Combining \Cref{lem:known_epsilon_uniform_bound_prob,lem:known_epsilon_decaying_bound_prob}, we get
\begin{align*}
    \Pr\br*{g_s = a} \le \min\brc*{q, 6e^{-\frac{\epsilon(s-1)\Delta_a^2}{32K}}}.
\end{align*}
Applying the union bound and taking the above bound, we get
{\small
\begin{align*}
     \E\brs*{\sum_{t=T_{SE} + 1}^{\infty}\Ind{\Delta_{\pi_t}>\epsilon}}
     \le \E\brs*{\sum_{s=1}^{\infty}\Ind{\Delta_{g_s} > \frac{3\epsilon}{4}}}
     \le \sum_{s=1}^{\infty}\sum_{a:\Delta_a > \frac{3\epsilon}{4}}\Pr\br*{g_s = a}
     \le \sum_{a:\Delta_a > \frac{3\epsilon}{4}}\sum_{s=1}^{\infty}6\min\brc*{q, e^{-\frac{\epsilon(s-1)\Delta_a^2}{32K}}}.
\end{align*}}
We bound the inner sum using the following lemma.
\begin{restatable}{lemma}{knownEpsilonMinSumBound}\label{lem:min_constant_exponential_sum}
    Let $0< q < 1$ and $\lambda\in(0,1]$. Then
\begin{align*}
    \sum_{j=0}^{\infty}\min\brc*{q,e^{-\lambda j}}
    \le q+\frac{q}{\lambda}\br*{\ln\br*{\frac{1}{q}}+2}.
\end{align*}
\end{restatable}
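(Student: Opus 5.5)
We split the sum at the index where the exponential term falls below $q$. Let $j_0 \triangleq \ceil*{\frac{1}{\lambda}\ln\br*{\frac{1}{q}}}$. For $j \ge j_0$ we have $\lambda j \ge \ln(1/q)$, so $e^{-\lambda j} \le q$. This cutoff is well defined and nonnegative since $q<1$.

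For the terms $j < j_0$, we bound each summand by $q$. There are $j_0 \le \frac{1}{\lambda}\ln\br*{\frac{1}{q}} + 1$ such terms, so their total is at most
\begin{align*}
    q j_0 \le \frac{q}{\lambda}\ln\br*{\frac{1}{q}} + q.
\end{align*}

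For the tail $j \ge j_0$, we bound each summand by $e^{-\lambda j}$ and sum the geometric series:
\begin{align*}
    \sum_{j \ge j_0} e^{-\lambda j} = \frac{e^{-\lambda j_0}}{1-e^{-\lambda}} \le \frac{q}{1-e^{-\lambda}}.
\end{align*}
By the second inequality of \Cref{fact:inequality_for_infinite_sums}, $\frac{1}{1-e^{-\lambda}} \le 1 + \frac{1}{\lambda}$. Since $\lambda \le 1$, this is at most $\frac{2}{\lambda}$, so the tail is at most $\frac{2q}{\lambda}$.

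Adding the two parts gives
\begin{align*}
    \sum_{j=0}^{\infty}\min\brc*{q,e^{-\lambda j}} \le q + \frac{q}{\lambda}\ln\br*{\frac{1}{q}} + \frac{2q}{\lambda} = q + \frac{q}{\lambda}\br*{\ln\br*{\frac{1}{q}}+2},
\end{align*}
which is the claimed bound. The only delicate point is the off-by-one from the ceiling in $j_0$; it is absorbed by the additive $q$ term, and the condition $\lambda \le 1$ is used only to turn $1 + \frac{1}{\lambda}$ into $\frac{2}{\lambda}$.
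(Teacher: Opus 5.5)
Your proposal is correct and follows essentially the same argument as the paper: the same cutoff $\ceil*{\frac{1}{\lambda}\ln\br*{\frac{1}{q}}}$, the head bounded by $q$ times the number of terms, and the tail bounded by a geometric series. The only minor difference is that the paper bounds the tail constant via $1-e^{-\lambda}\ge \lambda/2$ directly, whereas you obtain the same $\frac{2}{\lambda}$ from \Cref{fact:inequality_for_infinite_sums} combined with $\lambda\le 1$.
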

Proof of this lemma can be found in \Cref{subsec:known_epsilon_lemmas}.
We use \cref{lem:min_constant_exponential_sum} with $q = \frac{\epsilon^2}{16K}$ and $\lambda = \frac{\epsilon\Delta_a^2}{32K}$ to get
\begin{align*}
    \sum_{s=1}^{\infty}\min\brc*{q, e^{-\frac{\epsilon(s-1)\Delta_a^2}{32K}}}
    \le \frac{\epsilon^2}{16K} + \frac{2\epsilon}{\Delta_a^2}\br*{\ln\br*{\frac{16K}{\epsilon^2}} + 2},
\end{align*}
which in turn yields
\begin{align*}
    \sum_{a:\Delta_a > \frac{3\epsilon}{4}}\sum_{s=1}^{\infty}6\min\brc*{q, e^{-\frac{\epsilon(s-1)\Delta_a^2}{32K}}}
    \le \sum_{a:\Delta_a > \frac{3\epsilon}{4}} \frac{6\epsilon^2}{16K} + \frac{12\epsilon}{\Delta_a^2}\br*{\ln\br*{\frac{16K}{\epsilon^2}} + 2}.
\end{align*}
Since $\epsilon < 1$, we have that
\begin{align*}
    \sum_{a:\Delta_a > \frac{3\epsilon}{4}} \frac{6\epsilon^2}{16K} + \frac{12\epsilon}{\Delta_a^2}\br*{\ln\br*{\frac{16K}{\epsilon^2}} + 2}
    = \Ocal\br*{\sum_{a:\Delta_a > \frac{3\epsilon}{4}} \frac{1}{\Delta_a^2}{\ln\br*{\frac{K}{\epsilon^2}}}}.
\end{align*}
Combining this with \Cref{eq:known_epsilon_SE_bound}, we get that
\begin{align*}
    \E\brs*{N_{\epsilon}(\pi)} 
    &= \E\brs*{\sum_{t=1}^{T_{SE}}\Ind{\Delta_{\pi_t}>\epsilon}}
    + \E\brs*{\sum_{t=T_{SE} + 1}^{\infty}\Ind{\Delta_{\pi_t}>\epsilon}}
    = \Ocal\br*{\sum_{a:\Delta_a > \frac{3\epsilon}{4}} \frac{1}{\Delta_a^2}{\ln\br*{\frac{K}{\epsilon^2}}}},
\end{align*}
proving the desired bound.
\end{proof}
\newpage
\subsection{Technical Lemmas for the Proof of Theorem \ref{thm:known_epsilon_upper_bound}}\label{subsec:known_epsilon_lemmas}
\BetaDecreasingInn*
\begin{proof}
    Recall that $\beta_n = \sqrt{\frac{1}{2n}\ln\br*{\frac{64K^2n^2}{\epsilon^2}}}$. Since $x \mapsto \sqrt{x}$ is monotone increasing for $x > 0$, it is enough to show that $f(x) \triangleq \frac{1}{2x}\ln\br*{\frac{64K^2x^2}{\epsilon^2}}$ is monotone decreasing for $x \ge 1$. It holds that
    \begin{align*}
    f'(x)
    =-\frac{1}{2x^2}\ln\br*{\frac{64K^2x^2}{\epsilon^2}}+\frac{1}{x^2}
    = \frac{2-\ln\br*{\frac{64K^2x^2}{\epsilon^2}}}{2x^2}.
\end{align*}
Noting that for $K \ge 1$, $\epsilon \in (0,1)$ and $x \ge 1$ we have that $\frac{64K^2x^2}{\epsilon^2} \ge 64$, we get
\begin{align*}
    \ln\br*{\frac{64K^2x^2}{\epsilon^2}} \ge \ln(64)> 2,
\end{align*}
and so $f'(x) < 0$ for all $x \ge 1$. This implies that $f(x)$ is strictly monotone decreasing for $x \ge 1$, which in turn gives us that $\beta_n$ is monotone decreasing in $n$.
\end{proof}

\SEHorizonBound*
\begin{proof}
Let $x\in(0,1]$, and define $L \triangleq \ln\br*{\frac{1024K}{\epsilon x}}$ and $n_0 \triangleq \frac{2048L}{x^2}$.
Since $K\ge1$, $\epsilon\le1$ and $x\le1$, we have
$L\ge\ln 1024$.
Furthermore, since $\frac{4K}{q}=\frac{64K^2}{\epsilon^2}$, we have that
\begin{align*}
    \ln\br*{\frac{4Kn_0^2}{q}}
    =\ln\br*{\frac{64K^2n_0^2}{\epsilon^2}}
    =\ln 64+2\ln\br*{\frac{K}{\epsilon}}+2\ln 2048+2\ln L+4\ln\br*{\frac{1}{x}}.
    \end{align*}
By the definition of $L$,
\begin{align*}
    \ln\br*{\frac{K}{\epsilon}}\le L, \qquad \ln\br*{\frac{1}{x}}\le L, \qquad \ln L\le L.
\end{align*}
Furthermore, since $L\ge\ln 1024$, it holds that $\ln 64+2\ln 2048\le3L$, and so we get that $\ln\br*{\frac{4Kn_0^2}{q}} \le 11L$.
By the definition of $\beta_n$ and \Cref{lem:Beta_decreasing}, it follows that
\begin{align*}
    \beta_{\ceil{n_0}}^2 
    \le \frac{1}{2n_0}\ln\br*{\frac{4Kn_0^2}{q}}
    \le\frac{x^2}{4096L}\cdot 11L
    = \frac{11x^2}{4096}
    < \frac{x^2}{64},
\end{align*}
thus $\beta_{\ceil{n_0}} <\frac{x}{8}$. It follows that $n(x) \le \ceil{n_0}$, which proves the first result. 
Applying the first bound with $x=\epsilon/4$ gives
\begin{align*}
    n_\epsilon \le \ceil*{\frac{32768}{\epsilon^2}\ln\br*{\frac{4096K}{\epsilon^2}}},
\end{align*}
as desired.
\end{proof}

\knownEpsilonMinSumBound*
\begin{proof}
Let $J = \ceil*{\frac{1}{\lambda}\ln\br*{\frac{1}{q}}}$. Then
\begin{align*}
    \sum_{j=0}^{J-1}\min\brc*{q,e^{-\lambda j}}\le qJ\le q+\frac{q}{\lambda}\ln\br*{\frac{1}{q}}.
\end{align*}
Moreover,
\begin{align*}
    \sum_{j=J}^{\infty}\min\brc*{q,e^{-\lambda j}}\le \sum_{j=J}^{\infty}e^{-\lambda j}=\frac{e^{-\lambda J}}{1-e^{-\lambda}}\le\frac{q}{1-e^{-\lambda}}.
\end{align*}
Since $\lambda \in (0, 1]$, we have that $1-e^{-\lambda}\ge\frac{\lambda}{2}$, implying that
\begin{align*}
    \sum_{j=J}^{\infty}e^{-\lambda j} \le \frac{2q}{\lambda}.
\end{align*}
Combining the two inequalities yields
\begin{align*}
     \sum_{j=0}^{\infty}\min\brc*{q,e^{-\lambda j}} \le q+\frac{q}{\lambda}\ln\br*{\frac{1}{q}} + \frac{2q}{\lambda} = q + \frac{q}{\lambda}\br*{\ln\br*{\frac{1}{q}} + 2},
\end{align*}
giving us the desired bound.
\end{proof}

\newpage

\subsection{Proof of Proposition \ref{thm:lower_bound_delta_and_epsilon}}\label{appendix:lower_bounds}
\MinMaxLowerBound*
\begin{proof}
Fix some policy $\pi$. Let $T \in \N$ be some constant to be chosen later, and denote $N_\epsilon \br*{\pi, \nu, T} = \sum_{t=1}^{T} \Ind{\Delta_{\pi_t} > \epsilon}$.
We construct 2 instances, $\nu, \nu' \in \Ecal_{\delta}$, and show that $\pi$ must incur high expected sample complexity on at least one of them.

Let $\Delta = \max \brc*{\delta, 3\epsilon}$, and note that $\Delta \le 1/8$. We define an instance from $\Ecal_{\delta}$ by a vector, where the $i^{th}$ entry represents the mean of the $i^{th}$ arm.
The first instance $\nu$ is defined by the mean vector:
\begin{align*}
    \mu_{\nu}(a) 
= \begin{cases}
\frac{1}{2} + \Delta & a = 1 \\
\frac{1}{2} & a \ne 1
\end{cases}.
\end{align*}
Now, let $i = \argmin_{j>1} \E_{\nu} \brs*{n_{T+1} \br*{j}}$ be the suboptimal arm in $\nu$ that the policy $\pi$ played the least in expectation when running on horizon $T$.
We define the instance $\nu'$ by the following mean vector:
\begin{align*}
    \mu_{\nu'}(a) 
= \begin{cases}
\frac{1}{2} + \Delta & a = 1 \\
\frac{1}{2} + 2\Delta & a = i \\
\frac{1}{2} & a \notin \brc*{1, i}.
\end{cases}
\end{align*}
Consider the random variable $S_T = \sum_{t=1}^{T} \Ind{\pi_t \br{1} \ge \frac{\Delta - \epsilon}{\Delta}}$.
Note that by our choice of $\Delta$, on instance $\nu$ we have that $\Delta_{\pi_t} = \br*{1-\pi_t(1)}\Delta$, and so 
\begin{align*}
    N_{\epsilon} \br*{\pi, \nu, T} 
    = \sum_{t=1}^{T} \Ind{\br*{1-\pi_t(1)}\Delta > \epsilon}
    = \sum_{t=1}^{T} \Ind{\pi_t(1) < \frac{\Delta - \epsilon}{\Delta}}
    = T - S_T,
\end{align*}
implying that
\begin{align*}
    N_{\epsilon} \br*{\pi, \nu, T} \ge \Ind{S_T \le \frac{T}{2}} \cdot \frac{T}{2}.
\end{align*}
Taking the expectation on both sides yields:
\begin{align}\label{eq:lower_bound_first_event}
    \E_{\nu} \brs*{N_{\epsilon} \br*{\pi, \nu, T}} \ge \Pr\nolimits_{\nu} \br*{S_T \le \frac{T}{2}} \cdot \frac{T}{2}.
\end{align}
Moreover, by the definition of $\nu'$, we have that
\begin{align*}
    N_{\epsilon} \br*{\pi, \nu', T} 
    \ge \sum_{t=1}^{T} \Ind{\pi_t(1)\Delta > \epsilon}
    \ge \sum_{t=1}^{T} \Ind{\pi_t(1)> \frac{\epsilon}{\Delta}}
    \ge \sum_{t=1}^{T} \Ind{\pi_t(1) \ge \frac{\Delta - \epsilon}{\Delta}},
\end{align*}
where the last inequality is since $\frac{\Delta - \epsilon}{\Delta} > \frac{\epsilon}{\Delta}$ for our choice of $\Delta$.
The above implies that $N_{\epsilon} \br*{\pi, \nu', T} \ge S_T$, and so we get
\begin{align*}
    N_{\epsilon} \br*{\pi, \nu', T} \ge \Ind{S_T > \frac{T}{2}} \cdot \frac{T}{2},
\end{align*}
and taking the expectation on both sides yields:
\begin{align}\label{eq:lower_bound_second_event}
    \E_{\nu'} \brs*{N_{\epsilon} \br*{\pi, \nu', T}} \ge \Pr\nolimits_{\nu'} \br*{S_T > \frac{T}{2}} \cdot \frac{T}{2}.
\end{align}
Combining \Cref{eq:lower_bound_first_event,eq:lower_bound_second_event}, we have that:
\begin{align*}
    2 \max \brc*{\E_{\nu} \brs*{N_{\epsilon} \br*{\pi, \nu, T}},\E_{\nu'} \brs*{N_{\epsilon} \br*{\pi, \nu', T}}}
    &\ge \E_{\nu} \brs*{N_{\epsilon} \br*{\pi, \nu, T}} + \E_{\nu'} \brs*{N_{\epsilon} \br*{\pi, \nu', T}}\\
    &\ge \Pr\nolimits_{\nu} \br*{S_T \le \frac{T}{2}} \cdot \frac{T}{2} + \Pr\nolimits_{\nu'} \br*{S_T > \frac{T}{2}} \cdot \frac{T}{2}\\
    &= \br*{\Pr\nolimits_{\nu} \br*{S_T \le \frac{T}{2}} + \Pr\nolimits_{\nu'} \br*{S_T > \frac{T}{2}}}\cdot\frac{T}{2}.
\end{align*}
Applying the Bretagnolle-Huber inequality (Theorem 14.2 in \cite{BanditAlgsTor2017}) to the above yields:
\begin{align}\label{eq:lower_bound_prob_bound}
    \br*{\Pr\nolimits_{\nu} \br*{S_T \le \frac{T}{2}} + \Pr\nolimits_{\nu'} \br*{S_T > \frac{T}{2}}}\cdot\frac{T}{2} 
    \ge \br*{\frac{1}{2} \exp \br*{-D \br*{\Pr\nolimits_{\nu}, \Pr\nolimits_{\nu'}}}}\cdot\frac{T}{2}.
\end{align}
We use the divergence decomposition for bandits (Lemma 15.1 in \cite{BanditAlgsTor2017}) to get:
\begin{align*}
    D \br*{\Pr\nolimits_{\nu}, \Pr\nolimits_{\nu'}} 
    = \sum_{a \in \brs*{K}} \E_{\nu} \brs*{n_{T+1} (a)} D \br*{\nu_a, \nu'_a} 
    = \E_{\nu} \brs*{n_{T+1} (i)} D \br*{Ber \br*{1/2}, Ber \br*{1/2 + 2 \Delta}}.
\end{align*}
Since $\sum_{a \in \brs*{K}\setminus \brc*{1}} n_{T+1} (a) \le T$, it must hold that 
$\E_{\nu} \brs*{n_{T+1} \br*{i}} \le \frac{T}{K - 1}$. 
We combine this, and the bound on the divergence of Bernoulli RVs (\Cref{fact:Ber_KL_bound}) to get:
\begin{align*}
    \E_{\nu} \brs*{n_{T+1} (i)} D \br*{Ber \br*{1/2}, Ber \br*{1/2 + 2 \Delta}} \le \frac{32T\Delta^2}{3(K - 1)} \le \frac{11T\Delta^2}{K - 1}.
\end{align*}
We substitute this back to get:
\begin{align*}
    \br*{\frac{1}{2} \exp \br*{-D \br*{\Pr_{\nu}, \Pr_{\nu'}}}}\cdot\frac{T}{2}
    &\ge \frac{T}{4} \exp \br*{-\frac{11T\Delta^2}{K - 1}}.
\end{align*}

Denote by $x \triangleq \argmax_{T\in \R_+}\brc*{\frac{T}{4} \exp \br*{-\frac{11T\Delta^2}{K - 1}}} = \frac{K - 1}{11\Delta^2}$ the continuous maximizer of the lower bound. 
Since $\Delta \le \frac{1}{8}$, we have that $x \ge \frac{64(K-1)}{11} \ge 1$. Taking $T^* = \floor*{x} \in \N$ gives 
\begin{align*}
    f(\floor*{x}) 
    = \frac{\floor*{x}}{4} \exp \br*{-\frac{11\floor*{x}\Delta^2}{K - 1}}
    \ge \frac{\floor*{x}}{4} \exp \br*{-\frac{\floor*{x}}{x}}.
\end{align*}
Noting that $\frac{\floor*{x}}{x} \le 1$, and that $\floor*{x} \ge \frac{x}{2}$ for $x \ge 1$, we get
\begin{align*}
    \frac{\floor*{x}}{4} \exp \br*{-\frac{\floor*{x}}{x}}
    \ge \frac{x}{8} \exp \br*{-1}
    = \frac{K - 1}{88e\Delta^2}.
\end{align*}
Combining this with \Cref{eq:lower_bound_prob_bound} and the definition of $\Delta$ yields
\begin{align*}
    \max \brc*{\E_{\nu} \brs*{N_{\epsilon} \br*{\pi, \nu, T}},\E_{\nu'} \brs*{N_{\epsilon} \br*{\pi, \nu', T}}}
    \ge \frac{K - 1}{176e\Delta^2}
    = \frac{K - 1}{176e\max \brc*{\delta, 3\epsilon}^2}
    \ge \frac{K - 1}{1584e\max \brc*{\delta, \epsilon}^2}.
\end{align*}
Noting that $N_{\epsilon} \br*{\pi, \nu, T} \le N_{\epsilon} \br*{\pi, \nu, T + 1}$ for all $\pi, \nu$ and $T$ implies that for all $T \in \N$,
\begin{align*}
    \E_{\nu} \brs*{N_{\epsilon} \br*{\pi, \nu, T}} \le \E_{\nu} \brs*{N_{\epsilon} \br*{\pi, \nu}}.
\end{align*}
Finally, taking $T = T^*$ gives 
\begin{align*}
    \max \brc*{\E_{\nu} \brs*{N_{\epsilon} \br*{\pi, \nu}},\E_{\nu'} \brs*{N_{\epsilon} \br*{\pi, \nu'}}} 
    \ge \frac{K-1}{1584e \max\brc*{\delta, \epsilon}^2},
\end{align*}
which yields desired result.
\end{proof}

\newpage
\section{Uniform Algorithms -- Omitted Proofs}
\subsection{Proof of Proposition \ref{thm:U-ACE_and_consistent_lower_bound}}\label{appendix:U-ACE_and_consistent_lower_bound}
\UACEAndConsistentLowerBound*
\begin{proof}
    Denote $C_{\nu} = \sum_{a:\Delta_a > 0}\frac{3}{8\Delta_a}$, and suppose towards contradiction that 
    \begin{align*}
        \limsup_{\epsilon\downarrow 0} \epsilon \cdot \E\brs*{N_{\epsilon}\br*{\pi, \nu}} < C_{\nu}.
    \end{align*}
    If $C_{\nu} = 0$, the bound holds trivially. If $C_{\nu} > 0$ then the above implies that there exists $c < C_{\nu}$ and $\epsilon_0 > 0$ such that for all $\epsilon \in \br*{0, \epsilon_0}$:
    \begin{align*}
        \epsilon \cdot \E\brs*{N_{\epsilon}\br*{\pi, \nu}} < c.
    \end{align*}
    Using the above and recalling the following decomposition from the proof of \Cref{thm:UACE_implies_expected_regret}:
    \begin{align*}
    R_T(\pi) 
    \le \int_{\epsilon=0}^{1}\min\brc*{T, \E\brs*{N_{\epsilon}(\pi)}}d\epsilon,
    \end{align*}
    gives us that
    \begin{align*}
        R_T(\pi) 
        \le \int_{\epsilon=0}^{1}\min\brc*{T, \E\brs*{N_{\epsilon}(\pi)}}d\epsilon
        \le \int_{\epsilon=0}^{\epsilon_0}\min\brc*{T, \frac{c}{\epsilon}}d\epsilon
        + \int_{\epsilon=\epsilon_0}^{1}\min\brc*{T, \E\brs*{N_{\epsilon_0}(\pi)}}d\epsilon.
    \end{align*}
    By the monotonicity of the integral:
    \begin{align*}
        \int_{\epsilon=\epsilon_0}^{1}\min\brc*{T, \E\brs*{N_{\epsilon_0}(\pi)}}d\epsilon
        \le \E\brs*{N_{\epsilon_0}(\pi)},
    \end{align*}
    and so this term is a constant with respect to $T$.
    Note that for all sufficiently large $T$, namely $T > \frac{c}{\epsilon_0}$, we have that
    \begin{align*}
        \int_{\epsilon=0}^{\epsilon_0}\min\brc*{T, \frac{c}{\epsilon}}d\epsilon
        \le \int_{\epsilon=0}^{c/T}Td\epsilon + \int_{\epsilon=c/T}^{\epsilon_0}\frac{c}{\epsilon}d\epsilon
        = c + c\ln\br*{\frac{\epsilon_0 T}{c}}
        = c(1 + \ln(\epsilon_0) + \ln(1/c)) + c\ln(T),
    \end{align*}
    and so since $\epsilon_0$ and $c$ are constants, we have that
    \begin{align*}
        \limsup_{T\rightarrow\infty}\frac{R_T(\pi)}{\ln(T)} \le c.
    \end{align*}
    This contradicts the Lai-Robbins lower bound (\Cref{fact:Lai-Robbins_lower_bound}), thus implying our desired result.
\end{proof}

\newpage

\subsection{Proof of Proposition \ref{thm:U-ACE_without_consistency_lower_bound}}\label{appendix:U-ACE_without_consistency_lower_bound}
\UACEWithoutConsistencyLowerBound*
\begin{proof}
     Let $\nu = \br*{Ber\br*{\frac{1}{2}}, Ber\br*{\frac{1}{4}}}$, and let $\nu' = \br*{Ber\br*{\frac{1}{2}}, Ber\br*{\frac{3}{4}}}$. 
     We prove the following:
     \begin{lemma}\label{lem:good_in_one_implies_bad_in_other_lower_bound}
         If $\Pr\nolimits_{\nu'}\br*{n_{\infty}(2) = \infty} = 1$, then $\Pr\nolimits_{\nu}\br*{n_{\infty}(2) = \infty} = 1$.
     \end{lemma}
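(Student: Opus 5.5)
The plan is a change-of-measure argument that is exact on events where arm $2$ is pulled only finitely often. The instances $\nu$ and $\nu'$ differ only in the law of arm $2$, and on such an event only finitely many arm-$2$ rewards are ever observed. I would work in the table model, so that the whole interaction is a measurable function of the row $(\tilde X_{1,j})_{j\ge1}$ of arm $1$, the row $(\tilde X_{2,j})_{j\ge1}$ of arm $2$, and the internal randomization $(U_t)_{t\ge1}$. The first row and $(U_t)_t$ have the same joint law under $\Pr_\nu$ and $\Pr_{\nu'}$, and they are independent of the second row. Writing $\{n_\infty(2)<\infty\}=\bigcup_{k\ge0}\{n_\infty(2)\le k\}$, it suffices to show that $\Pr_\nu(n_\infty(2)\le k)=0$ for every fixed $k$, given that $\Pr_{\nu'}(n_\infty(2)\le k)=0$ by hypothesis.

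The key structural step is to show that $\{n_\infty(2)\le k\}$ is determined by the first row, $(U_t)_t$, and only the prefix $(\tilde X_{2,1},\ldots,\tilde X_{2,k})$ of the second row. Concretely, fix a prefix $x\in\{0,1\}^k$ and run the policy on the table obtained by appending an arbitrary fixed tail to $x$. The resulting trajectory, while $n_t(2)\le k$, agrees with the true trajectory on the event that the true prefix equals $x$. This is because at each round the action $A_t=\Phi_t(\pi_t,U_t)$ and the observed reward depend only on entries of row $2$ with index at most $n_t(2)+1$. An induction over $t$ then shows that on $\{n_\infty(2)\le k\}$ the entire history, and hence the event itself, is a measurable function $h_k$ of (row $1$, $(U_t)_t$, prefix). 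I expect this measurability and determinism bookkeeping to be the main obstacle, since it is the one place where the interaction protocol of \Cref{sec:interaction-filtration} must be used carefully. The conditioning and likelihood-ratio estimates afterwards are routine.

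With the structural step in hand, I would condition on the prefix. Let $g_k(x)\triangleq\Pr(h_k(\text{row }1,(U_t)_t,x)=1)$. Then $g_k(x)$ is the same under both instances, and
\begin{align*}
\Pr\nolimits_\nu\br*{n_\infty(2)\le k}=\sum_{x\in\{0,1\}^k} \Pr\nolimits_\nu\br*{\tilde X_{2,1:k}=x}\,g_k(x).
\end{align*}
For Bernoulli means $1/4$ and $3/4$, each coordinate contributes a likelihood ratio of either $1/3$ or $3$. Hence $\Pr_\nu(\tilde X_{2,1:k}=x)\le 3^k\,\Pr_{\nu'}(\tilde X_{2,1:k}=x)$ for every $x$. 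Summing over $x$ gives $\Pr_\nu(n_\infty(2)\le k)\le 3^k\,\Pr_{\nu'}(n_\infty(2)\le k)=0$.

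Finally, a countable union over $k$ yields $\Pr_\nu(n_\infty(2)<\infty)=0$, that is, $\Pr_\nu(n_\infty(2)=\infty)=1$, which is the statement of \Cref{lem:good_in_one_implies_bad_in_other_lower_bound}.
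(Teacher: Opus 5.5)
Your argument is correct, and it takes a more direct route than the paper. The paper invokes Lemma~1 of Kaufmann et al., a KL/data-processing inequality, with the truncated stopping time $\tau_n\wedge T$, where $\tau_n=\inf\{t:n_t(2)=n\}$. This gives $d(\Pr_\nu(E_{n,T}),\Pr_{\nu'}(E_{n,T}))\le (n+1)\ln(3)/2$ for $E_{n,T}=\{n_T(2)\ge n\}$. The paper then argues by contradiction: if $\Pr_\nu(n_\infty(2)\ge n)<1$, letting $T\to\infty$ sends $\Pr_{\nu'}(E_{n,T})\to 1$, and the bound $d(p,q)\ge (1-p)\ln\frac{1}{1-q}-\ln 2$ makes the divergence blow up.

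You instead carry out the change of measure by hand. You show that $\{n_\infty(2)\le k\}$ is a function of row $1$, $(U_t)_t$ and the length-$k$ prefix of row $2$, and then use the explicit likelihood-ratio bound $3^k$ on that prefix. This gives the quantitative inequality $\Pr_\nu(n_\infty(2)\le k)\le 3^k\,\Pr_{\nu'}(n_\infty(2)\le k)$.

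Your route avoids the truncation in $T$, the limit, and the contradiction step. It also does not really need bounded likelihood ratios: absolute continuity of the $k$-fold prefix laws would suffice for the null-set conclusion. The price is that you must do the table-model bookkeeping yourself, which the paper hides inside the cited lemma. Your determinism step is sound: the true and modified trajectories agree up to and including the action at the would-be $(k+1)$-th pull of arm $2$, and that action is exactly what decides the event.
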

     \begin{proof} 
     Assume that $\Pr\nolimits_{\nu'}\br*{n_{\infty}(2) = \infty} = 1$. We utilize the following lemma:
     \begin{lemma}[Lemma $1$ from \cite{kaufmann2016complexity}]\label{lem:KL_lower_bound_bandits_kaufmann}
         Let $\nu, \nu'$ be two bandit instances with $K$ arms such that for all $a$, the distributions of $\nu_a$ and $\nu_a'$ are mutually absolutely continuous. Then for any almost-surely finite stopping time $\sigma$ with respect to filtration $(\F_t)$:
         \begin{align*}
             \sum_{a\in\brs*{K}}\E_{\nu}\brs*{n_{\sigma + 1}(a)}D\br*{\nu_a, \nu_a'} \ge \sup_{A\in \F_{\sigma}}d\br*{\Pr\nolimits_{\nu}(A), \Pr\nolimits_{\nu'}(A)},
         \end{align*}
         where $d(x,y) \triangleq x\ln\br*{x/y} + (1-x)\ln\br*{(1-x)/(1-y)}$. 
     \end{lemma}
     Note that $\nu_1=\nu_1'$, while $\nu_2 = Ber(1/4)$ and $\nu_2' = Ber(3/4)$ are mutually absolutely
     continuous, since both assign positive probability to $0$ and $1$. Moreover,
     \begin{align*}
         D\br*{\nu_2,\nu_2'} = D\br*{Ber(1/4),Ber(3/4)} = \frac{\ln(3)}{2}.
     \end{align*}
     Fix $n \in \N$, and define $\tau_n \triangleq \inf \brc*{t : n_t(2) = n}$. Let $T \in \N$, and define $\sigma_{n,T} \triangleq \tau_n \wedge T$ to be an almost-surely finite stopping time.
     We look at the event $E_{n,T} \triangleq \brc*{\tau_n \le T} = \brc*{n_T(2) \ge n}$. We can see that $E_{n,T}$ is in $\F_{\sigma_{n,T}}$, which implies by \Cref{lem:KL_lower_bound_bandits_kaufmann} that
     \begin{align*}
         \E_{\nu}\brs*{n_{\sigma_{n,T} + 1}(2)}D\br*{\nu_2, \nu_2'} \ge d\br*{\Pr\nolimits_{\nu}(E_{n,T}), \Pr\nolimits_{\nu'}(E_{n,T})},
     \end{align*}
     but by the definition of $\sigma_{n,T}$, we have that $n_{\sigma_{n,T} + 1}(2) \le n + 1$, which implies that $d\br*{\Pr\nolimits_{\nu}(E_{n,T}), \Pr\nolimits_{\nu'}(E_{n,T})} \le \frac{(n + 1)\ln(3)}{2} = C_n$.
     Since $\Pr\nolimits_{\nu'}\br*{n_{\infty}(2) = \infty} = 1$, we have that $\Pr\nolimits_{\nu'}(E_{n,T}) \rightarrow 1$ as $T \rightarrow \infty$. 

     Now, denote $p_{n,T} \triangleq \Pr\nolimits_{\nu}(E_{n,T})$, $q_{n,T} \triangleq \Pr\nolimits_{\nu'}(E_{n,T})$, and note that the sequence of events $E_{n,T}$ is increasing in $T$, thus implying that $p_{n,T} \uparrow p_n = \Pr\nolimits_{\nu}\br*{n_\infty(2) \ge n}$, and that $q_{n,T} \uparrow \Pr\nolimits_{\nu'}\br*{n_\infty(2) \ge n} = 1$ by our assumption. 
     Suppose towards contradiction that $p_n < 1$, and let $\eta = \frac{1- p_n}{2} > 0$.
     Since $p_{n,T} \uparrow p_n$, we have that $p_{n,T} \le 1 - \eta$, which gives us $1 - p_{n,T} \ge \eta$. 
     Recall the bound $d(p,q) \ge (1-p)\ln(1/(1-q)) - \ln(2)$, which implies that
     \begin{align*}
         d(p_{n,T},q_{n,T}) 
         \ge (1-p_{n,T})\ln(1/(1-q_{n,T})) - \ln(2)
         \ge \eta\ln(1/(1-q_{n,T})) - \ln(2).
     \end{align*}
     But since $q_{n,T} \rightarrow 1$, we have that $\eta\ln(1/(1-q_{n,T})) - \ln(2) \rightarrow \infty$, contradicting the fact that $d(p_{n,T},q_{n,T})$ is bounded by $C_n$ for all $T$.
    This in turn implies that $p_{n,T} \uparrow 1$, or equivalently:
    \begin{align*}
        \forall n\in \N : \quad\Pr\nolimits_{\nu}\br*{n_\infty(2) \ge n} = 1.
    \end{align*}
    Note that the sequence $A_n = \brc*{n_{\infty}(2) \ge n}$ is decreasing in $n$, thus by continuity and the above
    \begin{align*}
        \Pr\nolimits_{\nu}\br*{n_\infty(2) = \infty} 
        = \Pr\nolimits_{\nu}\br*{\bigcap_{n=1}^{\infty}A_n}
        = \lim_{n \rightarrow \infty}\Pr\nolimits_{\nu}\br*{A_n}
        = 1,
    \end{align*}
    as desired.
    \end{proof}
    We continue by showing that on at least one of the instances $\nu, \nu'$, we have that $\E\brs*{\sum_{t=1}^{\infty} \Delta_{\pi_t}} = \infty$.
    Notice it suffices to show that either $\Pr\nolimits_{\nu}\br*{n_\infty(2) = \infty} > 0$ or $\Pr\nolimits_{\nu'}\br*{n_\infty(1) = \infty} > 0$, since this would imply the above by applying \Cref{thm:expected_regret_equivalent_for_policy_gap} for all finite $T$ and MCT.
    
    As such, if $\Pr\nolimits_{\nu'}\br*{n_\infty(1) = \infty} > 0$, we are done, and if $\Pr\nolimits_{\nu'}\br*{n_\infty(1) = \infty} = 0$, then $\Pr\nolimits_{\nu'}\br*{n_\infty(2) = \infty} = 1$, which by \Cref{lem:good_in_one_implies_bad_in_other_lower_bound} implies that $\Pr\nolimits_{\nu}\br*{n_\infty(2) = \infty} = 1 > 0$.

    This shows that there exists an instance such that $\E\brs*{\sum_{t=1}^{\infty} \Delta_{\pi_t}} = \infty$, which implies that on that instance
    \begin{align*}
        \int_{0}^{\Delta_{\max}}\E\brs*{N_{\epsilon}(\pi)}d\epsilon
        = \int_{0}^{\Delta_{\max}} \E\brs*{\sum_{t=1}^{\infty}\Ind{\Delta_{\pi_t} > \epsilon}} d\epsilon
        = \E\brs*{\sum_{t=1}^{\infty} \int_{0}^{\Delta_{\max}}\Ind{\Delta_{\pi_t} > \epsilon} d\epsilon}
        = \E\brs*{\sum_{t=1}^{\infty} \Delta_{\pi_t}}
        = \infty.
    \end{align*}
    Fix some $\epsilon_0 \in (0, \Delta_{\max})$. $\pi$ is U-ACE, and so $\E\brs*{N_{\epsilon_0}(\pi)} < \infty$.
    Moreover, since $\E\brs*{N_{\epsilon}(\pi)}$ is non-increasing in $\epsilon$, we have that
    \begin{align*}
        \int_{\epsilon_0}^{\Delta_{\max}}\E\brs*{N_{\epsilon}(\pi)}d\epsilon \le \br*{\Delta_{\max} - \epsilon_0}\E\brs*{N_{\epsilon_0}(\pi)} < \infty
    \end{align*}
    Thus implying that $\int_{0}^{\epsilon_0}\E\brs*{N_{\epsilon}(\pi)}d\epsilon = \infty$. 
    Let $\alpha \in (0,1)$, and suppose towards contradiction there exists $C > 0$ such that for all $\epsilon \in (0,\epsilon_0)$:
    \begin{align*}
        \E\brs*{N_{\epsilon}(\pi)} \le \frac{C}{\epsilon^{\alpha}}.
    \end{align*}
    By the monotonicity of the integral and the fact that $\int_{0}^{\epsilon_0}\frac{C}{\epsilon^{\alpha}}d\epsilon < \infty$, we have that
    \begin{align*}
        \int_{0}^{\epsilon_0}\E\brs*{N_{\epsilon}(\pi)}d\epsilon
        \le  \int_{0}^{\epsilon_0}\frac{C}{\epsilon^{\alpha}}d\epsilon
        < \infty
    \end{align*}
    In contradiction to $\int_{0}^{\epsilon_0}\E\brs*{N_{\epsilon}(\pi)}d\epsilon = \infty$. Therefore, for all $C > 0$ there exists $\epsilon \in (0, \epsilon_0)$ such that
    \begin{align*}
        \E\brs*{N_{\epsilon}(\pi)} \ge \frac{C}{\epsilon^{\alpha}}.
    \end{align*}  
    The claim follows by noting that we can take $\epsilon_0 \downarrow 0$ and $C \rightarrow \infty$.
\end{proof}

\newpage

\subsection{Proof of Theorem \ref{thm:thompson_sampling_is_U-ACE}}\label{appendix:thompson_sampling_is_U-ACE}
\thompsonSamplingIsUACE*
\begin{proof}
Our proof is inspired by the methods used in \cite{AgrawalGoyalTS2017}. We adopt their choices of events and decomposition, but we differ in two main ways: 
\begin{enumerate}
    \item Analyzing the sample complexity requires different techniques than bounding the regret, since our objective measures the conditional distribution of the policy itself, thus requiring a more subtle approach. Moreover, our analysis is over an infinite horizon, requiring delicate tracking of some of the events, which is unnecessary when interacting for a finite horizon.

    \item In contrast to previous works, we cannot apply a monotonicity argument to reduce to a unique optimal arm. The intuition is that there is no guarantee that playing an optimal arm will conditionally decrease the policy gap at the next time-step, with an empirical counterexample given in \Cref{sec:optimal_arms_TS}. This adds an additional layer to our analysis, requiring us to design events that are agnostic to the number of optimal arms, and also to the optimal representative chosen in our proofs.
\end{enumerate}
Since $\pi^{TS(\Ncal)}$ is the only policy we deal with in this proof, we refer to it as $\pi$ for brevity. We denote 
\begin{align*}
    \Acal^* \triangleq \brc*{a \in \brs*{K} : \Delta_a = 0},
    \qquad
    \Acal_{sub} \triangleq \brc*{a \in \brs*{K} : \Delta_a > 0},
    \qquad
    M \triangleq \abs{\Acal_{sub}}.
\end{align*}
Note that if $M = 0$, then all arms are optimal and $N_{\epsilon}(\pi) = 0$ almost surely, giving us the desired result. We henceforth assume $M \ge 1$, and address the case of multiple optimal arms when needed. For every suboptimal arm $a\in \Acal_{sub}$, we define 
\begin{align*}
    x_a \triangleq \mu_a + \frac{\Delta_a}{3},
    \qquad
    y_a \triangleq \mu^* - \frac{\Delta_a}{3},
\end{align*}
and the events
\begin{align*}
    E_a^\mu(t) \triangleq \brc*{\hat\mu_t(a)\le x_a},
    \qquad
    E_a^\theta(t) \triangleq \brc*{\theta_t(a)\le y_a}.
\end{align*}
We prove the following:
\begin{lemma}\label{lem:esier_event_to_analyze_TSN}
    For all $t$, we have that
    \begin{align*}
        \Pr \br*{\Delta_{\pi_t} > \epsilon} 
        \le \sum_{a : \Delta_a \ge \frac{\epsilon}{M}} 
        \Pr \br*{\pi_t(a) > \frac{\epsilon}{M \Delta_a}}.
    \end{align*}
\end{lemma}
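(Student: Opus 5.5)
We prove a deterministic inclusion of events and then apply the union bound. Fix $t$ and a realization of the history $\F_{t-1}$. By definition, $\Delta_{\pi_t} = \sum_{a\in\brs*{K}} \pi_t(a)\Delta_a = \sum_{a \in \Acal_{sub}} \pi_t(a)\Delta_a$, since optimal arms contribute zero. This is a sum of exactly $M$ nonnegative terms. The main step is a pigeonhole argument. If $\Delta_{\pi_t} > \epsilon$, then at least one of these $M$ terms must exceed $\epsilon/M$; otherwise the sum would be at most $M\cdot \frac{\epsilon}{M} = \epsilon$. Hence
\begin{align*}
    \brc*{\Delta_{\pi_t} > \epsilon} \subseteq \bigcup_{a \in \Acal_{sub}} \brc*{\pi_t(a)\Delta_a > \frac{\epsilon}{M}}.
\end{align*}

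Next, we prune the union to arms with $\Delta_a \ge \epsilon/M$. Since $\pi_t(a) \le 1$, the event $\brc*{\pi_t(a)\Delta_a > \epsilon/M}$ forces $\Delta_a > \epsilon/M$. So for any suboptimal arm with $\Delta_a < \epsilon/M$ the corresponding event is empty. This restricts the union to $a$ with $\Delta_a \ge \frac{\epsilon}{M}$; the strict version is even stronger, but the non-strict index set in the statement is harmless. For the remaining arms, $\Delta_a > 0$, so we may divide and rewrite each event as $\brc*{\pi_t(a) > \frac{\epsilon}{M\Delta_a}}$.

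Finally, we take probabilities and apply the union bound over the finitely many arms with $\Delta_a \ge \epsilon/M$, which yields the claim. Measurability is immediate because $\pi_t(a)$ is $\F_{t-1}$-measurable.

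No step is a genuine obstacle. The only care needed is to take the pigeonhole over the $M$ suboptimal arms rather than all $K$, which gives the sharper $\epsilon/M$ threshold. The $K$-version quoted in the sketch follows because $M \le K$: then $\frac{\epsilon}{K}\le\frac{\epsilon}{M}$, which enlarges the index set and lowers the threshold, so each term can only increase.
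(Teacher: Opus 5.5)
Your proof is correct and follows essentially the same route as the paper. The paper obtains $\{\Delta_{\pi_t} > \epsilon\} \subseteq \bigcup_{a\in\mathcal{A}_{sub}}\{\pi_t(a) > \epsilon/(M\Delta_a)\}$ by the same averaging argument over the $M$ suboptimal arms (phrased as a contradiction rather than as pigeonhole), then drops arms with $\Delta_a < \epsilon/M$ because their events are empty, and applies the union bound. Your remark that the $K$-indexed version in the sketch follows from $M \le K$ is also right, and it matches how the paper passes from $M$ to $K$ at the end of the theorem's proof.
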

\begin{proof}
    Assume that $\Delta_{\pi_t} > \epsilon$, and assume for the sake of contradiction that $\pi_t(a) \le \frac{\epsilon}{M \cdot \Delta_a}$ for all $a \in \Acal_{sub}$. We get that
    $$\Delta_{\pi_t} 
    = \sum_{a\in \brs*{K}}{\pi_{t}\br*{a}\Delta_a} 
    \le \sum_{a\in \Acal_{sub}}{\frac{\epsilon}{M \Delta_a} \Delta_a}
    = \frac{M}{M} \cdot \epsilon
    = \epsilon,
    $$
    which is a contradiction, implying that $\brc*{\Delta_{\pi_t} > \epsilon} 
        \subseteq \bigcup_{a \in \Acal_{sub}}\brc*{\pi_t(a) > \frac{\epsilon}{M \Delta_a}}$. This gives us
    \begin{align*}
        \Pr \br*{\Delta_{\pi_t} > \epsilon} 
        \le \Pr \br*{\bigcup_{a \in \Acal_{sub}}\brc*{\pi_t(a) > \frac{\epsilon}{M \Delta_a}}}
        \le \sum_{a : \Delta_a \ge \frac{\epsilon}{M}} 
        \Pr \br*{\pi_t(a) > \frac{\epsilon}{M \Delta_a}},
    \end{align*}
    where the last inequality is by the union bound and the fact that for suboptimal arms with $\Delta_a < \epsilon/M$, we get that the event $\brc*{\pi_t(a) > \frac{\epsilon}{M \Delta_a}}$ is empty.
\end{proof}
We therefore have that 
{\small
\begin{align}\label{eq:TS_sample_complexity_decomposition}
    \E\brs*{\sum_{t=1}^{\infty}{\Ind{\Delta_{\pi_t}>\epsilon}}}
    \overset{(1)}{=} \sum_{t=1}^{\infty}{\Pr \br*{\Delta_{\pi_t}>\epsilon}} 
    \overset{(2)}{\le} \sum_{t=1}^{\infty}\sum_{\Delta_a \ge \frac{\epsilon}{M}}{\Pr \br*{\pi_t(a) > \frac{\epsilon}{M \Delta_a}}}
    \overset{(3)}{=} \sum_{\Delta_a \ge \frac{\epsilon}{M}}{\sum_{t=1}^{\infty}\Pr \br*{\pi_t(a) > \frac{\epsilon}{M \Delta_a}}},
\end{align}}
where $(1)$ is by MCT, $(2)$ is by \cref{lem:esier_event_to_analyze_TSN} and $(3)$ is by Fubini-Tonelli.
Since $\pi_t(a) = \Pr \br*{A_t = a \vert \F_{t-1}}$, we have that
\begin{align*}
\sum_{\Delta_a \ge \frac{\epsilon}{M}}{\sum_{t=1}^{\infty}\Pr \br*{\pi_t(a) > \frac{\epsilon}{M \Delta_a}}}
&= \sum_{\Delta_a \ge \frac{\epsilon}{M}}{\sum_{t=1}^{\infty}\Pr \br*{\Pr\br*{A_t = a \vert \F_{t-1}} > \frac{\epsilon}{M \Delta_a}}}.
\end{align*}
For the rest of our analysis, we fix an arm $a \in \Acal_{sub}$ with $\Delta_a \ge \epsilon/M$. 
By the law of total probability
\begin{align*}
    \Pr\br*{A_t = a \vert \F_{t-1}} 
    = \Pr\br*{A_t = a, \overline{E_{a}^{\mu}\br*{t}} \vert \F_{t-1}}
    + \Pr\br*{A_t = a, E_{a}^{\mu}\br*{t}, \overline{E_{a}^{\theta}\br*{t}} \vert \F_{t-1}} 
    + \Pr\br*{A_t = a, E_{a}^{\mu}\br*{t}, E_{a}^{\theta}\br*{t} \vert \F_{t-1}},
\end{align*}
which by the union bound implies that
{\small
\begin{align*}
    &\sum_{t=1}^{\infty}\Pr \br*{\pi_t(a) > \frac{\epsilon}{M \Delta_a}} \\
    &= \sum_{t=1}^{\infty}\Pr\br*{
    \Pr\br*{A_t = a, \overline{E_{a}^{\mu}\br*{t}} \vert \F_{t-1}}
    + \Pr\br*{A_t = a, E_{a}^{\mu}\br*{t}, \overline{E_{a}^{\theta}\br*{t}} \vert \F_{t-1}} 
    + \Pr\br*{A_t = a, E_{a}^{\mu}\br*{t}, E_{a}^{\theta}\br*{t} \vert \F_{t-1}} > \frac{\epsilon}{M \Delta_a}} \\
    &\le 
    \underbrace{\sum_{t=1}^{\infty}\Pr\br*{\Pr\br*{A_t = a, \overline{E_{a}^{\mu}\br*{t}} \vert \F_{t-1}} > \frac{\epsilon}{3M \Delta_a}}}_{S_1}
    + \underbrace{\sum_{t=1}^{\infty}\Pr\br*{\Pr\br*{A_t = a, E_{a}^{\mu}\br*{t}, \overline{E_{a}^{\theta}\br*{t}} \vert \F_{t-1}} > \frac{\epsilon}{3M \Delta_a}}}_{S_2} \\
    &\quad+ \underbrace{\sum_{t=1}^{\infty}\Pr\br*{\Pr\br*{A_t = a, E_{a}^{\mu}\br*{t}, E_{a}^{\theta}\br*{t} \vert \F_{t-1}} > \frac{\epsilon}{3M \Delta_a}}}_{S_3}.
\end{align*}
}
We bound each sum $S_1, S_2, S_3$ separately. We begin with $S_1$:
\begin{lemma}\label{lem:Bound_for_first_sum}
    It holds that:
    \begin{align*}
        \sum_{t=1}^{\infty}\Pr\br*{\Pr\br*{A_t = a, \overline{E_{a}^{\mu}\br*{t}} \vert \F_{t-1}} > \frac{\epsilon}{3M \Delta_a}}
        \le \frac{33M}{2\epsilon \Delta_a}.
    \end{align*}
\end{lemma}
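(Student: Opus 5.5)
The plan has three steps: convert the outer probability into an expectation via Markov, reduce the conditional probability to an indicator, and count rounds via the table model. Since $\hat\mu_t(a)$ is $\F_{t-1}$-measurable, the event $\overline{E_a^\mu(t)}$ is determined by the history. Hence
\begin{align*}
\Pr\br*{A_t = a, \overline{E_a^\mu(t)} \vert \F_{t-1}} = \Ind{\overline{E_a^\mu(t)}}\,\pi_t(a) \le \Ind{\overline{E_a^\mu(t)}}\,\pi_t(a).
\end{align*}
Applying Markov's inequality to the outer probability gives
\begin{align*}
S_1 \le \frac{3M\Delta_a}{\epsilon}\sum_{t=1}^\infty \E\brs*{\Pr\br*{A_t = a, \overline{E_a^\mu(t)} \vert \F_{t-1}}} = \frac{3M\Delta_a}{\epsilon}\,\E\brs*{\sum_{t=1}^\infty \Ind{A_t = a, \hat\mu_t(a) > x_a}},
\end{align*}
using the tower property and monotone convergence. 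It therefore suffices to show that the expected number of pulls of $a$ made while its empirical mean is overestimated is at most $\frac{11}{2\Delta_a^2}$.

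To count these pulls, pass to the table model. Each round with $A_t=a$ and $n_t(a)=k$ corresponds to a distinct $k\ge 0$. Moreover, $\hat\mu_t(a)$ is then a deterministic function of the first $k$ samples of row $a$, namely $\frac{1}{k+1}\sum_{s\le k}\tilde X_{a,s}$ with the algorithm's $1+n$ normalization. Thus
\begin{align*}
\sum_t \Ind{A_t=a,\hat\mu_t(a)>x_a} \le \sum_{k=0}^\infty \Ind{\frac{1}{k+1}\sum_{s=1}^k \tilde X_{a,s} > x_a}.
\end{align*}
The $k=0$ term vanishes because $\hat\mu=0<x_a$. For $k\ge1$, the shrunk estimator is at most the plain mean $\tilde\mu_{a,k}$ (rewards are nonnegative), so the event implies $\tilde\mu_{a,k}-\mu_a > \Delta_a/3$. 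By Hoeffding (\Cref{fact:hoeffding_inequality}) this has probability at most $e^{-2k\Delta_a^2/9}$. Summing with \Cref{fact:inequality_for_infinite_sums} gives
\begin{align*}
\sum_{k\ge1} e^{-2k\Delta_a^2/9} \le \frac{9}{2\Delta_a^2}.
\end{align*}

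Combining the pieces yields $S_1 \le \frac{3M\Delta_a}{\epsilon}\cdot\frac{9}{2\Delta_a^2} = \frac{27M}{2\epsilon\Delta_a}$, which is within the stated $\frac{33M}{2\epsilon\Delta_a}$. If one wants slack for a cruder Hoeffding constant or an additive $+1$ term, the bound $\frac{11}{2\Delta_a^2}$ still leaves room, since $\Delta_a\le1$ lets a constant be absorbed as $1\le 1/\Delta_a^2$.

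The main subtlety is justifying the reindexing from rounds to sample counts, i.e.\ that each $k$ is charged at most once. This holds because $n_t(a)$ strictly increases at every pull of $a$. The other step to check is that the conditional-probability-to-indicator step is valid, which requires that $E_a^\mu(t)$ is $\F_{t-1}$-measurable; it is, since the empirical means are functions of the history.
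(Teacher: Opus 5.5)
Your proof is correct and follows the paper's argument essentially step for step. Both use Markov's inequality and the tower rule to reduce to $\sum_t \Pr(A_t=a,\overline{E_a^\mu(t)})$, reindex pulls of $a$ by the sample count $k$ in the table model using $\frac{k}{k+1}\tilde\mu_{a,k}\le\tilde\mu_{a,k}$, and finish with Hoeffding and \Cref{fact:inequality_for_infinite_sums}. The one difference is that you note the $k=0$ term vanishes because $x_a>0$, whereas the paper bounds it by $1$ and absorbs it using $\Delta_a\le 1$; this is why you obtain $\frac{27M}{2\epsilon\Delta_a}$ rather than $\frac{33M}{2\epsilon\Delta_a}$.
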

\begin{proof}
By Markov's inequality, we have that:
{\small
\begin{align*}
    \Pr\br*{\Pr\br*{A_t = a, \overline{E_{a}^{\mu}\br*{t}} \vert \F_{t-1}} > \frac{\epsilon}{3M \Delta_a}}
    \le \frac{\E_{\F_{t-1}}\brs*{\Pr\br*{A_t = a, \overline{E_{a}^{\mu}\br*{t}} \vert \F_{t-1}}}}{\epsilon/3M \Delta_a}
    = \frac{3M\Delta_a\E_{\F_{t-1}}\brs*{\Pr\br*{A_t = a, \overline{E_{a}^{\mu}\br*{t}} \vert \F_{t-1}}}}{\epsilon},
\end{align*}
}
where $\E_{\F_t}\brs*{\cdot}$ denotes the expectation over the randomness of the history.
By the tower rule:
{\small
\begin{align*}
    \sum_{t=1}^{\infty}\Pr\br*{\Pr\br*{A_t = a, \overline{E_{a}^{\mu}\br*{t}} \vert \F_{t-1}} > \frac{\epsilon}{3M \Delta_a}}
    \le \sum_{t=1}^{\infty}\frac{3M\Delta_a\E_{\F_{t-1}}\brs*{\Pr\br*{A_t = a, \overline{E_{a}^{\mu}\br*{t}} \vert \F_{t-1}}}}{\epsilon} 
    &= \frac{3M\Delta_a}{\epsilon}\sum_{t=1}^{\infty}
    \Pr \br*{A_t = a, \overline{E_{a}^{\mu}\br*{t}}}.
\end{align*}
}
We analyze $\sum_{t=1}^{\infty}{\Pr \br*{A_t = a, \overline{E_{a}^{\mu}\br*{t}}}}$. 
Note that $\overline{E_{a}^{\mu}\br*{t}} = \brc*{\hat{\mu}_t (a) > x_a}$, an event dependent only on $\hat{\mu}_t(a)$, which changes only when arm $a$ is pulled. This implies that
\begin{align*}
    \sum_{t=1}^{\infty}{\Pr \br*{A_t = a, \overline{E_{a}^{\mu}\br*{t}}}}
    = \E\brs*{\sum_{t=1}^{\infty}{\Ind{A_t = a}\Ind{\hat{\mu}_t (a) > x_a}}} 
    \overset{(*)}{\le} \E\brs*{\sum_{k=0}^{\infty}\Ind{\tilde{\mu}_{a,k} > x_a}} 
    \le 1 + \sum_{k=1}^{\infty}\Pr \br*{\tilde{\mu}_{a,k} > x_a},
\end{align*}
where $(*)$ is since $n_{\infty}(a) \le \infty$, and on the $(k+1)^{th}$ pull of arm $a$, the empirical mean is exactly $\frac{k}{k+1}\tilde{\mu}_{a,k} \le \tilde{\mu}_{a,k}$.   
Applying Hoeffding's inequality (\Cref{fact:hoeffding_inequality}) and \Cref{fact:inequality_for_infinite_sums} yields
\begin{align*}
    1 + \sum_{k=1}^{\infty}\Pr \br*{\tilde{\mu}_{a,k} > x_a} 
    \le 1 + \sum_{k=1}^{\infty}\exp\br*{-2k\br*{\Delta_a/3}^2}
    = 1 + \frac{e^{-2\Delta_a^2/9}}{1 - e^{-2\Delta_a^2/9}}
    \le 1 + \frac{9}{2\Delta_a^2}
    \le \frac{11}{2\Delta_a^2},
\end{align*}
where the last inequality is since $\Delta_a \le 1$. Combining, we get that
{\small
\begin{align*}
    \sum_{t=1}^{\infty}\Pr\br*{\Pr\br*{A_t = a, \overline{E_{a}^{\mu}\br*{t}} \vert \F_{t-1}} > \frac{\epsilon}{3M \Delta_a}} 
    \le \frac{3M\Delta_a}{\epsilon}\sum_{t=1}^{\infty}
    \Pr \br*{A_t = a, \overline{E_{a}^{\mu}\br*{t}}}
    \le \frac{3M\Delta_a}{\epsilon} \cdot \frac{11}{2\Delta_a^2}
    = \frac{33M}{2\epsilon\Delta_a}.
\end{align*}}
\end{proof}
To bound $S_2$, we show the following:
\begin{restatable}{lemma}{BoundForSecondSum}\label{lem:Bound_for_second_sum}
    It holds that:
    \begin{align*}
        \sum_{t=1}^{\infty}\Pr\br*{\Pr\br*{A_t = a, E_{a}^{\mu}\br*{t}, \overline{E_{a}^{\theta}\br*{t}} \vert \F_{t-1}} > \frac{\epsilon}{3M \Delta_a}}
        \le \frac{114 M}{\epsilon\Delta_a}\ln \br*{\frac{3M\Delta_a}{\epsilon}}.
    \end{align*}
\end{restatable}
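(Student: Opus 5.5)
The plan is to turn the inner conditional probability into a deterministic condition on the number of pulls $n_t(a)$, and then count, for each pull count, how many rounds can have a large probability of playing $a$. Set $c \triangleq \frac{\epsilon}{3M\Delta_a}$. Since $\Delta_a \ge \epsilon/M$, we have $1/c = \frac{3M\Delta_a}{\epsilon} \ge 3$.

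\textbf{Step 1 (reduce to a condition on $n_t(a)$).} The event $E_a^\mu(t)$ depends only on $\hat\mu_t(a)$, which is $\F_{t-1}$-measurable. Hence
\begin{align*}
\Pr\br*{A_t = a, E_{a}^{\mu}\br*{t}, \overline{E_{a}^{\theta}\br*{t}} \vert \F_{t-1}} \le \Ind{E_a^\mu(t)}\Pr\br*{\theta_t(a) > y_a \vert \F_{t-1}}.
\end{align*}
On $E_a^\mu(t)$ we have $y_a - \hat\mu_t(a) \ge y_a - x_a = \Delta_a/3$. Conditionally, $\theta_t(a)\sim\Ncal\br*{\hat\mu_t(a), \frac{1}{1+n_t(a)}}$, so the upper tail bound of \Cref{fact:Gaussian_normalized_tail_bound} gives
\begin{align*}
\Pr\br*{\theta_t(a) > y_a \vert \F_{t-1}} \le \frac12 \exp\br*{-\frac{(1+n_t(a))\Delta_a^2}{18}}.
\end{align*}
Therefore the event $\brc*{\Pr(\cdots\vert\F_{t-1}) > c}$ can only occur if $\frac12 e^{-(1+n_t(a))\Delta_a^2/18} > c$. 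This forces
\begin{align*}
n_t(a) < L \triangleq \frac{18}{\Delta_a^2}\ln\br*{\frac{3M\Delta_a}{2\epsilon}}.
\end{align*}
The same event also requires $\pi_t(a) = \Pr(A_t=a\vert\F_{t-1}) > c$. Consequently,
\begin{align*}
S_2 \le \E\brs*{\sum_{t=1}^\infty \Ind{n_t(a) < L,\ \pi_t(a) > c}}.
\end{align*}

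\textbf{Step 2 (counting rounds per pull count).} Fix $k\in\N\cup\brc*{0}$. Since $\pi_t(a) > c$ on the rounds being counted, we have $\Ind{\pi_t(a)>c} \le \pi_t(a)/c$, and so
\begin{align*}
\E\brs*{\sum_{t}\Ind{n_t(a)=k,\ \pi_t(a)>c}} \le \frac1c \E\brs*{\sum_t \pi_t(a)\Ind{n_t(a)=k}}.
\end{align*}
The indicator $\Ind{n_t(a)=k}$ is $\F_{t-1}$-measurable, so the tower rule together with monotone convergence yields
\begin{align*}
\E\brs*{\sum_t \pi_t(a)\Ind{n_t(a)=k}} = \E\brs*{\sum_t \Ind{A_t=a,\ n_t(a)=k}} \le 1,
\end{align*}
because arm $a$ is pulled at most once while its count equals $k$. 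Summing over the integers $k < L$ gives
\begin{align*}
S_2 \le \frac{\ceil{L}}{c} \le \frac{3M\Delta_a}{\epsilon}\br*{\frac{18}{\Delta_a^2}\ln\br*{\frac{3M\Delta_a}{2\epsilon}} + 1}.
\end{align*}

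\textbf{Step 3 (constants).} The main term is $\frac{54M}{\epsilon\Delta_a}\ln\br*{\frac{3M\Delta_a}{\epsilon}}$. The only delicate point is absorbing the additive $+1$, i.e.\ the term $\frac{3M\Delta_a}{\epsilon}$. We use $\Delta_a\le 1$, which gives $\frac{3M\Delta_a}{\epsilon} \le \frac{3M}{\epsilon\Delta_a}$, and $\ln\br*{\frac{3M\Delta_a}{\epsilon}} \ge \ln 3 \ge 1$. If $L$ is not positive, the sum over $k$ is empty and $S_2=0$ trivially. With these facts the bound fits comfortably within $\frac{114M}{\epsilon\Delta_a}\ln\br*{\frac{3M\Delta_a}{\epsilon}}$.

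I do not expect a real obstacle here. The argument requires only that $E_a^\mu(t)$ and $n_t(a)$ be $\F_{t-1}$-measurable, so that Step 1 is a deterministic reduction, and the per-count argument of Step 2 is what keeps the infinite-horizon sum finite.
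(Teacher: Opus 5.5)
Your proposal is correct and follows essentially the paper's argument. On $E_a^\mu(t)$, the Gaussian tail bound shows the inner conditional probability falls below the threshold once $n_t(a)$ exceeds $L \approx \frac{18}{\Delta_a^2}\ln\frac{3M\Delta_a}{\epsilon}$. A Markov step, together with the fact that $\{A_t=a,\ n_t(a)<L\}$ can occur at most $\lceil L\rceil$ times, then finishes the proof.

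The only difference is cosmetic. The paper splits into $Z_t^{<}$ and $Z_t^{\ge}$ and halves the threshold to $\frac{\epsilon}{6M\Delta_a}$. You instead use the $\F_{t-1}$-measurability of $n_t(a)$ to force $n_t(a)<L$ directly, which keeps the full threshold and gives the constant $57$ in place of $114$.
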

\begin{proof}
    Let $L_a \triangleq {\frac{18}{\Delta_a^2}\ln \br*{\frac{3M\Delta_a}{\epsilon}}}$, and define
    \begin{align*}
        &Z_t^{<} \triangleq \Pr\br*{A_t = a, n_t(a) < L_a, E_{a}^{\mu}\br*{t}, \overline{E_{a}^{\theta}\br*{t}} \vert \F_{t-1}},\quad 
        Z_t^{\ge} \triangleq \Pr\br*{A_t = a, n_t(a) \ge L_a, E_{a}^{\mu}\br*{t}, \overline{E_{a}^{\theta}\br*{t}} \vert \F_{t-1}},
    \end{align*}
    noting that $\Pr\br*{A_t = a, E_{a}^{\mu}\br*{t}, \overline{E_{a}^{\theta}\br*{t}} \vert \F_{t-1}} = Z_t^{<} + Z_t^{\ge}$ for all $t \in \N$.
    Since both $n_t(a)$ and $\hat{\mu}_t(a)$ are $\F_{t-1}$-measurable, we have that
    \begin{align*}
        Z_t^{\ge} 
        &= \Pr\br*{A_t = a, n_t(a) \ge L_a, E_{a}^{\mu}\br*{t}, \overline{E_{a}^{\theta}\br*{t}} \vert \F_{t-1}}\\
        &= \E\brs*{\Ind{A_t = a}\Ind{n_t(a) \ge L_a} \Ind{\hat{\mu}_t(a) \le x_a}\Ind{\theta_t(a) > y_a} \vert \F_{t-1}}\\
        &\le \Ind{n_t(a) \ge L_a} \Ind{\hat{\mu}_t(a) \le x_a}\E\brs*{\Ind{\theta_t(a) > y_a} \vert \F_{t-1}}.
    \end{align*}
    Furthermore, on the event $E_{a}^{\mu}\br*{t}$, the conditional posterior sample 
    \begin{align*}
        \theta_t(a) \vert \F_{t-1} \sim \Ncal \br*{\hat{\mu}_t(a), \frac{1}{1 + n_t(a)}}
    \end{align*}
    is stochastically dominated by $\Ncal\br*{x_a, \frac{1}{1+n_t(a)}}$, and so by \Cref{fact:Gaussian_normalized_tail_bound} we get
    \begin{align*}
        \Ind{n_t(a) \ge L_a} \Ind{\hat{\mu}_t(a) \le x_a}\E\brs*{\Ind{\theta_t(a) > y_a} \vert \F_{t-1}} 
        &\le \Ind{n_t(a) \ge L_a}\Pr\br*{\Ncal\br*{x_a, \frac{1}{1+n_t(a)}} > y_a \vert \F_{t-1}}\\
        &\le \Pr\br*{\Ncal\br*{0, 1} > \sqrt{L_a + 1}\br*{y_a - x_a}}\\
        &\le \frac{1}{2}\exp\br*{\frac{-L_a\Delta_a^2}{18}}\\
        &= \frac{\epsilon}{6M\Delta_a}.
    \end{align*}
    The above shows that $Z_t^{\ge} \le \frac{\epsilon}{6M\Delta_a}$ for all $t\in\N$, which implies that
    \begin{align*}
        \brc*{\Pr\br*{A_t = a, E_{a}^{\mu}\br*{t}, \overline{E_{a}^{\theta}\br*{t}} \vert \F_{t-1}} > \frac{\epsilon}{3M\Delta_a}} 
        = \brc*{Z_t^{<} + Z_t^{\ge} > \frac{\epsilon}{3M\Delta_a}} 
        \subseteq \brc*{Z_t^{<} > \frac{\epsilon}{6M\Delta_a}},
    \end{align*}
    and as such
    \begin{align*}
        \sum_{t=1}^{\infty}\Pr\br*{\Pr\br*{A_t = a, E_{a}^{\mu}\br*{t}, \overline{E_{a}^{\theta}\br*{t}} \vert \F_{t-1}} > \frac{\epsilon}{3M \Delta_a}}
        \le \sum_{t=1}^{\infty}\Pr\br*{Z_t^{<} > \frac{\epsilon}{6M\Delta_a}}.
    \end{align*}
    By Markov's inequality:
    \begin{align*}
        \sum_{t=1}^{\infty}\Pr\br*{Z_t^{<} > \frac{\epsilon}{6M\Delta_a}}
        &\le \frac{6M\Delta_a}{\epsilon}\sum_{t=1}^{\infty}\E\brs*{Z_t^{<}}\\
        &\le \frac{6M\Delta_a}{\epsilon}\E\brs*{\sum_{t=1}^{\infty}\E\brs*{\Ind{A_t = a}\Ind{n_t(a) < L_a} \Ind{\hat{\mu}_t(a) \le x_a}\Ind{\theta_t(a) > y_a} \vert \F_{t-1}}}\\
        &\le \frac{6M\Delta_a}{\epsilon}\E\brs*{\ceil*{L_a}}\\
        &= \frac{6M\Delta_a}{\epsilon}\ceil*{L_a},
    \end{align*}
    where the last inequality is since for any history, the indicators $\Ind{A_t = a}\Ind{n_t(a) < L_a}$ can happen at most $\ceil*{L_a}$ times since each time $\Ind{A_t = a}$ occurs, $n_t(a)$ increases by $1$.
    Combining and substituting in the definition of $L_a$ yields
    \begin{align*}
        \sum_{t=1}^{\infty}\Pr\br*{\Pr\br*{A_t = a, E_{a}^{\mu}\br*{t}, \overline{E_{a}^{\theta}\br*{t}} \vert \F_{t-1}} > \frac{\epsilon}{3M \Delta_a}}
        \le \frac{6M\Delta_a}{\epsilon}\ceil*{L_a} \le \frac{114 M}{\epsilon\Delta_a}\ln \br*{\frac{3M\Delta_a}{\epsilon}},
    \end{align*}
    where the last inequality is since $\ln\br*{3M\Delta_a/\epsilon} \ge 1$ for $\Delta_a \ge \epsilon/M$.
\end{proof}
Finally, we bound $S_3$:
\begin{restatable}{lemma}{BoundForThirdSum}\label{lem:Bound_for_third_sum}
    There exists a universal constant $C > 0$ such that
    \begin{align*}
        \sum_{t=1}^{\infty}\Pr\br*{\Pr\br*{A_t = a, E_{a}^{\mu}\br*{t}, E_{a}^{\theta}\br*{t} \vert \F_{t-1}} > \frac{\epsilon}{3M \Delta_a}}
        \le \frac{CM}{\epsilon\Delta_a}.
    \end{align*}
\end{restatable}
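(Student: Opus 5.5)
Following the sketch in the main text, I would begin by fixing an optimal arm $a^*\in\Acal^*$ and defining $p_{a,t}\triangleq\Pr\br*{\theta_t(a^*)>y_a\vert\F_{t-1}}$ and $q_{a,t}\triangleq(1-p_{a,t})/p_{a,t}$. The first step is the standard Agrawal--Goyal comparison inequality
\begin{align*}
\Pr\br*{A_t=a,E_a^\mu(t),E_a^\theta(t)\vert\F_{t-1}}\le q_{a,t}\Pr\br*{A_t=a^*,E_a^\mu(t),E_a^\theta(t)\vert\F_{t-1}}.
\end{align*}
To prove it, condition additionally on the samples $\theta_t(b)$ for $b\ne a^*$, which are conditionally independent of $\theta_t(a^*)$ given $\F_{t-1}$. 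Let $M_t\triangleq\max_{b\ne a^*}\theta_t(b)$. On $E_a^\theta(t)$, the event $\brc*{A_t=a}$ requires both $\theta_t(a)=M_t$ and $\theta_t(a^*)\le M_t$, and the latter is contained in $\brc*{\theta_t(a^*)\le y_a}$. On the other hand, if $M_t\le y_a$ and $\theta_t(a^*)>y_a$, then $A_t=a^*$. Since $E_a^\mu(t)$ is $\F_{t-1}$-measurable, it factors out of both sides. Integrating over the remaining samples gives the inequality, and arbitrary tie-breaking is harmless because ties have probability zero.

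Next I would use Markov's inequality at level $\epsilon/(3M\Delta_a)$, as in \Cref{lem:Bound_for_first_sum}. This gives
\begin{align*}
S_3\le\frac{3M\Delta_a}{\epsilon}\sum_{t}\E\brs*{q_{a,t}\Pr\br*{A_t=a^*\vert\F_{t-1}}}=\frac{3M\Delta_a}{\epsilon}\E\brs*{\sum_t q_{a,t}\Ind{A_t=a^*}},
\end{align*}
where the equality holds because $q_{a,t}$ is $\F_{t-1}$-measurable. The quantity $q_{a,t}$ depends only on the samples of $a^*$, so I would reindex by the pull count $k=n_t(a^*)$. Each value of $k$ is hit by at most one round with $A_t=a^*$, so the sum is bounded by $\sum_{k\ge0}\E\brs*{\tilde q_{a,k}}$, where $\tilde q_{a,k}$ is defined through the table-model quantity $\hat\mu^*_k=\frac{1}{k+1}\sum_{s\le k}\tilde X_{a^*,s}$. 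It then suffices to show $\sum_k\E\brs*{\tilde q_{a,k}}\le C'/\Delta_a^2$, which yields the claimed bound $CM/(\epsilon\Delta_a)$.

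To obtain this, I would define the good event $H_{a,k}\triangleq\brc*{\hat\mu^*_k\ge\mu^*-\Delta_a/6}$, restricted to $k$ large enough (of order $1/\Delta_a$) that the $1/(k+1)$ shrinkage bias is below $\Delta_a/12$. On $H_{a,k}$, the posterior mean exceeds $y_a$ by at least $\Delta_a/6$. \Cref{fact:Gaussian_normalized_tail_bound} then gives $1-\tilde p_{a,k}\le\frac12e^{-(k+1)\Delta_a^2/72}$ and $\tilde p_{a,k}\ge 1/2$, so $\tilde q_{a,k}\le e^{-k\Delta_a^2/72}$. Hoeffding's inequality gives $\Pr\br*{\overline{H_{a,k}}}\le e^{-c k\Delta_a^2}$. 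Summing these two geometric series yields $O(1/\Delta_a^2)$ via \Cref{fact:inequality_for_infinite_sums}. The first $O(1/\Delta_a)$ values of $k$ will be handled by the uniform moment bound described next.

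The main obstacle is showing that $\E\brs*{\tilde q_{a,k}^2}$ is bounded uniformly in $k$, so that Cauchy--Schwarz gives $\E\brs*{\tilde q_{a,k}\Ind{\overline{H_{a,k}}}}\le O(1)\cdot e^{-ck\Delta_a^2/2}$. By the Gaussian lower tail in \Cref{fact:Gaussian_normalized_tail_bound}, $1/\tilde p_{a,k}$ grows like $z\,e^{z^2/2}$ with $z=\sqrt{k+1}\,(y_a-\hat\mu^*_k)_+$. Hence $\E\brs*{\tilde q^2}\lesssim\E\brs*{(1+z^2)e^{z^2}}$. This must be controlled by the sub-Gaussian tail of $\hat\mu^*_k$, since $\Pr(z\ge u)\le e^{-2(u-\sqrt{k+1}\,\Delta')^2}$ for an appropriate gap $\Delta'\ge 0$. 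The Hoeffding exponent $2u^2$ dominates $u^2$, so the integral converges uniformly. I would make this rigorous by a layer-cake integration over $u$.

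The delicate points are twofold. First, the bias from the $1/(k+1)$ normalization shifts the posterior mean downward, but it contributes at most $O(1/\sqrt{k})$ to $z$, which is bounded. Second, $\tilde p_{a,0}$ corresponds to $\Ncal(0,1)>y_a$, which is a constant since $y_a\le1$. If the constant in the exponent turned out to be too tight, I would fall back on a direct computation of $\E\brs*{1/\tilde p}$, following Lemma 2.13 of \cite{AgrawalGoyalTS2017}, and use that in place of the second moment.
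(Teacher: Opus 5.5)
Your proposal is correct and follows the paper's proof essentially step for step. Both use the same $p_{a,t},q_{a,t}$ comparison inequality, then Markov's inequality with reindexing by $n_t(a^*)$, the same good event $H_{a,k}=\{\hat\mu_k^*\ge \mu^*-\Delta_a/6\}$, and Cauchy--Schwarz combined with a bound on $\E[\tilde q_{a,k}^2]$ that is uniform in $k$.

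The only variation is in how you obtain that second-moment bound. You use the Mills-ratio lower bound and a layer-cake integral against the Hoeffding tail. This works because $\Pr(z\ge u)\le e^{-2(u-1/\sqrt{k+1})^2}$ beats the $e^{u^2}$ growth of $1/\tilde p_{a,k}^2$. The paper routes the same Gaussian-versus-Hoeffding comparison through the tail of a geometric variable $G_k$. Similarly, you absorb the first $O(1/\Delta_a)$ values of $k$ into the uniform moment bound, whereas the paper uses the trivial $e^{1/6}$ factor.
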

\begin{proof}
    Fix an arbitrary optimal arm $a^* \in \Acal^*$, and define
    \begin{align*}
        p_{a,t} \triangleq \Pr\br*{\theta_{t}(a^*)>y_a\vert \F_{t-1}},
        \qquad
        q_{a,t}\triangleq\frac{1-p_{a,t}}{p_{a,t}}.
    \end{align*}
    \begin{lemma}\label{lem:TSN_prob_bound}
        For every round $t$, we have that
        \begin{align*}
            \Pr\br*{A_t = a, E_{a}^{\mu}\br*{t}, E_{a}^{\theta}\br*{t} \vert \F_{t-1}}
            \le q_{a,t}\Pr\br*{A_t = a^*, E_{a}^{\mu}\br*{t}, E_{a}^{\theta}\br*{t} \vert \F_{t-1}}.
        \end{align*}
    \end{lemma}
    \begin{proof}
        We begin by noting that $E_{a}^{\mu}\br*{t}$ is $\F_{t-1}$-measurable, and thus on any instantiation of $\F_{t-1}$ such that $E_{a}^{\mu}\br*{t}$ is false, the probability on the left-hand side is $0$ and thus the inequality holds trivially. We let $F_{t-1}$ be an instantiation of $\F_{t-1}$ on which $E_{a}^{\mu}\br*{t}$ is true, and define
        \begin{align*}
            B_{a,t} \triangleq \brc*{\theta_t(a) \le y_a} \cap \brc*{\theta_t(a) \ge \theta_t(b), \forall b\ne a^*}.
        \end{align*}        
        Note that $B_{a,t}$ is conditionally independent of $\theta_t(a^*)$ given $F_{t-1}$, and so by the definition of $E_{a}^{\theta}\br*{t}$:
        \begin{align*}
            \Pr\br*{A_t = a, E_{a}^{\mu}\br*{t}, E_{a}^{\theta}\br*{t} \vert \F_{t-1} = F_{t-1}}
            &\le \Pr\br*{\brc*{\theta_t(a) \le y_a}\cap\brc*{\forall b\in[K]: \theta_t(a) \ge \theta_t(b)} \vert \F_{t-1} = F_{t-1}}\\
            & = \Pr\br*{B_{a,t}, \theta_t(a^*) \le \theta_t(a) \vert \F_{t-1} = F_{t-1}}\\
            &\le \Pr\br*{B_{a,t}, \theta_t(a^*) \le y_a \vert \F_{t-1} = F_{t-1}}\\
            &\le (1-p_{a,t})\Pr\br*{B_{a,t}\vert \F_{t-1} = F_{t-1}}.
        \end{align*}
        Moreover, we have that
        \begin{align*}
            \Pr\br*{A_t = a^*, E_{a}^{\mu}\br*{t}, E_{a}^{\theta}\br*{t} \vert \F_{t-1} = F_{t-1}}
            &\geq \Pr\br*{\brc*{\theta_t(a) \le y_a}\cap\brc*{\forall b\in[K]: \theta_t(a^*) \ge \theta_t(b)} \vert \F_{t-1} = F_{t-1}}\\
            & \geq \Pr\br*{B_{a,t}, \theta_t(a^*) \geq \theta_t(a) \vert \F_{t-1} = F_{t-1}}\\
            &\ge \Pr\br*{B_{a,t}, \theta_t(a^*) > y_a \vert \F_{t-1} = F_{t-1}}\\
            &= p_{a,t}\Pr\br*{B_{a,t}\vert \F_{t-1} = F_{t-1}}.
        \end{align*}
        Rearranging and combining with the above yields
        {\small
        \begin{align*}
            \Pr\br*{A_t = a, E_{a}^{\mu}\br*{t}, E_{a}^{\theta}\br*{t} \vert \F_{t-1} = F_{t-1}}
            \le (1-p_{a,t})\Pr\br*{B_{a,t}\vert \F_{t-1} = F_{t-1}}
            \le \frac{1-p_{a,t}}{p_{a,t}}\Pr\br*{A_t = a^*, E_{a}^{\mu}\br*{t}, E_{a}^{\theta}\br*{t} \vert \F_{t-1} = F_{t-1}},
        \end{align*}}
        which shows that the inequality holds on histories where $E_{a}^{\mu}\br*{t}$ is true, concluding the proof.
    \end{proof}
    In order to bound the sum $S_3$, we bound $\E\brs*{q_{a,t}}$ and then combine with the above to get our final result. Denote:
    \begin{align*}
        \hat{\mu}_k^* \triangleq \frac{1}{k+1}\sum_{s=1}^{k}X_{a^*,s},
        \qquad \tilde{p}_{a,k} \triangleq \Pr\br*{\Ncal\br*{\hat{\mu}_k^*, \frac{1}{1+k}} > y_a \middle\vert X_{a^*,1}, \ldots, X_{a^*,k}},
        \qquad \tilde{q}_{a,k} \triangleq \frac{1 - \tilde{p}_{a,k}}{\tilde{p}_{a,k}}.
    \end{align*}
    We show the following:
    \begin{lemma}\label{lem:TSN_q}
        There exists an absolute constant $C_1 > 0$ such that
        \begin{align*}
            \sum_{k=0}^{\infty}\E\brs*{\tilde{q}_{a,k}} \le \frac{C_1}{\Delta_a^2}.
        \end{align*}
    \end{lemma}

    \begin{proof}
        Let $H_{a,k} \triangleq \brc*{\hat{\mu}_k^* \ge y_a + \frac{\Delta_a}{6}}$. We decompose $\tilde{q}_{a,k}$ as
        \begin{align*}
            \tilde{q}_{a,k} = \tilde{q}_{a,k}\Ind{H_{a,k}} + \tilde{q}_{a,k}\Ind{\overline{H_{a,k}}},
        \end{align*}
        and bound each part separately. We begin by looking at $\tilde{q}_{a,k}\Ind{H_{a,k}} = \frac{1 - \tilde{p}_{a,k}}{\tilde{p}_{a,k}}\Ind{H_{a,k}}$.
        Note that on $H_{a,k}$ we have $\hat{\mu}_k^* - y_a \ge \frac{\Delta_a}{6}$, which by \Cref{fact:Gaussian_normalized_tail_bound} implies that:
        \begin{align*}
            (1 - \tilde{p}_{a,k})\Ind{H_{a,k}} 
            &= \Pr\br*{\Ncal\br*{\hat{\mu}_k^*, \frac{1}{1+k}} \le y_a \middle\vert X_{a^*,1}, \ldots, X_{a^*,k}}\Ind{H_{a,k}}\\
            &= \Pr\br*{\Ncal(0,1) \le -\sqrt{k + 1}\br*{\hat{\mu}_k^* - y_a} \middle\vert X_{a^*,1}, \ldots, X_{a^*,k}}\Ind{H_{a,k}}\\
            &\le \Pr\br*{\Ncal(0,1) > \frac{\Delta_a\sqrt{k + 1}}{6} \middle\vert X_{a^*,1}, \ldots, X_{a^*,k}}\Ind{H_{a,k}}\\
            &\le \frac{1}{2}\exp\br*{-\frac{\Delta_a^2(k + 1)}{72}}.
        \end{align*}
        Let $x = \exp\br*{-\frac{\Delta_a^2(k + 1)}{72}}$. We showed that $(1 - \tilde{p}_{a,k})\Ind{H_{a,k}} \le \frac{x}{2}$,
        thus $\tilde{p}_{a,k}\Ind{H_{a,k}} \ge \br*{1 -\frac{x}{2}}\Ind{H_{a,k}}$, which gives us
        \begin{align*}
            \tilde{q}_{a,k}\Ind{H_{a,k}} = \frac{1 - \tilde{p}_{a,k}}{\tilde{p}_{a,k}}\Ind{H_{a,k}} \le \frac{\frac{x}{2}}{1 - \frac{x}{2}}\Ind{H_{a,k}}.
        \end{align*}
        We use the fact that $\frac{\frac{1}{2}x}{1 - \frac{1}{2}x} \le x$ for all $x \in \brs*{0, 1}$ to get
        \begin{align*}
            \tilde{q}_{a,k}\Ind{H_{a,k}} \le \exp\br*{-\frac{\Delta_a^2(k + 1)}{72}}\Ind{H_{a,k}},
        \end{align*}
        bounding $\tilde{q}_{a,k}$ under the event $H_{a,k}$. We now analyze the complementary summand. We show that for all $k$:
        \begin{align*}
            \Pr\br*{\overline{H_{a,k}}}\le 2\exp\br*{-\frac{k\Delta_a^2}{72}}.
        \end{align*}
        For $k = 0$, this bound is trivial. Hence assume $k \ge 1$. Using the definition of $\hat{\mu}_k^*$ and the fact that $\mu^* \le 1$, we have that
        {\small
        \begin{align*}
            \overline{H_{a,k}}
            &= \brc*{\hat{\mu}_k^* < \mu^* - \frac{\Delta_a}{6}}
            \subseteq \brc*{\frac{k}{k+1}\tilde{\mu}_k^* < \mu^* - \frac{\Delta_a}{6}}
            \subseteq \brc*{\tilde{\mu}_k^* < \mu^* + \frac{\mu^*}{k} - \frac{k+1}{k}\cdot \frac{\Delta_a}{6}}
            \overset{\mu^* \le 1}{\subseteq} \brc*{\tilde{\mu}_k^* < \mu^* - \frac{\Delta_a}{6} + \frac{1}{k}}
        \end{align*}}
        We look at two cases:
        
        For $k \ge \frac{12}{\Delta_a}$, we have that $\brc*{\tilde{\mu}_k^* < \mu^* - \frac{\Delta_a}{6} + \frac{1}{k}}
            \subseteq \brc*{\tilde{\mu}_k^* < \mu^* - \frac{\Delta_a}{12}}$, which by Hoeffding's inequality (\Cref{fact:hoeffding_inequality}) implies that
        \begin{align*}
            \Pr\br*{\overline{H_{a,k}}} 
            \le \Pr\br*{\tilde{\mu}_k^* < \mu^* - \frac{\Delta_a}{12}}
            \le \exp\br*{-\frac{k\Delta_a^2}{72}},
        \end{align*}
        Thus satisfying the required inequality.
        
        For $k < \frac{12}{\Delta_a}$, we have that $\frac{k\Delta_a^2}{72} <\frac{1}{6}$, thus yielding
        \begin{align*}
            \Pr\br*{\overline{H_{a,k}}} 
            \le 1 
            = e^{\frac{1}{6}}e^{-\frac{1}{6}} 
            \le e^{\frac{1}{6}}\exp\br*{-\frac{k\Delta_a^2}{72}},
        \end{align*}
        thus we have that $\Pr\br*{\overline{H_{a,k}}} \le 2 \exp\br*{-\frac{k\Delta_a^2}{72}}$ for all $k$. By Cauchy-Schwarz:
        \begin{align}\label{eq:TS_H_bad_event_CS}
            \E\brs*{\tilde{q}_{a,k}\Ind{\overline{H_{a,k}}}} 
            \le \sqrt{\E\brs*{\tilde{q}_{a,k}^2} \Pr\br*{\overline{H_{a,k}}}}
            \le \sqrt{2\E\brs*{\tilde{q}_{a,k}^2}}\exp\br*{-\frac{k\Delta_a^2}{144}}.
        \end{align}
        We now show that ${\E\brs*{\tilde{q}_{a,k}^2}}$ is bounded by an absolute constant for all $k$.
         Let $Y^k_i \vert \hat{\mu}_k^* \sim \Ncal\br*{\hat{\mu}_k^*, \frac{1}{1 + k}}$ be i.i.d. samples from the posterior of the optimal arm after $k$ draws. 
         Define $G_k  \vert \hat{\mu}_k^* \triangleq \inf\brc*{i \ge 1 : Y^k_i > y_a}$ to be the number of samples needed from $\Ncal\br*{\hat{\mu}_k^*, \frac{1}{1 + k}}$ to obtain a sample larger than $y_a$, noting that $G_k  \vert \hat{\mu}_k^* \sim Geo(\tilde{p}_{a,k})$. We show that there exists a constant $C_G$ such that
        \begin{align*}
            \sup_{k \ge 0} \E\brs*{G_k^2} \le C_G,
        \end{align*}
        since by the formula for the second moment of a geometric RV:
    \begin{align*}
        \E\brs*{G_k^2 \vert \hat{\mu}_k^*} = \frac{2 - \tilde{p}_{a,k}}{\tilde{p}_{a,k}^2} \overset{\tilde{p}_{a,k} \le 1}{\ge} \frac{1}{\tilde{p}_{a,k}^2},
    \end{align*}
    which implies that
    \begin{align*}
        \sup_{k\ge0}\E\brs*{\tilde{q}_{a,k}^2} 
        = \sup_{k\ge0}\E\brs*{{\frac{(1 - \tilde{p}_{a,k})^2}{\tilde{p}_{a,k}^2}}}
        \le \sup_{k\ge0}\E\brs*{{\frac{1}{\tilde{p}_{a,k}^2}}}
        \le \sup_{k\ge0}\E\brs*{\E\brs*{G_k^2 \middle\vert X_{a^*,1}, \ldots, X_{a^*,k}}}
        \le \sup_{k\ge0}\E\brs*{G_k^2}
        \le C_G.
    \end{align*}
    For $k = 0$, the result follows since $1 \ge \mu^* \ge y_a$, which implies that $\tilde{p}_{a,0} \ge \Pr\br*{\Ncal(0,1) > 1} > 0.1$, and so $\E\brs*{G_0^2} = \frac{2 - \tilde{p}_{a,0}}{\tilde{p}_{a,0}^2} < 200$ by the formula for the second moment of a geometric random variable. Let $k \ge 1$, let $r \ge 2$ and set $z_r \triangleq \sqrt{\frac{3}{2}\ln(r)}$. Since $y_a \le \mu^*$, we have that
    \begin{align*}
        \brc*{G_k \le r} \supseteq \brc*{\max_{j\in[r]}Y^k_j > \hat{\mu}_k^* + \frac{z_r}{\sqrt{k+1}}} \cap \brc*{{\hat{\mu}_k^* + \frac{z_r}{\sqrt{k+1}} \ge \mu^*}}.
    \end{align*}
    Taking the complement and applying the union bound, we get
        \begin{align*}
            \Pr\br*{G_k > r} 
            &\le \Pr\br*{\brc*{\max_{j\in[r]}Y^k_j \le \hat{\mu}_k^* + \frac{z_r}{\sqrt{k+1}}} \cup \brc*{\hat{\mu}_k^* + \frac{z_r}{\sqrt{k+1}} < \mu^*}}\\
            &\le \underbrace{\Pr\br*{\max_{j\in[r]}Y^k_j \le \hat{\mu}_k^* + \frac{z_r}{\sqrt{k+1}}}}_{(A)}
            + \underbrace{\Pr\br*{\hat{\mu}_k^* + \frac{z_r}{\sqrt{k+1}} < \mu^*}}_{(B)}.
        \end{align*}
        To bound $(A)$, we note that the sequence of samples $Y_i^k$ is i.i.d, giving us
        \begin{align*}
            \Pr\br*{\max_{j\in[r]}Y^k_j \le \hat{\mu}_k^* + \frac{z_r}{\sqrt{k+1}}} 
            = \br*{1 - \Pr\br*{\Ncal(0,1) > z_r}}^r
            \le \exp\br*{-r\Pr\br*{\Ncal(0,1) > z_r}}.
        \end{align*}
        Applying the lower bound on the tail of a Gaussian RV (\Cref{fact:Gaussian_normalized_tail_bound}) yields
        \begin{align*}
            \exp\br*{-r\Pr\br*{\Ncal(0,1) > z_r}}
            &\le \exp\br*{-r\frac{1}{\sqrt{2\pi}}\frac{z_r}{1 + z_r^2}e^{-z_r^2/2}}.
        \end{align*}
        Substituting back the definition of $z_r$
        \begin{align*}
            \exp\br*{-r\frac{1}{\sqrt{2\pi}}\frac{z_r}{1 + z_r^2}e^{-z_r^2/2}}
            = \exp\br*{-r\frac{1}{\sqrt{2\pi}}\frac{z_r}{1 + z_r^2}e^{-\br*{\sqrt{\frac{3}{2}\ln(r)}}^2/2}}
            \le \exp\br*{-r^{1/4}\frac{1}{\sqrt{2\pi}}\frac{z_r}{1 + z_r^2}}.
        \end{align*}
        Note that for $r \ge 2$, we have $z_r \ge 1$ and thus $\frac{z_r}{1 + z_r^2} \ge \frac{1}{2z_r}$, which gives us
        \begin{align*}
            \exp\br*{-r^{1/4}\frac{1}{\sqrt{2\pi}}\frac{z_r}{1 + z_r^2}}
            &\le  \exp\br*{-r^{1/4}\frac{1}{2\sqrt{2\pi}z_r}}
            \le  \exp\br*{-\frac{r^{1/4}}{2\sqrt{3\pi\ln(r)}}}.
        \end{align*}
        Note that $\ln(r)^{3/2} = o\br*{r^{1/4}}$, implying that there exists a constant $r_0 > 0$ such that for all $r \ge r_0$, we have that
        \begin{align*}
            \frac{r^{1/4}}{2\sqrt{3\pi\ln(r)}} \ge 3\ln(r).
        \end{align*}
        Thus, for all $r \ge \max\brc*{2,r_0}$:
        \begin{align*}
            \Pr\br*{\max_{j\in[r]}Y^k_j \le \hat{\mu}_k^* + \frac{z_r}{\sqrt{k+1}}}
            \le \exp\br*{-\frac{r^{1/4}}{2\sqrt{3\pi\ln(r)}}}
            \le \exp\br*{-3\ln(r)}
            \le \frac{1}{r^3}.
        \end{align*}
        To bound $(B)$, we let $\tilde{\mu}_k^* \triangleq \tilde{\mu}_{a^*, k}$, and note that $\hat{\mu}_k^* = \frac{k}{k+1}\tilde{\mu}_k^*$, and thus:
        \begin{align*}
        \brc*{\hat{\mu}_k^* + \frac{z_r}{\sqrt{k+1}} < \mu^*}
        \subseteq \brc*{\frac{k}{k+1}\tilde{\mu}_k^* < \mu^* - \frac{z_r}{\sqrt{k+1}}}
        \subseteq \brc*{\tilde{\mu}_k^* < \frac{k+1}{k}\mu^* - \frac{k+1}{k}\frac{z_r}{\sqrt{k+1}}}
        \subseteq \brc*{\tilde{\mu}_k^* < \mu^* - \frac{z_r - 1}{\sqrt{k}}}.
        \end{align*}
        Therefore by Hoeffding (\Cref{fact:hoeffding_inequality}):
        \begin{align*}
            \Pr\br*{\hat{\mu}_k^* + \frac{z_r}{\sqrt{k+1}} < \mu^*}
            \le \Pr\br*{\tilde{\mu}_k^* < \mu^* - \frac{z_r - 1}{\sqrt{k}}}
            \le \exp\br*{-2\br*{z_r - 1}^2}
            = \exp\br*{-2z_r^2 +4z_r -2}.
        \end{align*}
        Opening the exponent and substituting in the definition of $z_r$ yields
        \begin{align*}
            \exp\br*{-2\br*{z_r - 1}^2}
            &= \exp\br*{-2z_r^2 +4z_r -2}
            = \exp\br*{-2{{\frac{3}{2}\ln(r)}} + 4\sqrt{\frac{3}{2}\ln(r)} -2}
            = \frac{1}{r^3}\exp\br*{4\sqrt{\frac{3}{2}\ln(r)} -2}.
        \end{align*}
        Since $4\sqrt{\frac{3}{2}\ln(r)} - 2 = o\br*{\ln(r)}$, there exists $r_1 > 0$ such that for all $r > r_1$, we have that
        \begin{align*}
            4\sqrt{\frac{3}{2}\ln(r)} - 2 < \frac{1}{2}\ln(r).
        \end{align*}
        Thus, for all $r > r_1$:
        \begin{align*}
            \Pr\br*{\hat{\mu}_k^* + \frac{z_r}{\sqrt{k+1}} < \mu^*} 
            \le \frac{1}{r^3}\exp\br*{4\sqrt{\frac{3}{2}\ln(r)} -2}
            \le \frac{1}{r^{5/2}}.
        \end{align*}
        Combining, we get that for all $k$, and for all $r \ge R \triangleq \max\brc*{2, r_0, r_1}$:
        \begin{align}\label{eq:Bound_for_geometric_TS}
            \Pr\br*{G_k > r} \le \frac{1}{r^3} + \frac{1}{r^{5/2}} \le \frac{2}{r^{5/2}}.
        \end{align}
        Recalling that any $n \in \N$ can be written as $n^2 = \sum_{r=0}^{n-1}(2r+1)$, we have that for any RV $X$ with support in $\N$:      
       \begin{align*}
            \E\brs*{X^2} 
            = \sum_{n=1}^{\infty}n^2 \Pr\br*{X = n}
            &= \sum_{n=1}^{\infty}\Pr\br*{X = n}\sum_{r=0}^{n-1}(2r+1) \\
            &= \sum_{n=1}^{\infty}\Pr\br*{X = n}\sum_{r=0}^{\infty}(2r+1)\Ind{n > r} 
            =\E\brs*{\sum_{r=0}^{\infty}(2r+1)\Ind{X > r}}, 
        \end{align*}
        giving us that
        \begin{align*}
            \E\brs*{G_k^2} 
            = \E\brs*{\sum_{r=0}^{\infty}(2r+1)\Ind{G_k > r}}.
        \end{align*}
        Splitting the sum into $r \le R$ and $r > R$ and using \Cref{eq:Bound_for_geometric_TS} gives:
        \begin{align*}
            \E\brs*{\sum_{r=0}^{\infty}(2r+1)\Ind{G_k > r}}
            &= \E\brs*{\sum_{r=0}^{R}(2r+1)\Ind{G_k > r} + \sum_{r=R+1}^{\infty}(2r+1)\Ind{G_k > r}}\\
            &\le \sum_{r=0}^{R}(2r+1) + \E\brs*{\sum_{r=R+1}^{\infty}(2r+1)\Ind{G_k > r}}\\
            &\le (R + 1)^2 + \sum_{r=R+1}^{\infty}\frac{4r+2}{r^{5/2}}.
        \end{align*}
        Since $\sum_{r=R+1}^{\infty}\frac{4r+2}{r^{5/2}}$ is a convergent sum, we can let $C_G \triangleq \max\brc*{200, (R + 1)^2 + \sum_{r=R+1}^{\infty}\frac{4r+2}{r^{5/2}}} < \infty$ be a universal constant, since $R$ is a universal constant, and so by the above we have that for all $k\ge 0$:
        \begin{align*}
            \E\brs*{G_k^2} \le C_G.
        \end{align*}
        We recall that this implies that $\sup_{k\ge0}\E\brs*{\tilde{q}_{a,k}^2} \le C_G$, which by \Cref{eq:TS_H_bad_event_CS} implies that 
        \begin{align*}
            \E\brs*{\tilde{q}_{a,k}\Ind{\overline{H_{a,k}}}} 
            \le \sqrt{2\E\brs*{\tilde{q}_{a,k}^2}}\exp\br*{-\frac{k\Delta_a^2}{144}}
            \le \sqrt{2C_G}\exp\br*{-\frac{k\Delta_a^2}{144}}.
        \end{align*}
        We already showed that $\E\brs*{\tilde{q}_{a,k}\Ind{H_{a,k}}} \le \exp\br*{-\frac{\Delta_a^2(k + 1)}{72}}$, thus yielding
        \begin{align*}
            \sum_{k=0}^{\infty}\E\brs*{\tilde{q}_{a,k}}
            =\sum_{k=0}^{\infty}\br*{\E\brs*{\tilde{q}_{a,k}\Ind{H_{a,k}}} + \E\brs*{\tilde{q}_{a,k}\Ind{\overline{H_{a,k}}}}}
            \le \sum_{k=0}^{\infty}\br*{\exp\br*{-\frac{\Delta_a^2(k + 1)}{72}} + \sqrt{2C_G}\exp\br*{-\frac{k\Delta_a^2}{144}}}.
        \end{align*}
        Noting that $\exp\br*{-\frac{\Delta_a^2(k + 1)}{72}} \le \sqrt{2C_G}\exp\br*{-\frac{k\Delta_a^2}{144}}$ for all $k\ge0$ and applying the formula for the sum of an infinite geometric series, we get that
        \begin{align*}
            \sum_{k=0}^{\infty}\br*{\exp\br*{-\frac{\Delta_a^2(k + 1)}{72}} + \sqrt{2C_G}\exp\br*{-\frac{k\Delta_a^2}{144}}}
            \le 2\sqrt{2C_G}\sum_{k=0}^{\infty}\exp\br*{-\frac{k\Delta_a^2}{144}}
            \le \sqrt{8C_G}\frac{1}{1 - e^{-\frac{\Delta_a^2}{144}}}.
        \end{align*}
        Finally, using \Cref{fact:inequality_for_infinite_sums} and taking $C_1 = 145\sqrt{8C_G}$, we get that
        \begin{align*}
            \sum_{k=0}^{\infty}\E\brs*{\tilde{q}_{a,k}} 
            \le \sqrt{8C_G}\frac{1}{1 - e^{-\frac{\Delta_a^2}{144}}}
            \le \sqrt{8C_G}\br*{1 + \frac{144}{\Delta_a^2}}
            \le \frac{C_1}{\Delta_a^2},
        \end{align*}
        where the last inequality is since $\Delta_a^2 \le 1$.
    \end{proof}
    Combining the above, we get
    \begin{align*}
        &\sum_{t=1}^{\infty}\Pr\br*{\Pr\br*{A_t = a, E_{a}^{\mu}\br*{t}, E_{a}^{\theta}\br*{t} \vert \F_{t-1}} > \frac{\epsilon}{3M \Delta_a}}\\
        &\le \frac{3M \Delta_a}{\epsilon}\sum_{t=1}^{\infty}\Pr\br*{{A_t = a, E_{a}^{\mu}\br*{t}, E_{a}^{\theta}\br*{t}}}
        \tag{Markov's inequality}\\
        &\le \frac{3M \Delta_a}{\epsilon}\E\brs*{\sum_{t=1}^{\infty}q_{a,t}\Ind{A_t = a^*}\Ind{E_{a}^{\mu}\br*{t}}\Ind{E_{a}^{\theta}\br*{t}}}
        \tag{\cref{lem:TSN_prob_bound}}\\
        &\le \frac{3M \Delta_a}{\epsilon}\E\brs*{\sum_{t=1}^{\infty}q_{a,t}\Ind{A_t = a^*}}.
    \end{align*}
    Note that in round $t$ where arm $a^*$ is pulled for the $(k+1)^{th}$ time, we have $q_{a,t} = \tilde{q}_{a,k}$. Using this, we get that
    \begin{align*}
        \frac{3M \Delta_a}{\epsilon}\E\brs*{\sum_{t=1}^{\infty}q_{a,t}\Ind{A_t = a^*}}
        \le \frac{3M \Delta_a}{\epsilon}\E\brs*{\sum_{k=0}^{\infty}\tilde{q}_{a,k}}
        = \frac{3M \Delta_a}{\epsilon}\sum_{k=0}^{\infty}\E\brs*{\tilde{q}_{a,k}}
        \le \frac{3M \Delta_a}{\epsilon}\frac{C_1}{\Delta_a^2},
    \end{align*}
    where the last inequality is by \cref{lem:TSN_q}. Taking $C = 3C_1$ yields the desired inequality.
\end{proof}
We combine Lemmas \ref{lem:Bound_for_first_sum}, \ref{lem:Bound_for_second_sum} and \ref{lem:Bound_for_third_sum} to get
{\small
\begin{align*}
    &\sum_{t=1}^{\infty}\Pr \br*{\pi_t(a) > \frac{\epsilon}{M \Delta_a}}\\
    &\le {\sum_{t=1}^{\infty}\Pr\br*{\Pr\br*{A_t = a, E_{a}^{\mu}\br*{t}, E_{a}^{\theta}\br*{t} \vert \F_{t-1}} > \frac{\epsilon}{3M \Delta_a}}}
    + {\sum_{t=1}^{\infty}\Pr\br*{\Pr\br*{A_t = a, E_{a}^{\mu}\br*{t}, \overline{E_{a}^{\theta}\br*{t}} \vert \F_{t-1}} > \frac{\epsilon}{3M \Delta_a}}} \\
    &+ {\sum_{t=1}^{\infty}\Pr\br*{\Pr\br*{A_t = a, \overline{E_{a}^{\mu}\br*{t}} \vert \F_{t-1}} > \frac{\epsilon}{3M \Delta_a}}}\\
    &\le \frac{CM}{\epsilon\Delta_a} 
    + \frac{114 M}{\epsilon\Delta_a}\ln \br*{\frac{3M\Delta_a}{\epsilon}} 
    + \frac{33M}{2\epsilon\Delta_a}.
\end{align*}}
The above is true for all arms with $\Delta_a \ge \frac{\epsilon}{M}$, which in turn implies that
\begin{align*}
    \E\brs*{\sum_{t=1}^{\infty}{\Ind{\Delta_{\pi_t}>\epsilon}}}
    &= {\sum_{t=1}^{\infty}{\Pr \br*{\Delta_{\pi_t}>\epsilon}}}\\
    &\le \sum_{\Delta_a \ge \frac{\epsilon}{M}}{\sum_{t=1}^{\infty}\Pr \br*{\pi_t(a) > \frac{\epsilon}{M \Delta_a}}}\\
    &\le \sum_{\Delta_a \ge \frac{\epsilon}{M}}\br*{\frac{CM}{\epsilon\Delta_a} 
    + \frac{114 M}{\epsilon\Delta_a}\ln \br*{\frac{3M\Delta_a}{\epsilon}} 
    + \frac{33M}{2\epsilon\Delta_a}}.
\end{align*}
Defining $C_2 \triangleq 3\max\brc*{C, 114}$, and noting that for $\Delta_a \ge \frac{\epsilon}{M}$ we have that $\ln \br*{\frac{3M\Delta_a}{\epsilon}} \ge 1$, we get 
\begin{align*}
    \sum_{\Delta_a \ge \frac{\epsilon}{M}}\br*{\frac{CM}{\epsilon\Delta_a} 
    + \frac{114 M}{\epsilon\Delta_a}\ln \br*{\frac{3M\Delta_a}{\epsilon}} 
    + \frac{33M}{2\epsilon\Delta_a}}
    \le \sum_{\Delta_a \ge \frac{\epsilon}{M}}\frac{C_2M}{\epsilon\Delta_a}\ln \br*{\frac{3M\Delta_a}{\epsilon}}.
\end{align*}
Noting that $M \le K$, we get our desired instance-dependent upper bound. 
\end{proof}

\TSInstanceIndependent*
\begin{proof}
    Let $b > 0$, and consider the function $f(x) = \frac{1}{x} \ln \br*{bx}$ on the interval $(0, \infty)$. It holds that:
    \begin{align*}
        f'(x) = -\frac{\ln \br*{bx}}{x^2}+\frac{1}{x^2} = \frac{1}{x^2} \br*{1 - \ln \br*{bx}},
    \end{align*}
    Setting the derivative to $0$ yields $x^* = \frac{e}{b}$, and noting that $f''$ is negative at $x^*$ implies that $f$ is maximized at that point. 
    Applying this to our instance-dependent bound from \Cref{thm:thompson_sampling_is_U-ACE} with $b = \frac{3K}{\epsilon}$, we get
\begin{align*}
    \E\brs*{\sum_{t=1}^{\infty}{\Ind{\Delta_{\pi_t}>\epsilon}}}
    &\le \frac{CK}{\epsilon}\sum_{\Delta_a \ge \frac{\epsilon}{K}}\frac{1}{\Delta_a}\ln \br*{\frac{3K\Delta_a}{\epsilon}}\\
    &\le \frac{CK}{\epsilon}\sum_{\Delta_a \ge \frac{\epsilon}{K}}\frac{1}{\frac{\epsilon e}{3K}}\ln \br*{\frac{3K\frac{e\epsilon}{3K}}{\epsilon}}\\
    &\le \frac{3CK^3}{e\epsilon^2},
\end{align*}
Giving us the desired instance-independent bound.
\end{proof}

\subsection{Adding an optimal arm can increase the finite-horizon expected sample complexity}
\label{sec:optimal_arms_TS}
We use simulations to demonstrate that adding an additional optimal arm can strictly increase the $T$-expected sample complexity $\E\brs*{\sum_{t=1}^{T}\Ind{\Delta_{\pi_t} > \epsilon}}$.
This illustrates why, in contrast to standard regret analyses of Thompson sampling, 
we cannot assume that the quantity we analyze improves monotonically with the number of optimal arms.

Consider the two-armed Bernoulli instance
\begin{align*}
    \nu = \br*{Ber(1), Ber(0)},
\end{align*}
and the augmented three-armed instance
\begin{align*}
    \nu' = \br*{Ber(1), Ber(0), Ber(1)}.
\end{align*}
Thus, arm $1$ is the unique optimal arm in $\nu$, while arms $1$ and $3$
are optimal in $\nu'$. In both instances, arm $2$ is the only suboptimal
arm and satisfies $\Delta_2 = 1$.

Consider the policy $\pi^{TS\br*{\Ncal}}$ as defined in
\Cref{thm:thompson_sampling_is_U-ACE}. Since the rewards of all arms are
deterministic in both instances, the state of the algorithm at the beginning
of a round is completely determined by the number of times each arm has
been pulled. In particular, if an optimal arm $a$ has been pulled $n$ times,
then $S_t(a)=n$, while for the suboptimal arm $2$ we have $S_t(2)=0$.
Therefore, after $n$ pulls of arm $a$, its Gaussian sampling distribution is
\begin{align*}
    \theta(a)
    \sim
    \begin{cases}
        \Ncal\br*{\dfrac{n}{n+1},\dfrac{1}{n+1}},
        & a \text{ is optimal},\\[5mm]
        \Ncal\br*{0,\dfrac{1}{n+1}},
        & a=2.
    \end{cases}
\end{align*}

Since arm $2$ is the only suboptimal arm in both instances and
$\Delta_2=1$, the policy gap is simply the conditional probability of
selecting arm $2$:
\begin{align*}
    \Delta_{\pi_t}
    =
    \pi_t(2)
    =
    \Pr\br*{
        \theta_t(2) > \max_{a\neq 2}\theta_t(a)
        \,\middle|\,
        \F_{t-1}
    }.
\end{align*}

For the two-armed instance $\nu$, if the pull counts at the beginning of
round $t$ are $n_1,n_2$, the conditional probability of selecting the
suboptimal arm admits the closed form
\begin{align*}
    \pi_t(2)
    = \Phi\br*{
        -\frac{\frac{n_1}{n_1+1}}
        {\sqrt{\frac{1}{n_1+1}+\frac{1}{n_2+1}}}
    }.
\end{align*}
For the three-armed instance $\nu'$, if the pull counts are
$n_1,n_2,n_3$, the analogous probability is
\begin{align*}
    \pi_t(2)
    =
    \int_{-\infty}^{\infty}
        \phi(z)
        \Phi\br*{
            \sqrt{n_1+1}
            \br*{
                \frac{z}{\sqrt{n_2+1}}
                -\frac{n_1}{n_1+1}
            }
        }
        \Phi\br*{
            \sqrt{n_3+1}
            \br*{
                \frac{z}{\sqrt{n_2+1}}
                -\frac{n_3}{n_3+1}
            }
        }dz,
\end{align*}
where $\phi$ and $\Phi$ denote the standard Gaussian PDF and CDF,
respectively.
We simulated $100,000$ interactions with horizon $T = 100$ for Thompson sampling over the two instances. For the instance $\nu$, we calculated the policy gap via the closed formula, 
and for $\nu'$ we used high-order Gauss-Hermite quadrature to estimate the above integral. The results of this experiment are presented in \Cref{fig:TS_graph}.
\begin{figure}[ht]
    \centering
    \includegraphics[width=0.8\textwidth]{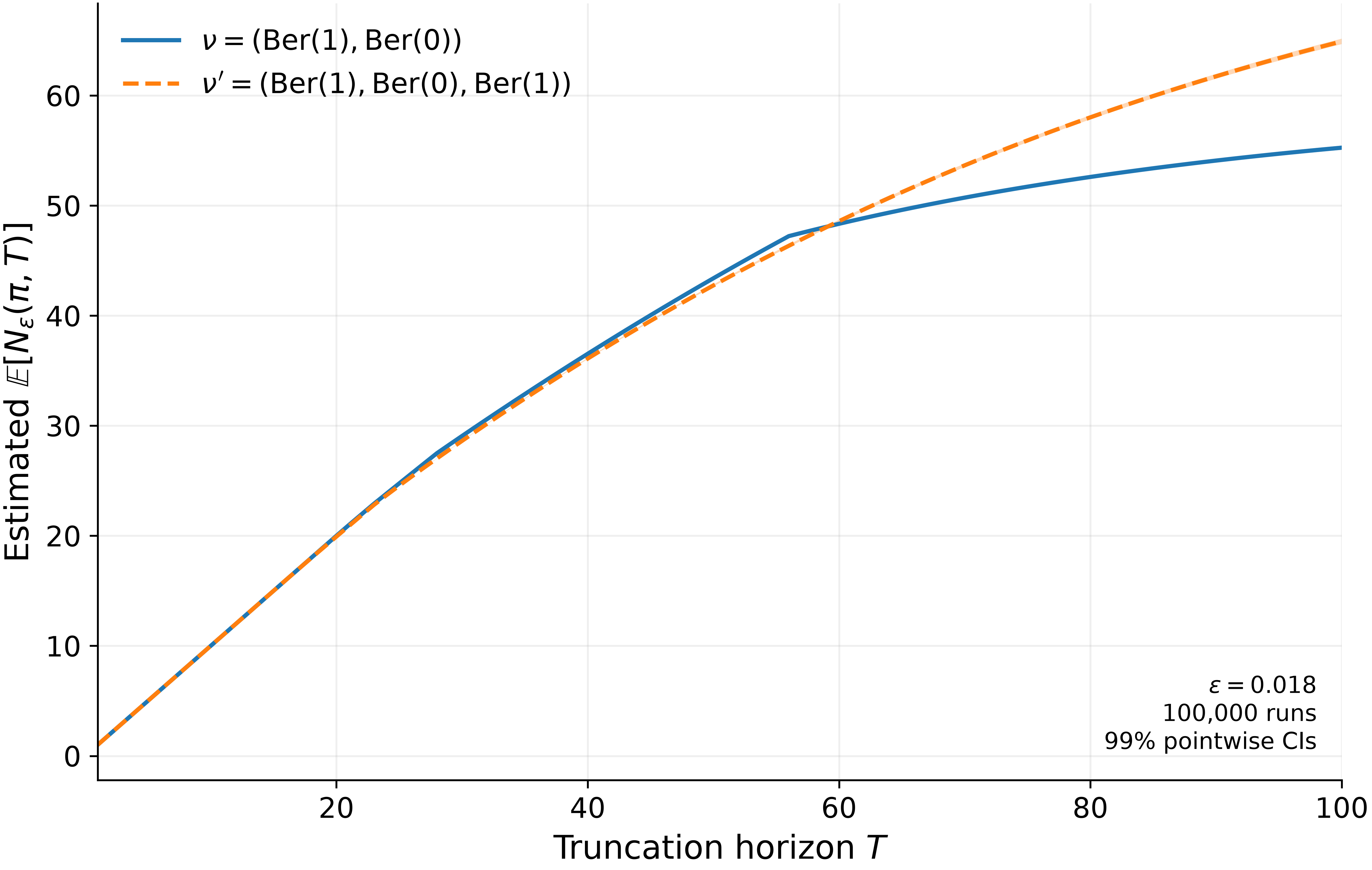}
    \caption{
    The X-axis is number of rounds, the Y-axis is the estimated expected sample complexity with $\epsilon = 0.018$ up to round $t$, using $100,000$ simulations of the interaction of $\pi^{TS(\Ncal)}$ with the environment. The 99\% CI for the expected sample complexity at round $100$ is $[55.076538, 55.452762]$ for $\E_{\nu}\brs*{N_{\epsilon}\br*{\pi^{TS(\Ncal)}, 100}}$, and $[64.676313, 65.170247]$ for $\E_{\nu'}\brs*{N_{\epsilon}\br*{\pi^{TS(\Ncal)}, 100}}$
    }
    \label{fig:TS_graph}
\end{figure}
We have empirically demonstrated that the addition of more optimal arms strictly increases the expected sample complexity after enough rounds.

Our intuition for the example is that although the additional optimal arm directly competes with the suboptimal arm, 
it also splits the observations between the two optimal arms. Consequently, the individual posteriors of the optimal arms concentrate more slowly, 
and this loss of concentration can outweigh the benefit of the additional optimal arm.

This example seeks to demonstrate that a point-wise monotonicity reduction argument, similar to that used in previous works that analyze TS \citep{AgrawalGoyalTS2017,LenientRegretMerlis2021}, is unavailable in our analysis, but nevertheless we overcame this hindrance in \Cref{lem:Bound_for_third_sum} using a careful choice of event that is agnostic to optimal arms, both in their number and in the arbitrary choice of optimal representative for the analysis.
\end{document}